\documentclass{article}
\usepackage{fars_preprint}

\usepackage{amsmath,amsfonts,bm}

\def\eqref#1{equation~\ref{#1}}

\def\1{\bm{1}}

\DeclareMathAlphabet{\mathsfit}{\encodingdefault}{\sfdefault}{m}{sl}
\SetMathAlphabet{\mathsfit}{bold}{\encodingdefault}{\sfdefault}{bx}{n}

\usepackage{hyperref}
\hypersetup{hypertexnames=false}
\usepackage{url}
\usepackage{graphicx}
\usepackage{amsmath,amssymb,amsthm}
\usepackage{booktabs}
\usepackage{multirow}
\usepackage{alphalph}
\usepackage{placeins,needspace}
\usepackage{array}
\usepackage{xcolor}
\usepackage{float}
\newtheorem{lemma}{Lemma}
\usepackage{xeCJK}
\usepackage{listings}
\definecolor{codebg}{HTML}{F4F6F7}
\definecolor{codeframe}{HTML}{859DAA}
\definecolor{codecomment}{HTML}{6A6A6A}
\definecolor{codekw}{HTML}{395C72}
\definecolor{codestr}{HTML}{C44E52}
\lstdefinestyle{purplebox}{
  backgroundcolor=\color{codebg},
  frame=single, rulecolor=\color{codeframe},
  basicstyle=\ttfamily\fontsize{7}{8.5}\selectfont,
  keywordstyle=\color{codekw}\bfseries,
  commentstyle=\color{codecomment}\itshape,
  stringstyle=\color{codestr},
  language=Python, showstringspaces=false,
  breaklines=true, columns=flexible, numbers=none,
  xleftmargin=2pt, xrightmargin=2pt,
  framexleftmargin=2pt, framexrightmargin=2pt,
  aboveskip=0.5em, belowskip=0.5em,
}

\newcommand{\GSMN}{100}
\newcommand{\GSMR}{3}
\newcommand{\GSMBASEQ}{90.0}
\newcommand{\GSMFARSQ}{70.0}
\newcommand{\GSMRANDQ}{90.0}
\newcommand{\GSMDROPQ}{20.0}
\newcommand{\GSMVSRANDQ}{20.0}
\newcommand{\GSMWPQ}{4.0\!\times\!10^{-5}}
\newcommand{\GSMBASEL}{86.0}
\newcommand{\GSMFARSL}{85.0}
\newcommand{\GSMRANDL}{85.5}

\newcommand{\GSMWPL}{0.44}

\title{Concept Subspaces Compute Beyond the Logit Lens:\\
A Weights-Only Test for Locating Representations\\
Upstream of Readout}
\author{Aojie Yuan\textsuperscript{1}\quad Zhiyuan Julian Su\textsuperscript{2}\quad Haiyue Zhang\textsuperscript{1}\quad Zijian Su\textsuperscript{3}\\[5pt]
{\small\textsuperscript{1}University of Southern California\quad \textsuperscript{2}Duke University}\\[2pt]
{\small\textsuperscript{3}University of Michigan}\\[4pt]
{\footnotesize\textsuperscript{1}\href{mailto:aojieyua@usc.edu}{\texttt{aojieyua@usc.edu}}\quad
\textsuperscript{2}\href{mailto:zhiyuan.j.su@duke.edu}{\texttt{zhiyuan.j.su@duke.edu}}\quad
\textsuperscript{1}\href{mailto:haiyuez@usc.edu}{\texttt{haiyuez@usc.edu}}\quad
\textsuperscript{3}\href{mailto:simoon@umich.edu}{\texttt{simoon@umich.edu}}}}

\date{September 29, 2026}
\hypersetup{pdftitle={Concept Subspaces Compute Beyond the Logit Lens: A Weights-Only Test for Locating Representations Upstream of Readout},pdfauthor={Aojie Yuan, Zhiyuan Julian Su, Haiyue Zhang, Zijian Su},pdfsubject={Concept subspaces and controlled readout geometry},colorlinks=true,linkcolor=black,citecolor=black,urlcolor=accent}

\begin{document}
\maketitle


\begin{abstract}
A concept subspace's effect on model behavior does not establish how it relates to the output readout. We introduce a two-sided geometric diagnostic that measures an extracted subspace's overlap with the dominant right-singular directions of the unembedding matrix, evaluated against output-oriented positive controls. Given an extracted basis, the raw diagnostic requires only model weights. Our testbed is the Format-Agnostic Reasoning Subspace (FARS), a ten-dimensional basis extracted from eighteen reasoning concepts expressed in six surface forms. Across nine rank-matched estimators and twenty-six models, four activation-derived concept estimators carry only 0.38--0.80\% mean energy in the top-ten readout span. Final-layer PCA carries 3.56\%, exceeding FARS in 25 of 26 models. A same-layer next-token control, evaluated using a fitted linear translator for depth matching, carries approximately thirteen times more energy than FARS, with separation in all 25 tested models. Re-extracting FARS on ten disjoint concepts yields 62--100\% cross-format retrieval across twenty-four generative models, demonstrating transfer of the extraction procedure rather than a fixed basis. A complementary four-model, three-seed intervention study finds model-dependent source-directed effects that remain well below full-vector replacement. Together, the geometry and intervention controls distinguish concept structure from dominant readout directions while limiting claims of causal sufficiency.
\par\medskip\centering\href{https://github.com/Justin0504/LLM-representation-agnostic}{\textsf{Code}}
\end{abstract}

\begin{figure}[H]
\centering
\includegraphics[width=0.96\textwidth]{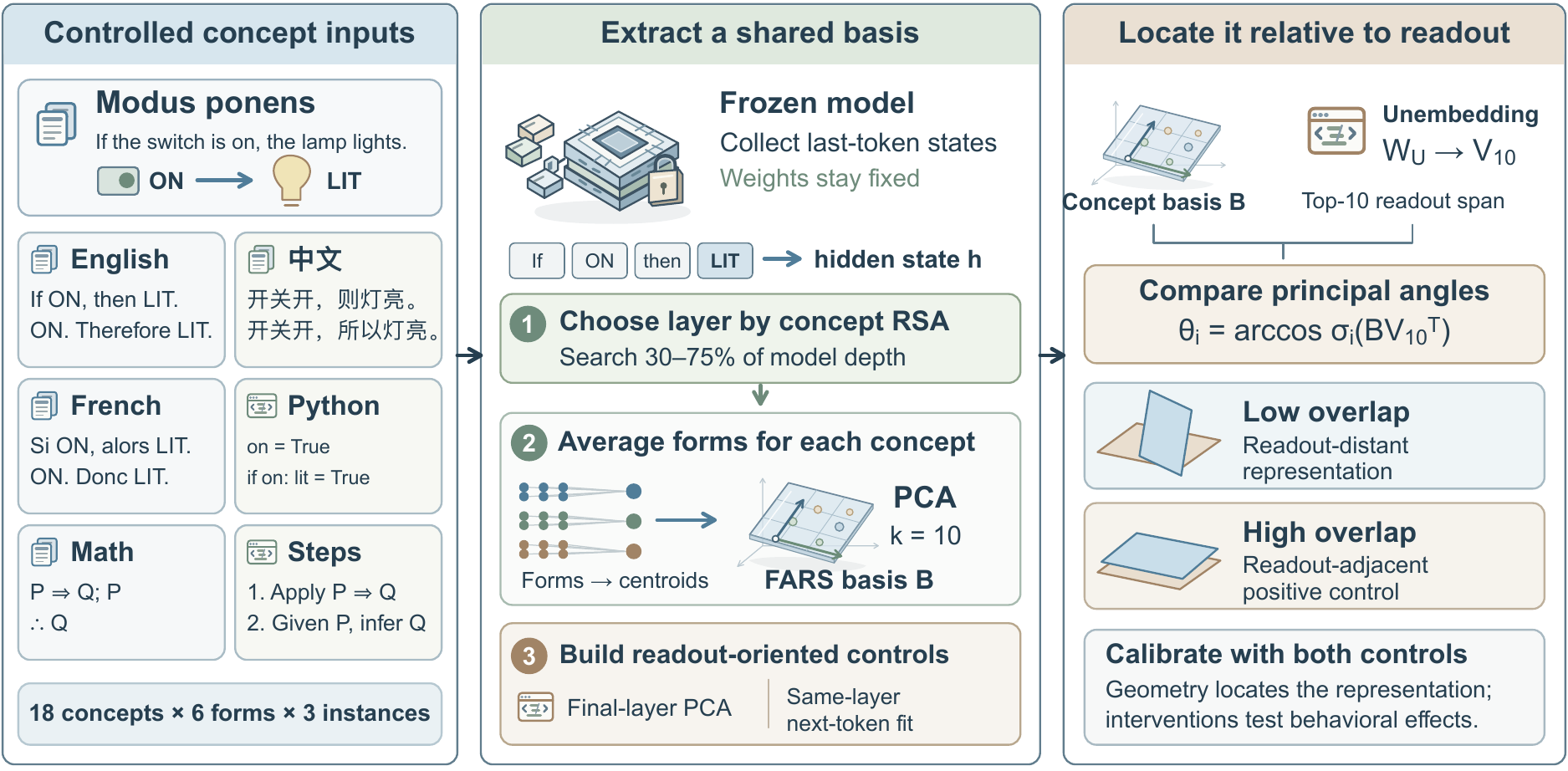}
\caption{\textbf{From controlled concepts to readout geometry.} Six schematic forms define a shared concept basis. Principal angles compare it with the dominant readout span, calibrated by two controls. Illustrations are schematic; quantitative results follow.}
\label{fig:pipeline}
\end{figure}
\clearpage

\section{Introduction}
\label{sec:intro}

Language models express reasoning concepts in prose, code, and mathematical notation, but this flexibility need not imply shared internal directions. Concept-direction research \citep{park2024linear,arditi2024refusal,turner2023activation} motivates two questions: does concept identity persist across surface forms, and how strongly does its recovered subspace overlap the model's output readout?

We extract a \textbf{Format-Agnostic Reasoning Subspace} (FARS) using concept-centroid PCA over eighteen reasoning concepts in six surface forms (Fig.~\ref{fig:pipeline}). Averaging within concept suppresses the form confound without gradient updates or supervision beyond concept labels. We use rank $k=10$, an empirical operating point where marginal gains plateau (\S\ref{sec:universality}); it is not an intrinsic-dimensionality claim, since eighteen mean-centred centroids span at most seventeen dimensions.

An ablation drop alone cannot tell whether a subspace computes an answer or carries an already-computed identity toward logits. Our main contribution is a \emph{readout-orthogonality test}: compare a probed basis with the top-$k$ right singular vectors of the model's unembedding matrix $W_U$. Given the probed basis, this geometric comparison requires weights rather than new model runs. Positive and depth-matched controls establish that the test can return both low and high readout loading.

\paragraph{Contributions.}
\begin{itemize}
\item \textbf{A calibrated geometric diagnostic.} Given an extracted basis, the raw test uses model weights to quantify its overlap with dominant readout directions. Across $26$ models, final-layer PCA exceeds FARS in $25/26$ cases; a same-layer next-token control does so in $25/25$ depth-matched cases.
\item \textbf{Controlled evidence across estimators and inventories.} Four concept estimators among nine rank-matched constructions carry $0.38$--$0.80\%$ mean readout energy. Re-extraction on ten disjoint concepts yields $62$--$100\%$ cross-format retrieval on $24$ generative models. This tests transfer of the extraction procedure, not transport of a fixed basis.
\item \textbf{An intervention-based boundary on interpretation.} A new four-model, three-seed comparison separates FARS effects from random, norm-matched random, full-vector, and no-intervention controls. Source-directed effects are model dependent and substantially weaker than full replacement. Geometric separation and intervention sensitivity therefore do not establish causal sufficiency.
\end{itemize}

\section{Related Work}
\label{sec:related}

\paragraph{Concept directions and readout.}
The Linear Representation Hypothesis treats concepts as activation-space directions \citep{park2024linear,elhage2022superposition}. TCAV estimates labelled concept vectors \citep{kim2018tcav}; linear probes recover truth directions \citep{marks2024geometry}; refusal and activation-steering studies intervene on such directions \citep{arditi2024refusal,turner2023activation}. Distributed Alignment Search learns subspace interventions \citep{geiger2024das}, LEACE erases linearly encoded concepts \citep{belrose2023leace}, and sparse autoencoders extract overcomplete feature dictionaries \citep{bricken2023sae,templeton2024scaling}. We use centroid PCA as a lightweight estimator and evaluate its readout geometry alongside alternative constructions (App.~\ref{app:method_comparison}).

Tuned lenses study intermediate readout \citep{belrose2023tunedlens}. Closest to our question, \citet{nadaf2026steerable} find that function vectors \citep{todd2024function} can steer where logit-lens decoders cannot read them, while \citet{billa2026predicting} use logit-lens accessibility to predict steering success. Patchscopes decodes hidden states through later model layers \citep{ghandeharioun2024patchscopes}. Our diagnostic adds positive controls and a depth-matched linear reference. Low overlap with the specified span does not exclude decoding through lower singular directions, a learned decoder, or subsequent nonlinear processing.

\paragraph{Cross-model / cross-architecture representation.}
The Platonic Representation Hypothesis \citep{huh2024platonic} conjectures cross-architecture convergence to a shared representational space; \citet{kornblith2019cka} and \citet{morcos2018cca} give correlation-based measures. Our X-FARS analysis tests this question using aligned concept centroids and their relations on a fixed inventory (App.~\ref{app:universal_frame}).

\paragraph{Cross-lingual and cross-format transfer.}
Studies of multilingual representations address several distinct questions: language-specific neurons \citep{tang2024language,zhao2024multilingual}, shared circuits and concept latents \citep{ferrando2024similarity,dumas2025separating,chen2025abstract}, alignment across languages \citep{conneau2020emerging,liu2025middle}, latent language \citep{wendler2024llamas}, and transfer of knowledge-free reasoning \citep{hu2025cross}. These findings motivate a controlled comparison across surface forms but do not, collectively, establish invariance across prose, code, and mathematical notation. OctoPack studies code instruction tuning \citep{muennighoff2024octopack}; our benchmarks instead hold concept identity fixed while varying these forms. We use them to test readout geometry and extraction transfer, without claiming an exhaustive priority result over all cross-format datasets.

\paragraph{Causal interpretability.}
Activation patching \citep{vig2020investigating,meng2022locating,wang2022interpretability} and its follow-ups \citep{anthropic2025biology,conmy2023automated} localise circuits by intervention, and causal-abstraction accounts \citep{geiger2021causal} formalise what such interventions license. We use subspace-level patching (\S\ref{sec:extraction}) with ablation and amplification as a bidirectional causal test; the interpretability illusions demonstrated by \citet{makelov2024subspace} caution against equating an intervention effect with recovery of the naturally used mechanism. Our geometric control addresses readout adjacency; it does not by itself resolve that broader causal-identification problem. Relatedly, \citet{yuan2026hiddenerror} report predictive hidden-state error signals without successful correction under their tested interventions. This motivates distinguishing decodability from intervention efficacy; our experiments address concept subspaces rather than error detectors.

\section{Experimental Setup}\label{sec:expsetup}
\label{sec:method}

\paragraph{TriForm Benchmark.}
A controlled stimulus set where reasoning content is held exactly constant while surface form varies along three axes: \textbf{linguistic} (English, Chinese, French prose), \textbf{symbolic} (Python code, mathematical notation), and \textbf{structural} (step-by-step). $18$ concepts spanning arithmetic, logic, relational, causal, and spatial domains $\times$ $3$ instances $\times$ $6$ forms $=$ $324$ stimuli, programmatically generated; full inventory in App.~\ref{app:benchmark}, example in App.~\ref{app:stimuli}.

\paragraph{Models.} Twenty-six LLMs from four architecture families plus reasoning-tuned variants ($0.4$B--$70$B): sixteen dense decoders (GPT-2~XL, Qwen2.5-3B-Inst, Qwen2.5-7B, Qwen3-4B/8B, Phi-3.5-mini-Inst, Phi-4, Mistral-7B base/-Inst, Llama-3.1-8B/70B-Inst, OLMo-2-7B-Inst, Granite-3.3-8B-Inst, Falcon3-7B-Inst, Yi-1.5-9B-Chat, SmolLM3-3B), three MoE (Mixtral-8x7B-Inst, DeepSeek-V2-Lite-Chat, gpt-oss-20B), two state-space (Mamba-2.8B, Falcon-Mamba-7B), four reasoning-tuned (R1-Distill-Llama-8B, R1-Distill-Qwen-7B/14B, QwQ-32B), and one encoder MLM (DeBERTa-v3-large). RoBERTa-large is excluded from the readout analyses because no layer yields a concept-dominant subspace (form-RSA exceeds concept-RSA throughout), so the test's prerequisite is unmet. The X-FARS canonical frame (App.~\ref{app:universal_frame}) includes twenty-seven extracted models including RoBERTa. Its original alignment analyses use a fifteen-model pool; expanded algebra summaries use twenty-seven models, as identified in each table and caption.

\paragraph{Methods.} Last-token hidden states at every layer ($\mathbf{A}\in\mathbb{R}^{324 \times L \times D}$). \textbf{Permutation RSA} \citep{kriegeskorte2008rsa,diedrichsen2017rsa} between the empirical cosine RDM and binary ideal RDMs ($0$ for a same-concept pair, $1$ otherwise; likewise for form; $1{,}000$ perms, FDR-corrected). \textbf{Cross-form probing} (ridge $\alpha{=}0.1$ on source form, eval on target). \textbf{FARS extraction}: average each concept's activations over forms and instances to get $18$ centroids in $\mathbb{R}^D$, PCA $\to$ top-$k$ FARS basis $\mathbf{B}_\ell \in \mathbb{R}^{k \times D}$. All estimators in \S\ref{sec:readout_control} are rank-matched at $k{=}10$ so that Grassmann distances are comparable. This matters where the natural rank is lower: \emph{FormPCA} is the $5$ directions of the six mean-centred form centroids, completed to rank $10$ by the leading directions of the form-projected-out residual, and \emph{ProbeW} is the top-$10$ PCA of the $18$ one-vs-rest probe weight vectors. \textbf{Causal interventions}: subspace patching $\mathbf{h}_\text{patched} = \mathbf{h}_\text{tgt} + \mathbf{B}^\top \mathbf{B}(\mathbf{h}_\text{src}-\mathbf{h}_\text{tgt})$ \eqref{eq:subspace_patch}, ablation $(\mathbf{I}-\mathbf{B}^\top\mathbf{B})\mathbf{h}$, and amplification $\mathbf{h} + c\,\mathbf{B}^\top\mathbf{B}\mathbf{h}$ (so $c{=}0$ is the identity and $c^\star$ denotes the gain used), compared against random orthonormal, Full-PCA, and full replacement; concept-block bootstrap ($5{,}000$).
\begin{equation}
\mathbf{h}_{\text{patched}} = \mathbf{h}_{\text{tgt}} + \mathbf{B}^\top \mathbf{B}\, (\mathbf{h}_{\text{src}} - \mathbf{h}_{\text{tgt}})
\label{eq:subspace_patch}
\end{equation}

\section{Extracting FARS}
\label{sec:extraction}

Given a set of $C$ concepts each rendered in $F$ surface forms with $I$ instances per pair ($N=CFI$ stimuli), FARS is the top-$k$ principal subspace of the mean-centred concept-centroid matrix at the concept-RSA peak within the $30$--$75\%$ depth band, as in the readout analyses. The reasoning-position analysis instead selects by centroid variance (App.~\ref{app:reasoning_full}); this is a distinct extraction setting. A \emph{shuffled-centroid null} trails FARS by $17$--$29\sigma$ in concept-RSA and \emph{multi-class LDA} matches it within $\Delta{\leq}0.02$, so the subspace is concept-carrying rather than a variance artefact (App.~\ref{app:extraction_baselines}).

\begin{table}[t]
\centering
\footnotesize
\setlength{\tabcolsep}{5pt}
\begin{tabular}{@{}lccc|cc@{}}
\toprule
\textbf{Model} & \multicolumn{3}{c|}{\textbf{Argmax preserved (\%)$\uparrow$}} & \multicolumn{2}{c}{\textbf{Behavioural (drop pp)$\downarrow$}} \\
$L{=}$ best FARS layer & FARS-10 & PCA-10 & Full & GSM8K & MATH \\
\midrule
\raisebox{-0.18em}{\includegraphics[height=0.95em]{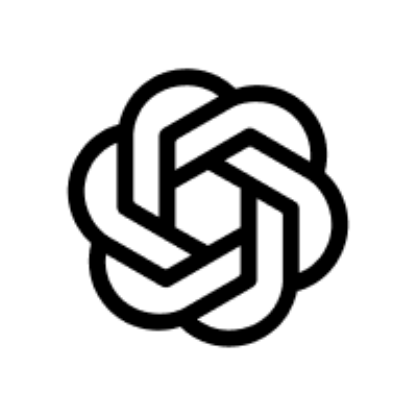}}~GPT-2 XL           & \textbf{93.7} & 87.7 & 81.0 & --- & --- \\
\raisebox{-0.18em}{\includegraphics[height=0.95em]{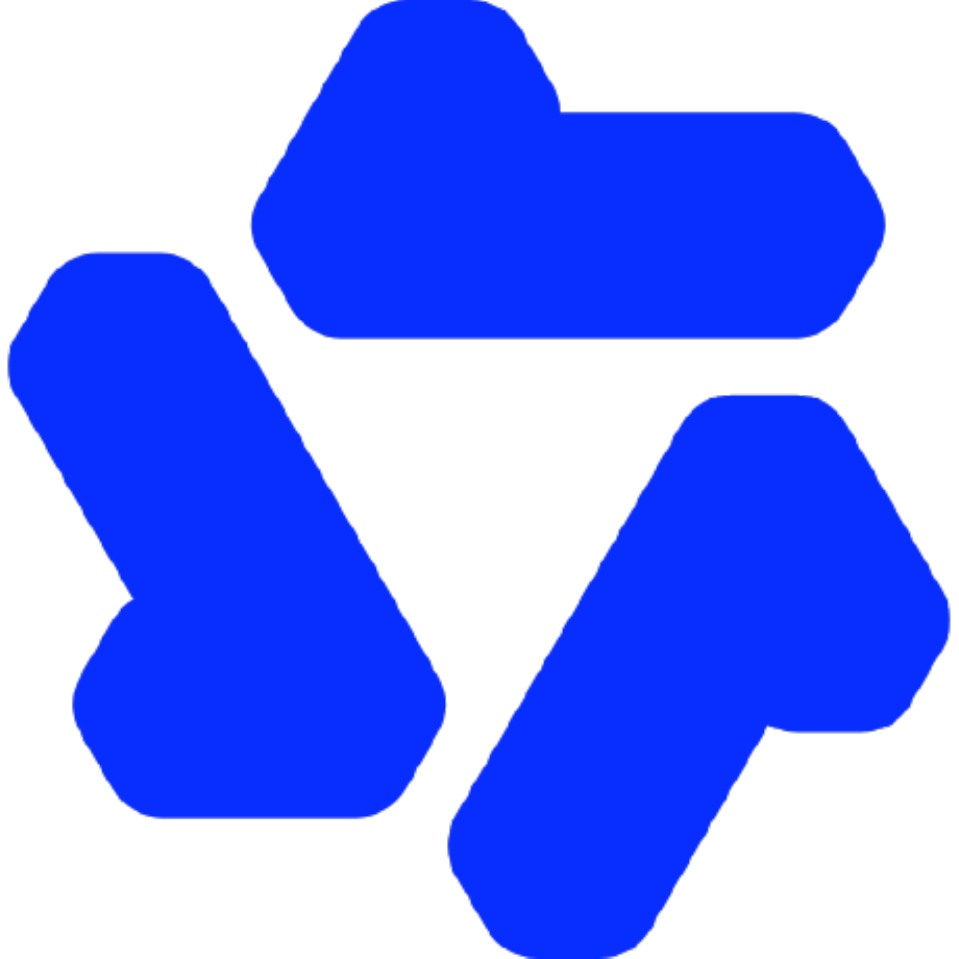}}~Qwen-2.5-3B          & \textbf{89.7} & 37.3 & 22.0 & --- & --- \\
\raisebox{-0.18em}{\includegraphics[height=0.95em]{figures/logos/qwen.pdf}}~Qwen-2.5-7B-Inst.\   & \textbf{94.7} & 59.7 & 26.3 & $\mathbf{-20.0}$\,pp & $\mathbf{-18.4}$\,pp \\
\raisebox{-0.18em}{\includegraphics[height=0.95em]{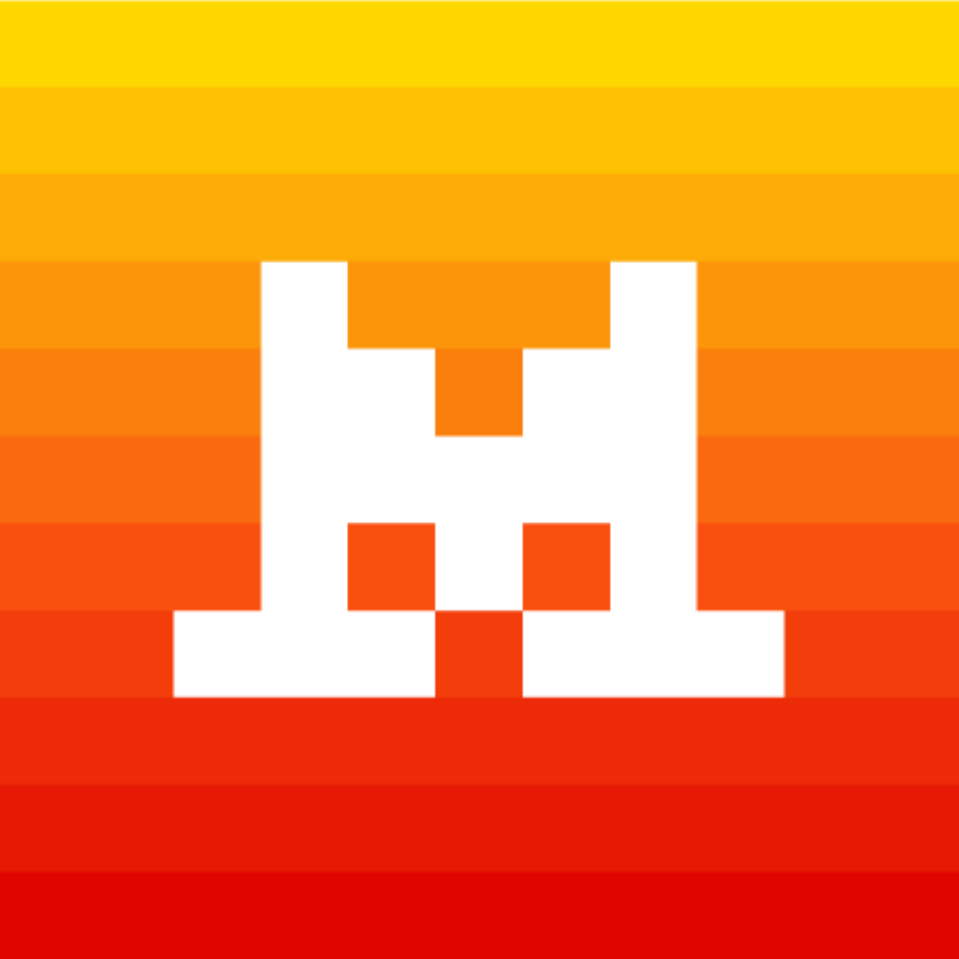}}~Mistral-7B        & \textbf{97.3} & 85.3 & 83.3 & --- & --- \\
\raisebox{-0.18em}{\includegraphics[height=0.95em]{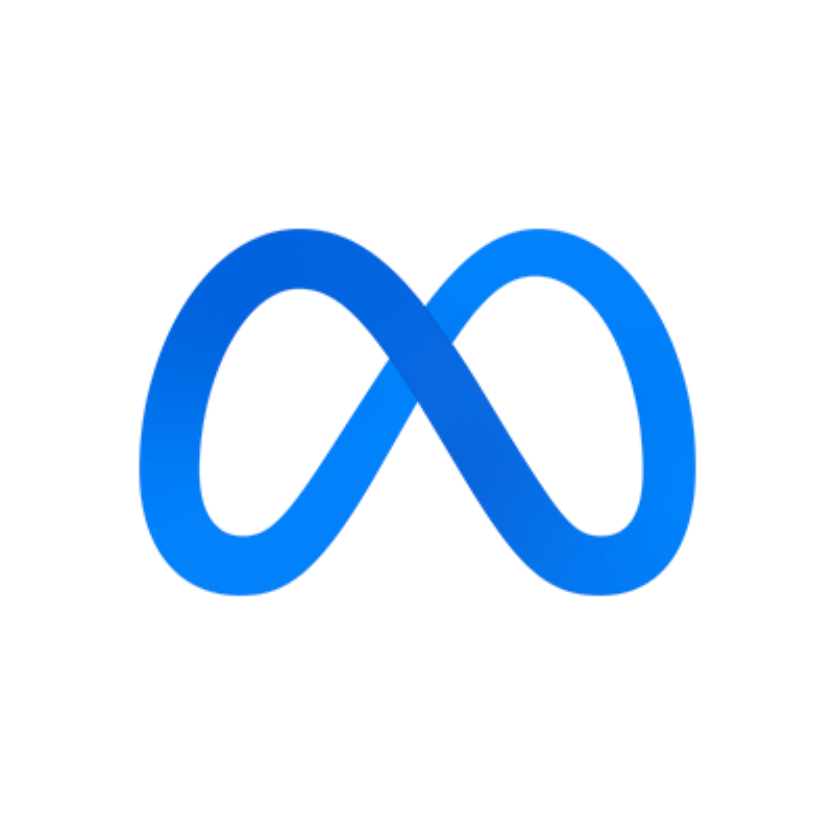}}~Llama-3.1-8B         & \textbf{94.0} & 56.7 & 39.7 & $+5$ (n.s.); $-22$ at $c^\star{=}2$ & --- \\
\midrule
Matched-rank random & $99.0 \pm 0.2$ & --- & --- & $0.0$ & $-1.7$ \\
\bottomrule
\end{tabular}
\caption{Argmax-preservation under $10$-d cross-form patching (paired Wilcoxon $p{<}10^{-22}$ vs.\ Full-PCA on every model). Behavioural: FARS ablation at the FARS layer costs $-20$ (Qwen-$7$B-Inst), $-28$ (Phi-$3.5$), $-17$ (Qwen-$3$B-Inst), and $-10$\,pp (Mistral-$7$B-Inst) on GSM8K relative to matched-rank random (paired Wilcoxon $p\le 0.03$); Llama-$8$B needs amplification $c^\star{=}2$ ($-22$\,pp, $p{=}3{\times}10^{-3}$); Llama-$70$B is inert at $c\le 2$ (App.~\ref{app:gsm8k}).}
\label{tab:causal_body}
\end{table}

Intervening on FARS affects reasoning performance in several models. Argmax preservation alone is less informative: random-subspace patches also preserve it. We therefore test readout overlap with positive controls (\S\ref{sec:readout_control}), then examine transfer across inventories (\S\ref{sec:novelty_transfer}), rank and layer sensitivity (\S\ref{sec:universality}), and the boundary of last-input-token analysis in reasoning models (\S\ref{sec:reasoning}). Cross-model alignment provides complementary evidence in App.~\ref{app:universal_frame}.

\subsection{What a concept-subspace intervention can change}
\label{sec:controlled_interventions}
Readout geometry and behavioural intervention answer different questions. We therefore extend cross-concept patching to GPT-2 XL, Qwen2.5-3B-Instruct, SmolLM3-3B, and Mamba-2.8B. Each model is evaluated with three seeds and $120$ directed concept pairs per seed, using the first stimulus instance in a shared surface form. We retain the earlier prefix-position protocol for comparison and add a last-token-to-last-token protocol with five conditions: FARS, a Haar-random rank-$10$ basis, the same random intervention rescaled to the FARS intervention norm, full-vector replacement, and no intervention. Bases and model layers are fixed across seeds.

For source $A$, target $B$, and patched output $P$, the recorded event is $D_{\mathrm{KL}}(p_A\Vert p_P)<D_{\mathrm{KL}}(p_B\Vert p_P)$. This is a measure of source-directed output change, not semantic accuracy. In the last-token protocol, the seed-mean FARS rates are $10.8\%$, $3.3\%$, $0.8\%$, and $0.8\%$, respectively (Fig.~\ref{fig:intervention_controls}). Full replacement yields $19.7$--$47.8\%$ across these models; both random controls and no intervention yield zero events in the sampled trials. The effects support a limited, model-dependent influence of the extracted subspace, not reliable substitution for the full state. Seeds reuse concept pairs; we report seed variation rather than treating every record as independent. Protocol details and both sets of results appear in App.~\ref{app:intervention_extension}.

\begin{figure}[t]
\centering
\includegraphics[width=\linewidth]{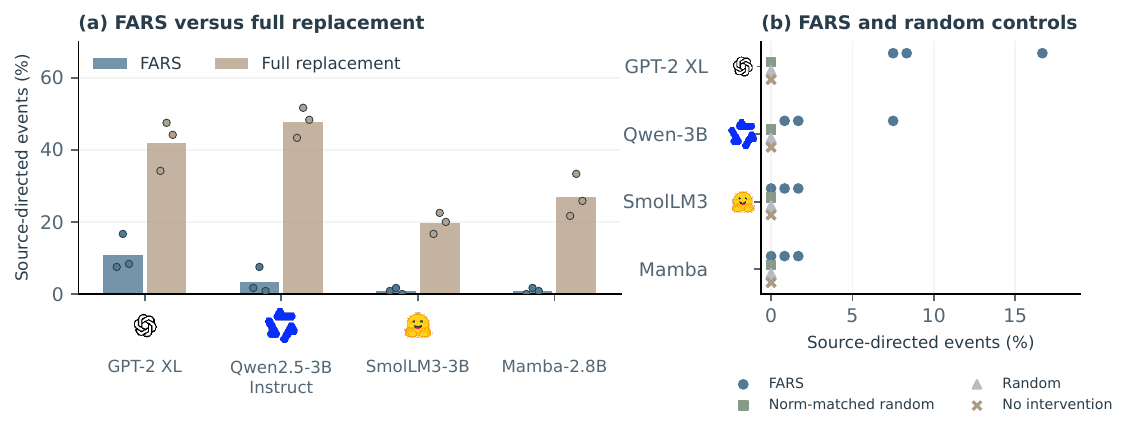}
\caption{\textbf{Concept-subspace interventions have limited source-directed effects.} Last-token replacement across four models; each dot is one seed with $120$ directed concept pairs. Bars show seed means, with no inferred confidence interval. Both random controls and the no-intervention control produce zero events. Full replacement is a reference intervention, not a guarantee of task success.}
\label{fig:intervention_controls}
\end{figure}

\section{Readout-Orthogonality Test and Method Validation}\label{sec:readout_control}


\paragraph{Per-model readout orthogonality.}
For each model we extract FARS at its best concept-RSA layer within the
$30$--$75\%$ depth band, take the top-$10$ right singular vectors of that model's
own unembedding matrix $W_U$ (the raw test uses $W_U$ alone; the
final-norm gain $\boldsymbol\gamma$ is folded in, with the layer translator, by the
depth-matched test of App.~\ref{app:depth_matched}, which returns the same verdict), and measure the Grassmann geodesic distance
between them, the smallest principal angle, and the fraction of FARS energy
lying inside the readout span.

Per-model values are tabulated in App.~\ref{app:depth_matched} (Table~\ref{tab:readout_disruption}).


\paragraph{Positive controls and four concept estimators give different readout loadings.}
A diagnostic that can only return one answer is not a diagnostic. We therefore
run the test on matched-rank subspaces of the same activation space across all
26 models: four estimators from the published literature, four structural
controls, and \emph{positive controls} designed to be readout-adjacent.

\begin{table}[t]
\centering
\small
\setlength{\tabcolsep}{5pt}
\begin{tabular}{@{}llcccc@{}}
\toprule
 & & \multicolumn{2}{c}{vs.\ $W_U$} & vs.\ $W_U\!\to\!\ell$ & \\
\cmidrule(lr){3-4} \cmidrule(lr){5-5}
Rank-$10$ subspace & role & $d_G$ & energy \% & energy \% & $\theta_{\min}$ \\
\midrule
\textbf{Final-layer PCA} & \textbf{positive ctrl} & 4.556 & 3.56 & 0.86 & 65.5 \\
FormPCA & control & 4.738 & 1.26 & 0.90 & 75.8 \\
FullPCA & control & 4.751 & 1.04 & 0.83 & 77.5 \\
Shuffled-FARS & control & 4.762 & 0.90 & 0.62 & 78.9 \\
Diff-of-means & published & 4.774 & 0.80 & 0.46 & 79.5 \\
FARS \emph{(ours)} & published & 4.774 & 0.80 & 0.46 & 79.5 \\
LDA & published & 4.805 & 0.41 & 0.39 & 83.5 \\
Probe weights & published & 4.811 & 0.38 & 0.33 & 83.6 \\
Haar random & control & 4.823 & 0.32 & 0.30 & 84.3 \\
\textbf{Next-token PCA at $\ell$} & \textbf{positive ctrl} & 4.665 & 2.34 & $\mathbf{5.88}$ & 69.6 \\
\bottomrule
\end{tabular}
\caption{Readout-orthogonality of nine matched-rank subspace estimators, averaged over 26 models spanning four architecture families plus reasoning-tuned variants ($0.4$B--$70$B). $d_G$ is the Grassmann geodesic distance to the top-$10$ right singular vectors of $W_U$ (ceiling $\sqrt{10}\pi/2 = 4.97$); energy \% is the fraction of subspace energy inside the top-$10$ readout span, measured against $W_U$ directly and against the readout channels pulled back to the FARS layer $\ell$ by a tuned-lens translator ($W_U\!\to\!\ell$; App.~\ref{app:depth_matched}, 25 models); $\theta_{\min}$ is the smallest principal angle. Rows are ordered by readout-adjacency. \emph{Published} estimators are the constructions used in the concept-direction literature; \emph{control} rows probe rank and structure; the \emph{positive controls} are the top-$10$ PCA of the \emph{final} layer, whose residual stream is the immediate input to $W_U$, and next-token-identity centroid PCA at the FARS layer itself --- the same estimator as FARS with the grouping variable changed from concept to emitted token.}
\label{tab:method_8way}
\end{table}

\textbf{The test is two-sided.} The final-layer positive control carries
$3.56\%$ of its energy in the top-$10$ readout span against $0.80\%$ for FARS
(a $4.5\times$ separation), with the minimum principal angle falling from
$79^\circ$ to $65^\circ$; per-model it is strictly more readout-adjacent than
\emph{all four} published estimators in 25/26 models by energy and 25/26 by
minimum angle. The near-orthogonal readings elsewhere are therefore substantive
rather than an artefact of the statistic. The single exception is R1-Distill-Qwen-7B, whose last-input-token FARS loads $5.78\%$ against a cross-model mean of $0.80\%$; it is the model whose concept computation \S\ref{sec:reasoning} shows moving off the last-input-token position, and the depth-matched test below resolves it.

\textbf{{The verdict survives matching depth.}} A mid-layer subspace could be far from $W_U$ merely because many layers separate it from the unembedding. We therefore pull the readout back to the FARS layer $\ell$ with a tuned-lens-style translator $A_\ell$ (ridge from block-$\ell$ to final-block states, held-out $R^2$ 0.23--0.63) and take the top-$k$ right singular vectors of $W_U\,\mathrm{{diag}}(\boldsymbol\gamma)A_\ell$ ($\boldsymbol\gamma$ the final-norm gain) as the readout channels \emph{{at layer $\ell$}}; as a same-layer positive control we build FARS's own estimator with the grouping variable changed from concept to emitted token (next-token centroid PCA). Against the pulled-back readout the concept estimators stay at the Haar floor (FARS $0.46\%$, LDA $0.39\%$, random $0.30\%$) while the same-layer next-token subspace loads $5.88\%$ --- a 13$\times$ separation that exceeds all four published estimators in 25/25 models (App.~\ref{app:depth_matched}). With layer, rank, estimator, and corpus held fixed, grouping by output token yields a readout channel and grouping by concept does not: this comparison separates grouping variables at matched depth under the fitted linear reference. It also resolves the exception above: R1-Distill-Qwen-7B's FARS loads $5.78\%$ on the raw $W_U$ channels but only $0.69\%$ on the layer-$\ell$ channels; its relative loading depends on the choice of readout reference.

\textbf{The verdict is not specific to our estimator.} Multi-class LDA,
difference-of-means directions, and one-vs-rest probe weights --- the constructions
the concept-direction literature actually uses --- all have low top-$10$ readout loading, with
LDA and probe weights \emph{further} from the readout than FARS. Sorting by
readout-adjacency recovers the form-vs-concept axis: form-dominant subspaces (FormPCA
$1.26\%$, FullPCA $1.04\%$) sit nearer the readout than concept-dominant ones
(FARS $0.80\%$, LDA $0.41\%$, probe weights $0.38\%$). This ordering associates surface-form grouping with greater readout loading in this dataset; it does not by itself identify where concept computation occurs.

\begin{lemma}[Difference-of-means directions span FARS]
\label{lem:diffmeans}
Let $\mu_c$ be the class-conditional mean activation of concept $c$ over a
balanced stimulus set and $\bar\mu$ the grand mean. Then
$\mathrm{span}\{\mu_A - \mu_B\}_{A,B} = \mathrm{span}\{\mu_c - \bar\mu\}_c$,
so the top-$k$ principal subspace of the pairwise difference matrix is exactly
the concept-centroid PCA subspace (FARS).
\end{lemma}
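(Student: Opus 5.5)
We prove the span equality by two inclusions and then strengthen it to an identity between second-moment matrices. That identity is what the claim about the \emph{top-$k$} subspace actually needs, since equal spans alone do not force equal principal subspaces.

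\textbf{Balanced set and grand mean.} Because the stimulus set is balanced, every concept contributes the same number $FI$ of stimuli. Hence the grand mean equals the unweighted centroid average,
\[
\bar\mu = \tfrac{1}{C}\sum_{c}\mu_c .
\]
We write $\delta_c = \mu_c - \bar\mu$, so that $\sum_c \delta_c = 0$.

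\textbf{Span equality.} For the inclusion $\subseteq$, each pairwise difference satisfies $\mu_A-\mu_B = \delta_A - \delta_B$, which lies in $\mathrm{span}\{\delta_c\}$. For the inclusion $\supseteq$, write
\[
\delta_c = \tfrac{1}{C}\sum_{B}(\mu_c-\mu_B),
\]
which is a linear combination of pairwise differences. This settles the first claim. It also shows that both spans have dimension at most $C-1$, consistent with the remark in \S\ref{sec:intro}.

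\textbf{Principal subspaces.} Let $M$ be the matrix whose rows are $\mu_A-\mu_B$ over all ordered pairs $(A,B)$, and let $\Delta$ be the matrix with rows $\delta_c$. Expanding $(\delta_A-\delta_B)(\delta_A-\delta_B)^\top$ and summing over $A,B$, the cross terms vanish because $\sum_c\delta_c=0$. This gives
\[
M^\top M=\sum_{A,B}(\delta_A-\delta_B)(\delta_A-\delta_B)^\top = 2C\sum_c \delta_c\delta_c^\top = 2C\,\Delta^\top\Delta .
\]
If only unordered pairs are used, the constant becomes $C$ instead of $2C$. Either way, the two scatter matrices are positive multiples of each other. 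They therefore share eigenvectors and the ordering of their eigenvalues, so their top-$k$ eigenspaces coincide.

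\textbf{Centring of $M$.} FARS is defined as PCA of the mean-centred centroid matrix, i.e.\ the top right singular vectors of $\Delta$. PCA of $M$ also requires centring $M$, but the mean row of the ordered-pair matrix is already zero by antisymmetry, so no adjustment is needed. For unordered pairs we would instead take PCA as the uncentred SVD, and we state that convention explicitly.

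\textbf{Main obstacle.} The delicate step is this last one. The bare span equality in the lemma does not imply equality of the top-$k$ subspaces. That conclusion needs the scatter identity above, which uses the balanced design (so that $\bar\mu$ is the unweighted centroid mean) and a fixed convention for the pair set and for centring.

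We also add a caveat on uniqueness. If the $k$-th and $(k{+}1)$-th eigenvalues of $\Delta^\top\Delta$ are tied, the top-$k$ subspace is not unique. In that case "exactly" should be read as "the same set of admissible top-$k$ subspaces". Otherwise, the two top-$k$ subspaces are equal.
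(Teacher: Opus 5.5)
Your proof is correct and follows the paper's argument essentially step for step. You use the same two-inclusion span argument via $\mu_c-\bar\mu=\tfrac{1}{C}\sum_B(\mu_c-\mu_B)$, and then the same scatter identity $\sum_{A,B}(\mu_A-\mu_B)(\mu_A-\mu_B)^\top=2C\sum_c(\mu_c-\bar\mu)(\mu_c-\bar\mu)^\top$ to upgrade span equality to equality of top-$k$ subspaces; your remarks on where balance enters, on centring the pair matrix, and on eigenvalue ties are sound refinements that the paper leaves implicit.
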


\begin{proof}[Proof sketch]
The two spans coincide by writing each generator in terms of the other, and the
pairwise-difference scatter equals $2n$ times the concept-centroid scatter, so the
two share eigenvectors and hence their top-$k$ subspaces are identical, not merely
co-spanning. Full derivation in App.~\ref{app:lemma_proof}.
\end{proof}

Lemma~\ref{lem:diffmeans} is why the \emph{Diff-of-means} and \emph{FARS} rows of
Table~\ref{tab:method_8way} coincide to numerical precision (verified at
$d_G < 10^{-7}$ on every model). The equivalence applies to the same class means, corpus, layer, and balanced pairwise construction. It does not imply that arbitrary steering directions extracted from other concepts or prompts lie in this FARS basis. Moreover, low average loading of a rank-$10$ subspace need not give the same loading to each individual direction. The published steering methods \citep{turner2023activation,arditi2024refusal} motivate this comparison, rather than inheriting its empirical result automatically.

\paragraph{Sensitivity to readout rank.} The rank sweep in App.~\ref{app:rank_sweep} compares loading against the exact Haar expectation $k/D$. Final-layer PCA exceeds FARS at the three reported ranks in $24/26$ models; the separation generally narrows as rank grows. This is evidence at the tested ranks, not an order-of-magnitude separation at every rank.

\textbf{Published direction constructions show the same depth sensitivity.} Rebuilding the refusal direction of \citet{arditi2024refusal} and the truth direction of \citet{marks2024geometry} at the FARS block on 22 models (held-out AUC $\ge 0.9$ in 22 and 20), both have low loading relative to the Haar reference against the pulled-back readout (medians 0.44\% and 0.20\% vs.\ $k/D\approx0.26\%$; low-loading criterion satisfied in 16/22 and 19/20), whereas the same directions built at the final block load 1.98\% and 4.09\% --- a rise in 35/39 paired comparisons that change nothing but the extraction depth (App.~\ref{app:published}). The test applies unchanged to the directions safety and steering work manipulates.

\textbf{The verdict is not unanimous: it tracks where the direction is defined.} Building the same $18$ concepts from the \emph{vocabulary} instead --- tf-idf-diagnostic tokens per concept, averaged over rows of $W_U$, PCA'd to rank $10$, the construction behind token-anchored steering --- yields a subspace that loads $8.43\%$ of its energy in the readout span against $0.80\%$ for FARS (mean per-model ratio $16.9\times$, same direction in 26/26 models; random-token control $5.20\%$), while overlapping FARS by only $0.56\%$ (App.~\ref{app:vocab_anchored}). Activation-derived and vocabulary-derived constructions therefore differ in their overlap with the chosen readout span, even when they use the same concept labels.

It also covers the estimator the field reaches for first: concept features of a pretrained residual-stream sparse autoencoder \citep{bricken2023sae} sit at $1.06\%$, indistinguishable from FARS at the same layer ($0.96\%$), while that SAE's full dictionary sits at $3.93\%$ (App.~\ref{app:sae}; one model).

\begin{figure}[t]
\centering
\includegraphics[width=\linewidth]{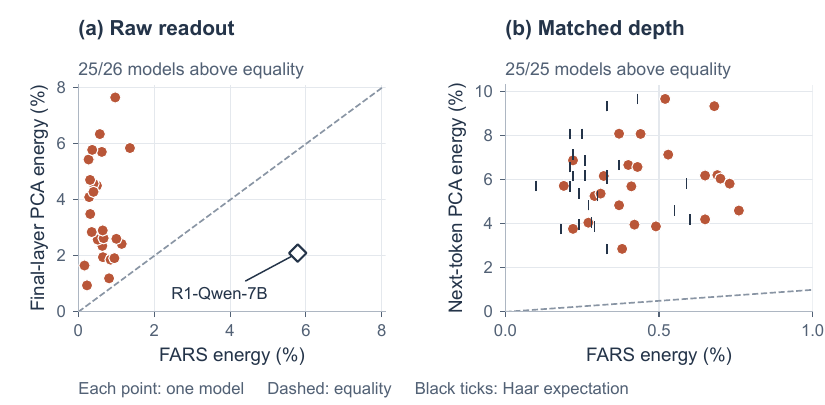}
\caption{\textbf{Positive controls distinguish readout loading.} Each point is a model. (a) Final-layer PCA exceeds FARS in $25/26$ models; the diamond marks R1-Distill-Qwen-7B. (b) Same-layer next-token PCA exceeds FARS in all $25$ depth-matched models. Black ticks mark Haar expectations at the corresponding control energy. Dashed lines indicate equal loading; axis ranges differ. Panel (b) uses table values rounded to $0.01$ percentage point.}
\label{fig:readout_two_sided}
\end{figure}

\section{Cross-Inventory Transfer: Novelty Benchmark}

The preceding comparison fixes the concept inventory. We next ask whether the extraction procedure recovers cross-format structure when that inventory changes.

\label{sec:novelty_transfer}

A natural worry is that FARS geometry is an artefact of the eighteen
hand-crafted TriForm concepts rather than a cross-format signal in the models.
We test this by extracting FARS on a disjoint inventory --- the
\emph{Novelty benchmark} introduced here --- and asking whether the method
reproduces its defining property, cross-format concept retrieval.

\paragraph{Novelty benchmark.} Ten concepts from five domains absent
from TriForm (temporal, probabilistic, graph, data-structure, quantifier),

each with three instances rendered in the same six surface forms as
TriForm, giving 180 stimuli (inventory in Table~\ref{tab:novelty_inventory}, App.~\ref{app:benchmark}). We keep $k{=}10$ for comparability,
which makes Novelty-FARS full-rank in its own concept space (ten centroids span
at most nine dimensions), so it is read against the matched-rank random baseline,
not against TriForm.

Per-model numbers are in Table~\ref{tab:novelty_retrieval} (App.~\ref{app:holdout_concepts}): Novelty-FARS reaches 62--100\% top-$1$ cross-format retrieval on the 24 generative models (32\% on the encoder DeBERTa-v3), 6--57\,pp above a matched-rank Haar baseline on every model, with a cross-to-within-format retrieval ratio $X/W \ge 0.94$ in 17 of 25 ($X$ scores a stimulus against centroids from the \emph{other} five forms and $W$ against its own, so $X/W \to 1$ means retrieval ignores surface form). \emph{Scope:} this analysis covers 25 of the twenty-six models. R1-Distill-Qwen-$14$B is the one checkpoint for which we did not run a Novelty extraction, so it is absent from this table only; it is present in every analysis that uses the TriForm inventory.

\paragraph{Fixed basis versus re-extraction.} A checkpoint-matched audit freezes the saved TriForm basis and layer, then evaluates Novelty cross-form retrieval against full-space and same-layer random controls (App.~\ref{app:frozen_transfer}). Across GPT-2 XL, Qwen2.5-7B and Mistral-7B-v0.3, frozen-basis accuracy is $22.8$--$40.0\%$, above the sampled random means ($20.2$--$35.3\%$) but below full-space retrieval ($48.3$--$90.6\%$). This is partial transfer, not uniformly near-chance performance or evidence for a universal fixed basis.

\paragraph{Reading.} Re-extraction reproduces cross-format structure on a second, disjoint inventory. Fixed-basis transfer is substantially weaker in the audited subset. The evidence supports transfer of the extraction procedure across these two inventories, not a claim about every possible concept set.


\section{Dimension Choice and Layer Localization}\label{sec:universality}

Transfer across inventories leaves two choices to check: the rank of the basis and the layer at which it is extracted.

\paragraph{$k=10$ is an empirical operating point.} Sweeping $k\in\{1,\ldots,17\}$ on TriForm, $k=10$ captures $\sim 90\%$ of both the full-rank ($k=17$) centroid variance and the full-rank cross-format retrieval, with marginal gains $\le 10$\,pp beyond it, uniformly across dense, state-space, and encoder-MLM models (App.~\ref{app:k_sweep}, Tables~\ref{tab:k_sweep_triform}--\ref{tab:k_sweep_per_model}).


\paragraph{FARS is layer-local and its peak sits above the readout layer.}
For each of the 26 models we recompute concept-centroid PCA at every
layer and record two quantities: top-$1$ cross-format concept retrieval
Agn\% (concept-invariance) and the Grassmann distance $d_G$ from the
layer-$L$ FARS to the top-$10$ readout channels of $W_U$.
Table~\ref{tab:layer_loc} reports the Agn\% peak along depth with its
$d_G$, against the last layer (Fig.~\ref{fig:layer_loc} and Table~\ref{tab:layer_loc}, App.~\ref{app:correlational_localization}).

\paragraph{Reading.} Retrieval peaks at different depths across models (reported depth fractions $0.29$--$0.97$), so the result is not a universal middle-layer location. Peak-versus-final comparisons show lower retrieval at the final layer and generally smaller distance to the readout span. Peak selection itself favours higher retrieval; it is the accompanying geometry comparison that supplies the additional observation. These measurements constrain a layer-local interpretation but do not establish the complete causal pathway.

\paragraph{Cross-architecture algebra (App.~\ref{app:universal_frame}).} The per-model subspaces of all $27$ extracted models align into one shared canonical frame (X-FARS, $97.6\%$ of cross-model centroid variance, $36.8\sigma$ above a random-projection control at $61.7\pm1.0\%$) that supports word2vec-style analogies, preserves pairwise differences, and enables training-free cross-architecture retrieval.


\section{Two Concept Subspaces in Reasoning-Tuned Models}
\label{sec:reasoning}

A last-input-token pipeline applied to reasoning-tuned models raises a scoping
question: models that ``think'' in a chain of thought (CoT) may deposit concept
content at positions a pre-decode pipeline never inspects. We read the \emph{CoT-tail}: the last position
generated under a $512$-token budget. The original length audit reports budget-hit rates of $64$--$88\%$ for three distilled models, a different cohort from the geometry comparison (App.~\ref{app:cot_truncation}) --- a claim about the reasoning stream, not its end. We answer it on the two
open-weight R1-Distill models (Llama-8B, Qwen-7B) with their base counterparts,
and on the native $32$B reasoner QwQ.

\begin{table}[t]
\centering
\footnotesize
\setlength{\tabcolsep}{3pt}
\begin{tabular}{@{}lcccc@{}}
\toprule
Probe & Llama pair & Qwen pair & QwQ-32B & Baseline \\
\midrule
$d_G$(base last, distill last)      & $\mathbf{3.06}$ & $\mathbf{3.48}$ & --- & $4.97$ (rank-$10$ ceiling) \\
$d_G$(last, CoT)                    & $\mathbf{4.53}$ & $\mathbf{4.26}$ & $\mathbf{4.33}$ & $4.97$ (rank-$10$ ceiling) \\
CoT-tail FARS \% cross-format Agn.\ & $\mathbf{60.8}$ & $\mathbf{58.0}$ & $\mathbf{59.0}$ & $21.6$--$24.4$ (Haar rand.) \\
X-FARS pool $+$Llama-Distill & \multicolumn{2}{c}{$\mathbf{55.6\%}$ ($+1.7$\,pp)} & --- & $53.8\%$ (15 models) \\
\bottomrule
\end{tabular}
\caption{Reasoning-tuned CoT departure at a glance. \emph{Row 1} (Q1):
base-to-distilled distances are nonzero and below the rank-$10$ ceiling; this is a geometric comparison in paired hidden coordinate spaces. \emph{Row 2} (Q2): within
each reasoning model, the last-token and CoT-tail subspaces sit
at large aggregate Grassmann distances,
including the natively trained QwQ-32B.
\emph{Row 3}: the CoT-tail subspace itself carries cross-format
concept identity at $58$--$61\%$ top-$1$ retrieval, $34$--$39$\,pp
above a matched-rank Haar-random baseline. \emph{Row 4} (Q3):
absorbing R1-Distill-Llama-8B into the shared X-FARS frame improves
absolute alignment ($53.8\to 55.6\%$) with the Haar-random-null gap
$27.2$--$29.6$\,pp across the three pools. Detailed tables and run distinctions appear in App.~\ref{app:reasoning_full}.}
\label{tab:reasoning_summary}
\end{table}

A format-collapse control tempers this. Regenerating the traces and keeping their text shows that 56--57\% of non-English prompts yield an English trace, so grouping by the prompt's form overstates how cross-format the position is. Regrouping the \emph{cross-format} score $X$ --- not the agnostic top-$1$ rate above, but the score that pits a stimulus against centroids from \emph{other} forms --- by the form the model actually wrote in lowers it from 30.6--44.4\% to 26.9--28.1\%, still 3.3--5.1$\times$ a matched-rank Haar control measured the same way (2 models, a fresh set of generations; App.~\ref{app:cot_format}).

\paragraph{Reading.} Under reasoning fine-tuning the picture multiplies rather than breaks. The paired and pooled comparisons quantify how the recovered geometry changes (Q1, Q3); Q2 identifies a different subspace at a generated-token position and separately selected layer. QwQ-32B also shows a large aggregate distance ($d_G{=}4.33$) and $59.0\%$ retrieval. Position, layer choice, and output format remain potential contributors; these observations do not isolate the computation responsible for reasoning.

\section{Discussion}
\label{sec:discussion}

FARS captures cross-format concept structure with low overlap with the dominant linear readout span in the tested settings. Final-layer and same-layer next-token controls show that this is a comparative result: output-oriented subspaces carry more readout energy. Together with the intervention results, this is consistent with concept representations contributing before final output readout. It does not identify the complete computation or prove that the subspace alone is sufficient to perform it.

The raw diagnostic is inexpensive once a basis is available, but the full pipeline is not compute-free: extraction requires activations, and depth matching requires fitting a translator. The latter explains only part of final-layer variation (held-out $R^2=0.23$--$0.63$), so it is an approximate linear reference. Rank sweeps test sensitivity to a chosen readout span, not an absence of information in all output directions; for a complete orthonormal basis at $k=D$, every subspace has unit loading.

The scope is also empirical. Labels supervise extraction. Fixed-basis transfer is partial and task dependent: on Novelty, three checkpoint-matched models reach $22.8$--$40.0\%$ retrieval, below full-space retrieval at $48.3$--$90.6\%$ (App.~\ref{app:frozen_transfer}); on out-of-distribution MBPP, the earlier retrieval analysis reaches only $1$--$6\%$ versus $6.5$--$17\%$ for full activations (App.~\ref{app:universal_frame}). Stronger re-extraction results test the procedure, not a universal fixed basis. The shared frame is fitted to a fixed inventory, rather than establishing universal concept coordinates. CoT-tail analyses sample a token-budget-limited reasoning stream, and an output-facingness prediction did not replicate (App.~\ref{app:concept_families}). These boundaries make the diagnostic most useful as a control alongside targeted interventions and decoding tests.

\section*{Ethics Statement}

We analyse only publicly released open-weight LLMs. No human-subjects data are collected or released, and the TriForm and Novelty stimuli comprise canonical reasoning patterns with no harmful content. One appendix experiment (App.~\ref{app:advbench}) does use published AdvBench prompts \citep{zou2023universal} to test whether a prose-vs-code representational asymmetry produces a safety-refusal gap; it does not, and we release only aggregate refusal counts, no model completions and no new attack strings. Had the gap been real we would have deferred to responsible disclosure before release.

\section*{Reproducibility Statement}

The paper specifies stimulus inventories, concept-centroid extraction, readout diagnostics, controls, and intervention procedures. Per-experiment cohorts, selected layers, and available summary values appear in the appendix. Reproducing the full study additionally requires the exact stimulus files, activation caches, extraction and alignment implementation, and checkpoint revisions. A public code and research snapshot is available at \url{https://github.com/Justin0504/LLM-representation-agnostic} (commit \texttt{4d74d85}). It includes the benchmark and core analysis code, but does not yet contain the September 28--29 supplementary runs; it is not a complete archive of every experiment reported here. The raw diagnostic requires only an extracted basis and unembedding weights; the depth-matched control additionally requires fitted activation-based translators. The reported compute description in App.~\ref{app:code} concerns the original experiments and has not been independently remeasured.

\section*{AI Use Statement}
LLM-based assistants (Claude Code and OpenAI Codex) assisted with code development, literature retrieval, experimental planning, prose revision, figure design, and figure-data provenance checks. Codex also prepared editable vector and PowerPoint assets and screened references against primary-source records. Assistant-assisted scripts also executed supplementary model experiments; the reported measurements come from recorded inference runs rather than generated illustrative values. The authors are responsible for the experimental design, verification, interpretation, and final submitted content.

\appendix
\renewcommand{\thesection}{\AlphAlph{\value{section}}}
\clearpage
\section*{Appendix: Evidence and Reproduction}
The appendix follows the evidence chain from controlled inputs to calibration, intervention, transfer, and exploratory alignment. Each section opens with its question and an interpretation boundary. Model counts refer to the stated analysis, not a single shared cohort.
\begin{center}\small
\setlength{\tabcolsep}{5pt}
\begin{tabular}{@{}p{.28\linewidth}p{.64\linewidth}@{}}\toprule
\textbf{Reading route} & \textbf{Evidence and its scope} \\ \midrule
Reproduction & Code, stimuli, inventories and extraction (Apps.~\ref{app:code}--\ref{app:method_comparison}). The raw diagnostic assumes an already extracted basis. \\
Readout geometry & Depth, rank and direction controls (Apps.~\ref{app:depth_matched}--\ref{app:concept_families}). Low loading refers to specified linear readout spans. \\
Interventions & New multi-seed controls and historical behavioural tests (Apps.~\ref{app:intervention_extension}--\ref{app:layer_writers}). Perturbation effects do not establish sufficiency. \\
Generalization & Disjoint concepts and capability associations (Apps.~\ref{app:holdout_concepts}--\ref{app:capability_prediction}). Procedure transfer differs from fixed-basis transfer. \\
Reasoning positions & CoT extraction, format and budget controls (App.~\ref{app:reasoning_full}). Original and regenerated runs remain separate. \\
Exploratory alignment & Cross-model frames, axes, composition and prediction (Apps.~\ref{app:universal_frame}--\ref{app:cross_model_predict}). Historical fits do not validate the X-FARS objective or unseen-concept generalization. \\
\bottomrule\end{tabular}\end{center}
\paragraph{Evidence status.} The new intervention extension contains $24$ completed runs and saved per-pair records. Older figures and tables retain their reported provenance; where a figure is reconstructed from a plotting-script summary rather than raw trial records, its caption says so. Negative and non-replicating results are retained. No result is promoted to a stronger claim merely because its visual presentation has changed.

\FloatBarrier
\Needspace{10\baselineskip}
\section*{1. Reproduction and controlled inputs}
\section{Implementation}\label{app:code}
\paragraph{Question and scope.} How is the raw diagnostic reproduced? The code operates on a supplied basis and readout weights; it does not reproduce extraction or validate an X-FARS optimizer.

\noindent\begin{minipage}{\linewidth}
\noindent\textit{FARS extraction:}
\begin{lstlisting}[style=purplebox]
def extract_fars(activations, c_labels, k=10):
  N, L, D = activations.shape
  basis = np.zeros((L, k, D))
  for layer in range(L):
    X = activations[:, layer, :]
    centroids = np.stack([
      X[c_labels == c].mean(0)
      for c in np.unique(c_labels)])
    pca = PCA(n_components=k).fit(centroids)
    basis[layer] = pca.components_
  return basis
\end{lstlisting}
\end{minipage}

\noindent\begin{minipage}{\linewidth}
\noindent\textit{Subspace patching} (Eq.~\ref{eq:subspace_patch}):
\begin{lstlisting}[style=purplebox]
def subspace_patch(h_src, h_tgt, B):
  # h_*: (..., D) row vectors; B: (k, D) orthonormal rows.
  diff = h_src - h_tgt
  return h_tgt + (diff @ B.T) @ B   # batches over leading dims
\end{lstlisting}
\end{minipage}

\paragraph{Compute.} ${\sim}$80 GPU-hours total on NVIDIA RTX 6000 Ada ($48$\,GB) and NVIDIA A100 ($40$\,GB), including a single multi-GPU sharded forward-pass run on Llama-3.1-70B-Instruct ($8\times$ RTX 6000 Ada) for the frontier-scale FARS extraction and compositionality replication.

\paragraph{Code and data availability.} A public code and data archive is not specified in this version. The mathematical definitions and pseudocode above describe the extraction and patching operations; they do not replace the complete experimental implementation or cached activation data.


\section{Stimulus Examples}\label{app:stimuli}
\paragraph{Question and scope.} What changes across surface forms? The examples illustrate the controlled inventory; they do not establish coverage of natural reasoning tasks.

\begin{table}[H]
\centering
\caption{Example: modus ponens in 6 surface forms.}
\footnotesize
\begin{tabular}{@{}lp{6cm}@{}}
\toprule
\textbf{Form} & \textbf{Text} \\
\midrule
English & If it rains then the ground is wet. It rains. Therefore the ground is wet. \\
Chinese & 如果下雨，那么地面是湿的。下雨了，所以地面是湿的。 \\
French & S'il pleut alors le sol est mouill\'{e}. Il pleut. Donc le sol est mouill\'{e}. \\
Python & \texttt{def mp(p, q): return q if p else None} \\
Math & $P \to Q,\; P \;\vdash\; Q$ \\
Struct. & \texttt{P1: P->Q | P2: P | Rule: MP | Q} \\
\bottomrule
\end{tabular}
\end{table}

\section{TriForm Benchmark Concept Inventory}\label{app:benchmark}
\paragraph{Question and scope.} Which concepts are included? TriForm and Novelty are separate inventories. Their model-specific evaluation pools must not be combined.

\begin{table}[H]
\centering
\caption{18 concepts across 5 reasoning domains.}
\footnotesize
\begin{tabular}{@{}llp{3.5cm}@{}}
\toprule
\textbf{Domain} & \textbf{Concept} & \textbf{Example} \\
\midrule
Arithmetic & Multi-step eval & $(7{+}3){\times}4{-}8/2$ \\
 & Modular & $47 \bmod 7$ \\
 & Proportional & 3 cost \$12; 7 cost? \\
 & GCD & $\gcd(48, 18)$ \\
\midrule
Logic & Syllogism & All A are B; all C are A \\
 & Modus ponens & P$\to$Q; P; therefore Q \\
 & Contrapositive & P$\to$Q; $\neg$Q; $\neg$P \\
 & De Morgan & $\neg(A \wedge B)$ \\
\midrule
Relational & Transitive & A$>$B, B$>$C \\
 & Set intersection & $A \cap B$ \\
 & Set difference & $A \setminus B$ \\
 & Composition & $f(g(x))$ \\
\midrule
Causal & Chain & A$\to$B$\to$C \\
 & Confounding & A$\leftarrow$C$\to$B \\
 & Interventional & $P(Y|\text{do}(X))$ \\
\midrule
Spatial & Direction & N of B, B E of C \\
 & Containment & A in B, B in C \\
 & Rotation & 90\textdegree\ rotation \\
\bottomrule
\end{tabular}
\end{table}

\begin{table}[H]
\centering
\caption{The disjoint \emph{Novelty} inventory: $10$ concepts across $5$ domains, none of
them in TriForm. Rendered in the same six surface forms, $3$ instances each, giving
$180$ stimuli. Examples are the mathematical-notation rendering.}
\label{tab:novelty_inventory}
\footnotesize
\begin{tabular}{@{}llp{4.6cm}@{}}
\toprule
\textbf{Domain} & \textbf{Concept} & \textbf{Example (math form)} \\
\midrule
Temporal & Event ordering & $t_A = 15{:}00$, $t_B = t_A + 30$\,min, $t_C \in (t_B, 16{:}00)$ \\
 & Duration arithmetic & $t_{\text{end}} = t_{\text{start}} + \Delta t$ \\
\midrule
Probabilistic & Conditional probability & $P(B_2 {=} \text{red} \mid B_1 {=} \text{red}) = 2/4$ \\
 & Expected value & $E[X] = \sum_{i=1}^{6} i \cdot (1/6) = 3.5$ \\
\midrule
Graph & Shortest path & $d(A,C) = \min(d(A,C),\, d(A,B){+}d(B,C))$ \\
 & Connectivity & $E = \{(A,B),(C,D)\}$; $A \nsim C$ \\
\midrule
Data structure & Stack (LIFO) & $\text{push}(1,2,3) \Rightarrow S=(1,2,3)$; $\text{pop} \to 3$ \\
 & Tree lookup & BST invariant $L(v) < v < R(v)$ \\
\midrule
Quantifier & Universal check & $\forall a \in \text{Basket},\ \text{red}(a)$ \\
 & Existential check & $\exists\, b \in \text{Shelf} : \text{subject}(b) = \text{physics}$ \\
\bottomrule
\end{tabular}
\end{table}

\section{Subspace Extraction Details}\label{app:subspace}
\paragraph{Question and scope.} How are bases, ranks, and layers chosen? Layer selection rules differ in the historical CoT analysis; this section does not override those protocol-specific choices.

\begin{table}[H]
\centering
\caption{Cluster purity at the best FARS layer. $k$-means accuracy (Hungarian-matched). Chance: $5.6\%$ concept ($18$-way), $16.7\%$ form ($6$-way). FARS and the form-control subspace are quantitatively orthogonal information carriers.}\label{tab:purity_app}
\footnotesize
\setlength{\tabcolsep}{3pt}
\begin{tabular}{@{}llcccc@{}}
\toprule
\textbf{Model} & \textbf{Space} & \textbf{Dim} & \textbf{Concept-ARI} & \textbf{Concept\%} & \textbf{Form\%} \\
\midrule
\multirow{3}{*}{\raisebox{-0.18em}{\includegraphics[height=0.95em]{figures/logos/openai.pdf}}~GPT-2 XL} & Full & 1600 & .142 & 28.7 & 46.0 \\
 & FARS & 10 & .300 & \textbf{42.6} & 25.9 \\
 & Form ctrl & 5 & .038 & 17.6 & \textbf{92.6} \\
\midrule
\multirow{3}{*}{\raisebox{-0.18em}{\includegraphics[height=0.95em]{figures/logos/qwen.pdf}}~Qwen-7B} & Full & 3584 & .284 & 42.3 & 25.9 \\
 & FARS & 10 & .619 & \textbf{66.7} & 23.8 \\
 & Form ctrl & 5 & .036 & 16.7 & \textbf{95.4} \\
\midrule
\multirow{3}{*}{\raisebox{-0.18em}{\includegraphics[height=0.95em]{figures/logos/mistral.pdf}}~Mistral-7B} & Full & 4096 & .257 & 40.4 & 32.4 \\
 & FARS & 10 & .647 & \textbf{69.4} & 23.1 \\
 & Form ctrl & 5 & .041 & 16.7 & \textbf{96.0} \\
\midrule
\multirow{3}{*}{\raisebox{-0.18em}{\includegraphics[height=0.95em]{figures/logos/qwen.pdf}}~Qwen-7B\textsection} & Full & 3584 & .292 & 41.7 & 24.4 \\
 & FARS & 10 & .623 & \textbf{68.8} & 24.7 \\
 & Form ctrl & 5 & .042 & 16.7 & \textbf{98.1} \\
\midrule
\multirow{3}{*}{\raisebox{-0.18em}{\includegraphics[height=0.95em]{figures/logos/meta.pdf}}~Llama-8B\textsection} & Full & 4096 & .311 & 42.0 & 33.0 \\
 & FARS & 10 & .640 & \textbf{67.6} & \emph{17.9} \\
 & Form ctrl & 5 & .042 & 16.7 & \textbf{95.7} \\
\bottomrule
\end{tabular}
\end{table}

\begin{figure*}[t]
\centering
\includegraphics[width=\linewidth]{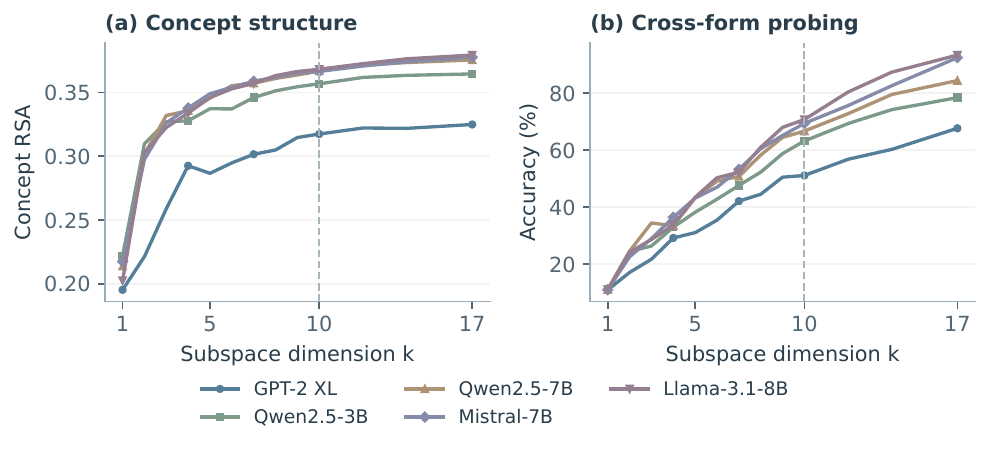}
\caption{FARS dimensionality sweep ($k\in\{1,\ldots,17\}$). The saved five-model sweeps attain $97.1$--$97.9\%$ of their $k{=}17$ concept RSA at $k{=}10$. Each point reports the saved best-layer statistic; selected layers can vary with $k$.}\label{fig:dim_sweep}
\end{figure*}

\subsection{Instruction-tuning robustness}\label{app:instruct_app}

\begin{table}[H]
\centering
\caption{Instruction tuning preserves and slightly strengthens FARS.}\label{tab:instruct}
\footnotesize
\setlength{\tabcolsep}{3pt}
\begin{tabular}{@{}lccccc@{}}
\toprule
\textbf{Model} & \textbf{RSA-C} & \textbf{Probe\%} & \textbf{Purity\%} & \textbf{FARS Patch} & $z$ \\
\midrule
\raisebox{-0.18em}{\includegraphics[height=0.95em]{figures/logos/qwen.pdf}}~Qwen2.5-7B (base) & .150 & 59.8 & 66.7 & .943 & 28.8 \\
\, +Instruct & \textbf{.153} & \textbf{62.9} & \textbf{68.8} & .867 & 25.5 \\
\midrule
\raisebox{-0.18em}{\includegraphics[height=0.95em]{figures/logos/meta.pdf}}~Llama-3.1-8B (base) & .165 & 53.9 & --- & .941 & --- \\
\, +Instruct & \textbf{.187} & \textbf{59.8} & 67.6 & .890 & 12.6 \\
\bottomrule
\end{tabular}
\end{table}

\subsection{Out-of-distribution retrieval (scope limit)}\label{app:retrieval_ood}

\begin{table*}[t]
\centering
\caption{In-distribution vs.\ out-of-distribution retrieval on Llama-3.1-8B-Instruct. Inside TriForm, $10$-d FARS outperforms full $4096$-d by $+18$ to $+23$\,pp; on MBPP-sanitized whose problems lie outside the $18$ TriForm concepts, FARS is far below full, the construction is concept-distribution-specific by design.}\label{tab:retrieval_ood}
\footnotesize
\setlength{\tabcolsep}{4pt}
\begin{tabular}{@{}lcccc@{}}
\toprule
\textbf{Benchmark} & \textbf{Direction} & \textbf{Full top-1} & \textbf{FARS top-1} & $\Delta$ \\
\midrule
TriForm in-dist.\ ($n{=}30$ pairs, avg.) & all & $.559$ & $\boldsymbol{.738}$ & $\boldsymbol{+.179}$ \\
MBPP-sanitized ($n{=}200$) & prose$\!\to\!$code & $.065$ & $.025$ & $-.040$ \\
MBPP-sanitized ($n{=}200$) & code$\!\to\!$prose & $.150$ & $.030$ & $-.120$ \\
\bottomrule
\end{tabular}
\end{table*}

\begin{table}[H]
\centering
\caption{FARS (10-dim) vs.\ Form Control (5-dim).}\label{tab:subspace_detail}
\footnotesize
\setlength{\tabcolsep}{3pt}
\begin{tabular}{@{}lcccc|cc@{}}
\toprule
 & \multicolumn{4}{c|}{\textbf{FARS}} & \multicolumn{2}{c}{\textbf{Control}} \\
\textbf{Model} & RSA-C & RSA-F & Probe & Boost & RSA-F & RSA-C \\
\midrule
GPT-2 XL & .317 & .118 & 52.8 & +20.1 & .626 & $-$.003 \\
Qwen-3B & .357 & .056 & 64.4 & +11.6 & .634 & $-$.005 \\
Qwen-7B & .367 & .009 & 68.0 & +8.2 & .638 & $-$.010 \\
Mistral & .366 & .008 & 70.5 & +10.2 & .631 & $-$.003 \\
Llama-8B & .368 & .015 & 71.9 & +18.0 & .634 & $-$.007 \\
\bottomrule
\end{tabular}
\end{table}

\section{FARS Extraction Baselines}\label{app:extraction_baselines}
\paragraph{Question and scope.} Does the extracted structure exceed construction baselines? Matched readout geometry is evaluated separately; a representation score alone is not evidence of causal sufficiency.

\subsection{Proof of Lemma~\ref{lem:diffmeans}}\label{app:lemma_proof}

Let $\mu_c$ be the class-conditional mean activation of concept $c$ over a balanced stimulus set, $\bar\mu$ the grand mean, $d_c = \mu_c - \bar\mu$, and $n$ the number of concepts.

\emph{Equal spans.} $\mu_A - \mu_B = d_A - d_B$ gives $\mathrm{span}\{\mu_A - \mu_B\} \subseteq \mathrm{span}\{d_c\}$, and $d_c = \tfrac{1}{n}\sum_j (\mu_c - \mu_j)$ gives the reverse inclusion.

\emph{Equal top-$k$ subspaces.} Equality of spans alone does not fix the principal subspaces, which depend on the second moments. Summing over ordered pairs and using $\sum_c d_c = 0$,
\begin{equation*}
\sum_{A \neq B} (\mu_A - \mu_B)(\mu_A - \mu_B)^{\top}
= \sum_{A \neq B} (d_A - d_B)(d_A - d_B)^{\top}
= 2n \sum_c d_c d_c^{\top} .
\end{equation*}
The pairwise-difference scatter is exactly $2n$ times the concept-centroid scatter. Proportional PSD matrices share eigenvectors, so the two have the same ordered principal subspaces; in particular their top-$k$ subspaces are identical for every $k$. This is what makes the \emph{Diff-of-means} and \emph{FARS} rows of Table~\ref{tab:method_8way} agree to numerical precision ($d_G < 10^{-7}$ on every model), and it is why the verdict transfers to the activation-steering family by construction.

Concept-centroid PCA on per-layer centroids amplifies concept RSA $2$--$3\times$ (e.g., $0.100 \to 0.317$ for GPT-2~XL) while suppressing form RSA to near zero. Figure~\ref{fig:tsne} provides a two-dimensional illustration of the full and projected spaces; quantitative RSA and controls, rather than the embedding layout, support the representation claim. Two baselines validate the extraction (Table~\ref{tab:critical_baselines}): a \emph{shuffled-centroid null} trails FARS by $17$--$29\sigma$; and \emph{multi-class LDA} (supervised gold standard) yields concept RSA within $0.02$ of FARS, confirming that concept-centroid PCA recovers the same subspace at a fraction of the compute.

\begin{table}[t]
\centering
\caption{Concept (C) and form (F) RSA in the $10$-dim FARS subspace vs.\ (i) shuffled-centroid PCA, (ii) supervised LDA, (iii) Full-PCA. FARS exceeds the shuffled null by $17$--$29\sigma$ and matches LDA.}\label{tab:critical_baselines}
\footnotesize
\setlength{\tabcolsep}{3pt}
\begin{tabular}{@{}llcccc@{}}
\toprule
\textbf{Model} & \textbf{Metric} & \textbf{FARS} & \textbf{Shuffled} & \textbf{LDA} & \textbf{Full-PCA} \\
\midrule
\multirow{2}{*}{\raisebox{-0.18em}{\includegraphics[height=0.95em]{figures/logos/openai.pdf}}~GPT-2 XL}
 & RSA-C $\uparrow$ & \textbf{.279} & $.072 \pm .008$ & .381 & .091 \\
 & RSA-F $\downarrow$ & \textbf{.107} & .473 & $-$.011 & .486 \\
\multirow{2}{*}{\raisebox{-0.18em}{\includegraphics[height=0.95em]{figures/logos/qwen.pdf}}~Qwen-7B}
 & RSA-C $\uparrow$ & \textbf{.357} & $.128 \pm .008$ & .384 & .151 \\
 & RSA-F $\downarrow$ & \textbf{.020} & .243 & $-$.012 & .261 \\
\multirow{2}{*}{\raisebox{-0.18em}{\includegraphics[height=0.95em]{figures/logos/mistral.pdf}}~Mistral-7B}
 & RSA-C $\uparrow$ & \textbf{.362} & $.149 \pm .013$ & .380 & .194 \\
 & RSA-F $\downarrow$ & \textbf{.012} & .259 & $-$.012 & .275 \\
\bottomrule
\end{tabular}
\end{table}

\begin{figure}[t]
\centering
\includegraphics[width=\linewidth]{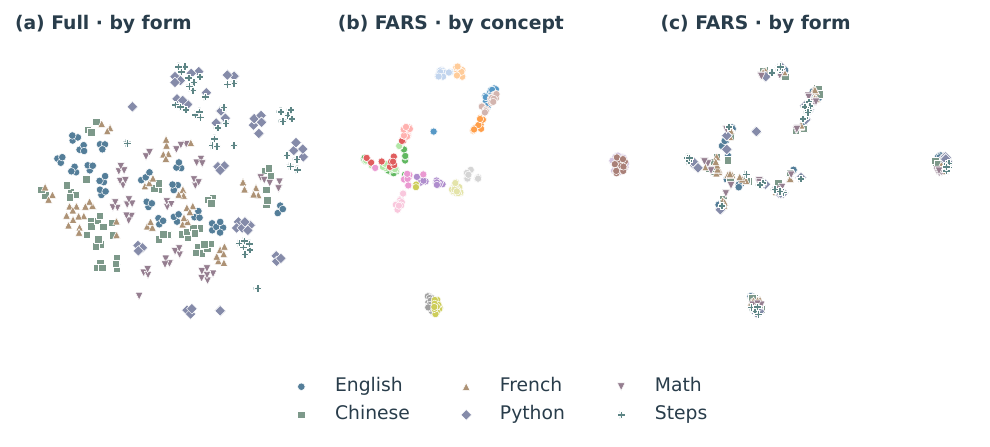}
\caption{\textbf{Full and projected representations of 324 stimuli.} Mistral-7B-v0.3, cached layer 11; t-SNE with seed 42, perplexity 30 and 2,000 iterations. (a) Full space, colored and marked by form. (b,c) The same FARS embedding colored by concept (18 colors) or marked by form. Full and FARS spaces are embedded independently; their axes and distances are not directly comparable. This visualization is descriptive, not a test of separability.}\label{fig:tsne}
\end{figure}

\section{Methodological Position: FARS vs.\ CCS, DAS, LEACE, LDA, SAEs}\label{app:method_comparison}
\paragraph{Question and scope.} How does this construction relate to other estimators? Methodological comparisons distinguish exact equivalences from empirical similarities.

\paragraph{FARS $\approx$ LDA on concept centroids.} Multi-class LDA with concepts as classes and forms/instances as within-class samples recovers nearly the same subspace as concept-centroid PCA (RSA within $0.02$ on every model, Table~\ref{tab:critical_baselines}). This empirical agreement does not make centroid PCA mathematically equivalent to LDA, whose solution also depends on within-class covariance; the novelty is the cross-format reasoning setting, the validation pipeline, and the canonical-axis/compositionality findings the construction enables.

\paragraph{vs.\ CCS / DAS / LEACE / SAE.} CCS \citep{burns2023ccs} is $1$-d unsupervised binary; FARS is $10$-d supervised multi-class. DAS \citep{geiger2024das} learns a rotation against a target causal abstraction; FARS has no target and a closed-form basis. LEACE \citep{belrose2023leace} erases the concept; FARS identifies the subspace containing it (FARS ablation and LEACE use different erasure objectives and should not be treated as equivalent). SAEs \citep{bricken2023sae} are large sparse unsupervised dictionaries, complementary: training an $18$-atom SAE within FARS on each of the $15$ models recovers $10$--$13$ atoms whose top-activating concept is unique (purity $0.17$--$0.28$ vs.\ chance $0.056$); the pattern replicates across all $4$ architecture families.

\paragraph{vs.\ function vectors, and why we report no number for them.} Function vectors \citep{todd2024function} are the natural object to compare against here: a compact, attention-head-carried representation of \emph{which task} is being demonstrated, i.e.\ an execution locus rather than a concept identity, and \citet{nadaf2026steerable} report that they steer where no logit-lens decoder reads them --- which predicts the locus verdict. We built them and ran the test, and we do not report the number, because our construction cannot support the claim. Todd et al.\ select heads by causal mediation; running per-head interventions across twenty-six models was out of scope, so we substituted a task-selectivity ranking of the same head outputs. That proxy is confounded in exactly the way that matters: the ranking promotes heads that encode the \emph{answer token} of each task, and a subspace assembled from nothing but the unembedding rows of those answer tokens scores $20.41\%$ readout energy on its own, while the selectivity ranking placed a layer-$0$ head in its top twenty. A number produced this way would report our ranking heuristic, not function vectors. We therefore leave function vectors to a causal-mediation implementation and confine our claims to the estimators in Table~\ref{tab:readout_disruption}.

\paragraph{X-FARS niche.} Eq.~\ref{eq:xfars} states a criterion for per-model orthonormal projections and a shared canonical frame across architectures with heterogeneous hidden dimensions ($D_m\in[1600, 8192]$), under an orthonormality constraint; the rectangular objective should not be conflated with square orthogonal Procrustes. The canonical frame is a portable artefact via the plug-and-play extension (Table~\ref{tab:xfars_holdout}).

\FloatBarrier
\Needspace{8\baselineskip}
\section*{2. Readout geometry and calibration}
\section{Readout Orthogonality at Matched Depth}\label{app:depth_matched}
\paragraph{Question and scope.} Does the readout contrast survive a change of reference depth? A fitted linear translator supplies the depth-matched reference; this is distinct from the weights-only raw test.

\paragraph{Per-model readout orthogonality against $W_U$.} Table~\ref{tab:readout_disruption} gives the body's per-model values (FARS at its concept-RSA peak layer against the top-$10$ right singular vectors of $W_U$).
\begin{table}[h]
\centering
\small
\setlength{\tabcolsep}{5pt}
\begin{tabular}{@{}llccccc@{}}
\toprule
Model & Family & $L/L_{\max}$ & $d_G$ & $\theta_{\min}$ & FARS energy \\
\midrule
Llama-3.1-70B-Inst & dense & 23/80 & $\mathbf{4.86}$ & $85.6^\circ$ & $\mathbf{0.16}\%$ \\
OLMo-2-7B-Inst & dense & 13/32 & $4.82$ & $84.2^\circ$ & $0.31\%$ \\
Falcon3-7B-Inst & dense & 8/27 & $4.82$ & $84.8^\circ$ & $0.31\%$ \\
Mistral-7B-v0.3 & dense & 11/31 & $4.82$ & $82.2^\circ$ & $0.36\%$ \\
Mistral-7B-Inst & dense & 10/32 & $4.81$ & $81.3^\circ$ & $0.40\%$ \\
Phi-4 & dense & 12/39 & $4.81$ & $82.3^\circ$ & $0.40\%$ \\
Yi-1.5-9B-Chat & dense & 14/47 & $4.81$ & $83.9^\circ$ & $0.35\%$ \\
Llama-3.1-8B-Inst & dense & 8/31 & $4.79$ & $81.6^\circ$ & $0.49\%$ \\
Qwen3-8B & dense & 10/35 & $4.78$ & $82.4^\circ$ & $0.51\%$ \\
Phi-3.5-mini-Inst & dense & 18/32 & $4.76$ & $81.9^\circ$ & $0.57\%$ \\
SmolLM3-3B & dense & 10/35 & $4.76$ & $81.5^\circ$ & $0.64\%$ \\
qwen3-4b & dense & 10/35 & $4.75$ & $81.7^\circ$ & $0.63\%$ \\
Qwen2.5-7B & dense & 13/27 & $4.74$ & $76.2^\circ$ & $0.97\%$ \\
Granite-3.3-8B-Inst & dense & 12/39 & $4.73$ & $75.5^\circ$ & $1.14\%$ \\
Qwen2.5-3B-Inst & dense & 22/36 & $4.72$ & $80.3^\circ$ & $0.85\%$ \\
GPT-2 XL & dense & 20/47 & $4.67$ & $76.2^\circ$ & $1.36\%$ \\
\addlinespace[2pt]
gpt-oss-20B & MoE & 7/23 & $4.84$ & $84.3^\circ$ & $0.27\%$ \\
DeepSeek-V2-Lite & MoE & 13/27 & $4.76$ & $82.0^\circ$ & $0.62\%$ \\
Mixtral-8x7B-Inst & MoE & 10/32 & $4.76$ & $74.4^\circ$ & $1.00\%$ \\
\addlinespace[2pt]
Falcon-Mamba-7B & state-space & 30/64 & $4.83$ & $84.2^\circ$ & $0.29\%$ \\
Mamba-2.8B & state-space & 35/64 & $4.77$ & $80.1^\circ$ & $0.65\%$ \\
\addlinespace[2pt]
R1-Distill-Qwen-14B & reasoning & 19/48 & $4.84$ & $85.2^\circ$ & $0.23\%$ \\
QwQ-32B & reasoning & 19/63 & $4.79$ & $77.5^\circ$ & $0.67\%$ \\
R1-Distill-Llama-8B & reasoning & 20/32 & $4.77$ & $74.4^\circ$ & $0.95\%$ \\
R1-Distill-Qwen-7B & reasoning & 13/28 & $4.60$ & $43.0^\circ$ & $5.78\%$ \\
\addlinespace[2pt]
DeBERTa-v3-large & encoder-MLM & 15/24 & $4.74$ & $80.0^\circ$ & $0.81\%$ \\
\bottomrule
\end{tabular}
\caption{FARS versus the top-$10$ readout channels of $W_U$, per model, across 26 LLMs (four architecture families plus reasoning-tuned variants). $d_G$ is the Grassmann geodesic distance (ceiling $\sqrt{10}\pi/2 = 4.97$), $\theta_{\min}$ the smallest principal angle, and the last column the fraction of FARS energy inside the top-$10$ readout span. $L/L_{\max}$ is the FARS layer as a fraction of depth; every model sits in the middle band, which matters because a near-final layer would be readout-adjacent for reasons unrelated to the concept subspace. Distances span $4.60$--$4.86$ and the minimum principal angle exceeds $74^\circ$ in 25 of 26 models. The single outlier, R1-Distill-Qwen-7B, is the model our chain-of-thought analysis independently flags (\S\ref{sec:reasoning}); its loading falls to the Haar floor once the readout is pulled back to its FARS layer (App.~\ref{app:depth_matched}).}
\label{tab:readout_disruption}
\end{table}


\paragraph{Why a depth-matched control is needed.} The body compares a layer-$\ell$
subspace with the top-$k$ right singular vectors of $W_U$. Because every layer
between $\ell$ and the unembedding transforms the residual stream, a sceptic can
read ``far from $W_U$'' as ``not at the last layer'' rather than ``not a readout
channel'': the structural controls of Table~\ref{tab:method_8way} all sit
within a few tenths of a percent of the concept estimators, and $d_G$ to $W_U$
falls monotonically with depth (Fig.~\ref{fig:layer_loc}). The control here
removes the depth confound in two steps.

\paragraph{Pulling the readout back to layer $\ell$.} On 29k--59k tokens
of generic text per model (WikiText-103, held-out $10\%$) we fit a ridge translator
$\mathbf{h}_L \approx A_\ell \mathbf{h}_\ell + \mathbf{b}$ from the block-$\ell$
residual stream to the final block's (held-out $R^2$ 0.23--0.63,
cosine 0.53--0.97), the linear tuned-lens construction of
\citet{belrose2023tunedlens}. The readout channels \emph{at layer $\ell$} are then the
top-$k$ right singular vectors of $W_U\,\mathrm{diag}(\boldsymbol\gamma)\,A_\ell$
($\boldsymbol\gamma$ the final-norm gain; mean-removal is included for LayerNorm
models). A subspace that later layers rotate into the readout will load on these
channels even though it is far from $W_U$ itself.

\paragraph{This also answers the final-norm question for the raw test.} The
primary test of \S\ref{sec:readout_control} uses the right singular vectors of
$W_U$ itself, whereas the residual stream is rescaled by the final LayerNorm or
RMSNorm before it reaches $W_U$, so the raw test's channels live in the
post-norm frame. One could object that the orthogonality we report is an
artefact of ignoring that affine step. It is not: the pulled-back test on this
page folds in exactly that step --- the diagonal gain $\boldsymbol\gamma$, and
mean-removal where the model uses LayerNorm --- and returns the same verdict,
with the concept subspace at the Haar floor and the same-layer next-token
control an order of magnitude above it. Including the normalisation makes the
test \emph{stricter}, not more permissive, and the verdict survives it.

\paragraph{A positive control at the same layer, built the same way.} FARS is
concept-centroid PCA. We build the identical estimator organised by output
identity instead: group the same layer-$\ell$ token states by the token the
model goes on to predict (top-$200$ predicted tokens, $\ge 20$ occurrences each),
average within class, and take the top-$10$ PCA of the class centroids
(\emph{next-token PCA}). It is matched in layer, rank, estimator, and corpus
with FARS; only the grouping variable differs. The encoder MLM (DeBERTa-v3) is
omitted here because its readout is a masked-token head rather than a next-token
unembedding, so the pull-back is not defined in the same way.

\begin{table}[h]
\centering
\scriptsize
\setlength{\tabcolsep}{3.5pt}
\begin{tabular}{@{}l c c ccc ccccc@{}}
\toprule
 & & & \multicolumn{3}{c}{energy \% in top-$10$ of $W_U$} & \multicolumn{5}{c}{energy \% in top-$10$ of $W_U\,\mathrm{diag}(\gamma)A_\ell$} \\
\cmidrule(lr){4-6} \cmidrule(lr){7-11}
Model & $\ell/L$ & $R^2$ & FARS & NextTok$_\ell$ & Last-PCA & FARS & LDA & FullPCA & Haar & NextTok$_\ell$ \\
\midrule
Phi-4                  & 12/39 & 0.40 & 0.40 & 1.19 & 4.29 & 0.22 & 0.25 & 0.61 & 0.26 & $\mathbf{6.88}$ \\
Llama-3.1-70B-Inst     & 22/79 & 0.37 & 0.16 & 0.72 & 1.65 & 0.19 & 0.19 & 0.45 & 0.10 & $\mathbf{5.72}$ \\
Mistral-7B-Inst        & 9/31 & 0.34 & 0.40 & 6.57 & 4.58 & 0.37 & 0.29 & 1.03 & 0.25 & $\mathbf{8.09}$ \\
Falcon-Mamba-7B        & 29/63 & 0.50 & 0.28 & 0.45 & 0.31 & 0.32 & 0.28 & 0.61 & 0.22 & $\mathbf{6.17}$ \\
Llama-3.1-8B-Inst      & 8/31 & 0.44 & 0.49 & 0.84 & 4.51 & 0.29 & 0.24 & 0.52 & 0.30 & $\mathbf{5.26}$ \\
SmolLM3-3B             & 10/35 & 0.46 & 0.64 & 1.41 & 2.91 & 0.52 & 0.49 & 1.10 & 0.43 & $\mathbf{9.67}$ \\
Mistral-7B             & 11/31 & 0.37 & 0.36 & 6.56 & 5.78 & 0.44 & 0.33 & 1.05 & 0.21 & $\mathbf{8.08}$ \\
OLMo-2-7B-Inst         & 12/31 & 0.25 & 0.31 & 0.91 & 4.71 & 0.31 & 0.29 & 0.57 & 0.24 & $\mathbf{5.37}$ \\
R1-Distill-Qwen-14B    & 18/47 & 0.38 & 0.23 & 0.60 & 0.95 & 0.22 & 0.24 & 0.48 & 0.18 & $\mathbf{3.77}$ \\
Mamba-2.8B             & 34/63 & 0.58 & 0.65 & 3.06 & 1.95 & 0.40 & 0.36 & 0.62 & 0.37 & $\mathbf{6.67}$ \\
R1-Distill-Llama-8B    & 19/31 & 0.48 & 0.95 & 1.74 & 1.92 & 0.43 & 0.23 & 0.91 & 0.21 & $\mathbf{6.58}$ \\
Granite-3.3-8B-Inst    & 12/39 & 0.23 & 1.14 & 2.79 & 2.43 & 0.27 & 0.26 & 0.71 & 0.28 & $\mathbf{4.05}$ \\
QwQ-32B                & 19/63 & 0.33 & 0.67 & 2.78 & 2.63 & 0.41 & 0.39 & 0.71 & 0.21 & $\mathbf{5.70}$ \\
Qwen2.5-7B             & 13/27 & 0.46 & 0.97 & 3.42 & 7.66 & 0.68 & 0.42 & 1.49 & 0.33 & $\mathbf{9.34}$ \\
Mixtral-8x7B-Inst      & 9/31 & 0.37 & 1.00 & 5.22 & 2.61 & 0.53 & 0.44 & 1.01 & 0.22 & $\mathbf{7.14}$ \\
Yi-1.5-9B-Chat         & 14/47 & 0.43 & 0.35 & 1.17 & 2.85 & 0.37 & 0.32 & 0.58 & 0.27 & $\mathbf{4.84}$ \\
Falcon3-7B-Inst        & 8/27 & 0.38 & 0.31 & 0.41 & 3.50 & 0.65 & 0.52 & 1.05 & 0.26 & $\mathbf{6.19}$ \\
Qwen3-8B               & 10/35 & 0.41 & 0.51 & 0.88 & 2.58 & 0.42 & 0.40 & 0.59 & 0.24 & $\mathbf{3.96}$ \\
R1-Distill-Qwen-7B     & 12/27 & 0.47 & 5.78 & 6.22 & 2.10 & 0.69 & 0.68 & 0.96 & 0.33 & $\mathbf{6.21}$ \\
qwen3-4b               & 10/35 & 0.41 & 0.63 & 1.08 & 2.36 & 0.70 & 0.56 & 0.94 & 0.33 & $\mathbf{6.05}$ \\
Phi-3.5-mini-Inst      & 17/31 & 0.61 & 0.57 & 2.38 & 6.35 & 0.49 & 0.33 & 0.87 & 0.29 & $\mathbf{3.88}$ \\
Qwen2.5-3B-Inst        & 21/35 & 0.43 & 0.81 & 1.41 & 1.74 & 0.73 & 0.63 & 1.31 & 0.59 & $\mathbf{5.82}$ \\
gpt-oss-20B            & 7/23 & 0.44 & 0.27 & 0.32 & 5.44 & 0.38 & 0.54 & 0.62 & 0.33 & $\mathbf{2.86}$ \\
GPT-2 XL               & 20/47 & 0.63 & 1.36 & 5.27 & 5.85 & 0.65 & 0.65 & 0.90 & 0.60 & $\mathbf{4.20}$ \\
DeepSeek-V2-Lite       & 12/26 & 0.43 & 0.62 & 1.17 & 5.71 & 0.76 & 0.51 & 1.13 & 0.55 & $\mathbf{4.60}$ \\
\bottomrule
\end{tabular}
\caption{Readout loading at matched depth across 25 models. Left block: the
body's test (readout $=$ top-$10$ right singular vectors of $W_U$). Right block:
the same subspaces against the readout channels pulled back to the FARS layer
$\ell$. Under the pulled-back readout the concept estimators stay at the Haar
floor (FARS 0.46\% vs.\ random 0.30\%; the Haar expectation is
$k/D$), whereas the same-layer, same-estimator next-token control loads
5.88\% --- 12.9$\times$ FARS --- and exceeds all four published
estimators in 25/25 models. $R^2$ is the held-out fit of the translator.}
\label{tab:depth_matched}
\end{table}

\paragraph{Reading.} Two things follow. First, the test discriminates
\emph{function} at fixed depth: with layer, rank, estimator, and corpus held
constant, grouping by output token yields a readout-adjacent subspace and
grouping by concept does not. Second, the pulled-back readout also repairs the
one anomaly of the body's table: R1-Distill-Qwen-7B's last-input-token FARS
loads 5.78\% on the raw $W_U$
channels but only 0.69\% on
the layer-$\ell$ channels, i.e.\ its apparent readout-adjacency was a depth
artefact of comparing against $W_U$ directly. FARS's residual enrichment over
the Haar floor under the raw test (0.79\% vs.\ $k/D$) likewise shrinks
to 0.46\% once depth is matched.


\begin{figure}[h]
\centering
\includegraphics[width=\linewidth]{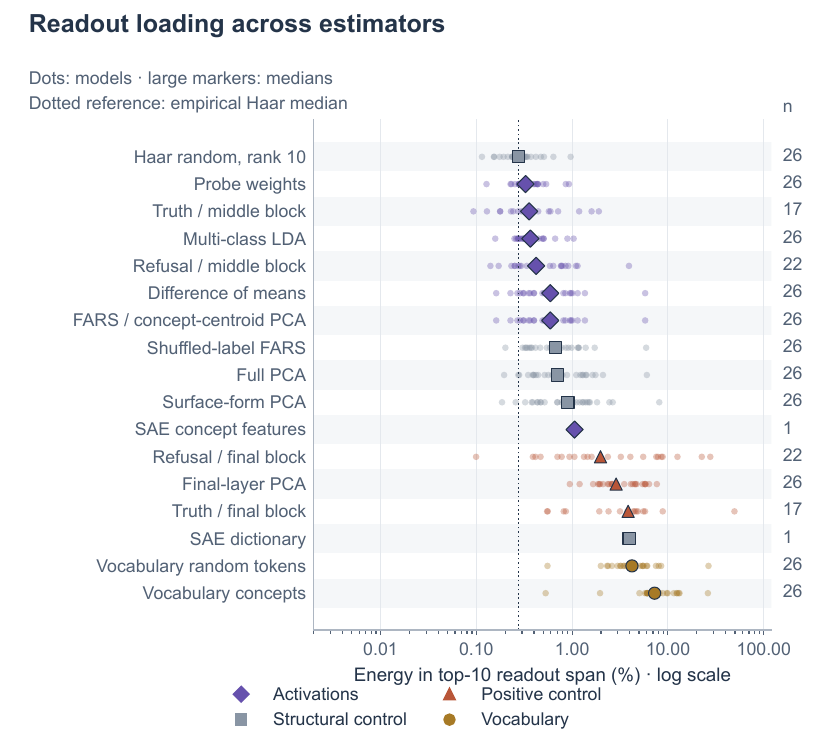}
\caption{\textbf{Readout loading across subspace constructions.} Each small dot is one model and the larger role-specific marker is the median. The statistic is energy in the top-$10$ right singular span of $W_U$, on a logarithmic axis; the dotted reference is the median empirical Haar control. Sample sizes appear at right. Symbols distinguish activation-derived estimators, structural controls, positive controls, and vocabulary-derived constructions. Pulled-back readout results use a different reference and appear separately in App.~\ref{app:depth_matched}. Published directions use the same paired cohorts as Fig.~\ref{fig:published_depth}: any available AUC below $0.9$ excludes a pair; missing final-block AUCs remain flagged there. The pooled Haar median is a visual reference, not a model-specific significance threshold.}
\label{fig:verdict_landscape}
\end{figure}
\section{Readout-Rank Sensitivity}\label{app:rank_sweep}
\paragraph{Question and scope.} How sensitive is the result to readout rank? The reported comparisons cover the stated sampled ranks, not every possible decoder. At full ambient rank, energy is one for any normalized basis.


\paragraph{Is the concept subspace read out at \emph{any} rank?}
Fixing $k=10$ invites the objection that concept identity might be projected
through lower-variance singular directions of $W_U$ --- say directions
$50$--$200$ --- which a top-$10$ comparison would not see. We examine this by
computing the \emph{full} right-singular basis $V$ of $W_U$ and, for every rank
$k$, the fraction of a subspace's energy inside $\mathrm{span}(V_{:,1:k})$,
\[
  E_S(k) \;=\; \tfrac{1}{m}\textstyle\sum_{i=1}^{m} \big\| V_{:,1:k}^{\top} s_i \big\|^2 ,
  \qquad S = \{s_1,\dots,s_m\} \text{ orthonormal}.
\]
For a Haar-random rank-$m$ subspace $\mathbb{E}[E_S(k)] = k/D$ exactly, so
$E_S(k)/(k/D)$ is an enrichment over chance with a known null of $1$. We report
FARS against the final-layer PCA (the actual readout pathway) and a Haar control.

\begin{table}[t]
\centering
\small
\setlength{\tabcolsep}{5pt}
\begin{tabular}{@{}lc ccc c@{}}
\toprule
 & & \multicolumn{3}{c}{enrichment $E(k)/(k/D)$, FARS / final-layer} & Haar \\
\cmidrule(lr){3-5}
Model & $D$ & $k{=}10$ & $k{=}32$ & $k{=}100$ & $k{=}10$ \\
\midrule
DeBERTa-v3-large & 1024 & 0.83 / 1.2 & 0.92 / 1.1 & 0.93 / 1.1 & 1.03 \\
DeepSeek-V2-Lite & 2048 & 1.26 / 11.7 & 1.12 / 6.3 & 1.06 / 3.2 & 1.00 \\
Falcon-Mamba-7B & 4096 & 1.17 / 16.8 & 1.36 / 6.8 & 1.23 / 3.4 & 1.01 \\
Falcon3-7B-Inst & 3072 & 0.96 / 10.7 & 0.95 / 5.8 & 1.00 / 2.9 & 0.98 \\
gpt-oss-20B & 2880 & 0.77 / 15.7 & 0.88 / 10.3 & 0.83 / 5.4 & 1.00 \\
GPT-2 XL & 1600 & 2.18 / 9.4 & 1.61 / 5.4 & 1.26 / 2.8 & 1.01 \\
Granite-3.3-8B-Inst & 4096 & 4.68 / 9.9 & 2.44 / 7.0 & 1.57 / 4.4 & 0.98 \\
Llama-3.1-70B-Inst & 8192 & 1.32 / 13.6 & 1.18 / 7.9 & 1.04 / 3.7 & 1.02 \\
Llama-3.1-8B-Inst & 4096 & 2.01 / 18.5 & 1.58 / 11.9 & 1.34 / 5.0 & 1.02 \\
Mamba-2.8B & 2560 & 1.66 / 5.0 & 1.34 / 2.3 & 1.05 / 0.9 & 1.00 \\
Mistral-7B-Inst & 4096 & 1.64 / 18.7 & 1.31 / 9.3 & 1.13 / 4.8 & 1.01 \\
Mistral-7B & 4096 & 1.49 / 23.7 & 1.22 / 12.1 & 1.11 / 6.1 & 1.01 \\
Mixtral-8x7B-Inst & 4096 & 4.10 / 10.7 & 2.47 / 7.5 & 1.82 / 3.5 & 0.99 \\
OLMo-2-7B-Inst & 4096 & 1.26 / 19.3 & 1.25 / 9.9 & 1.25 / 4.7 & 0.98 \\
Phi-3.5-mini-Inst & 3072 & 1.74 / 19.5 & 1.90 / 9.8 & 1.40 / 4.6 & 1.05 \\
Phi-4 & 5120 & 2.04 / 21.9 & 1.55 / 15.7 & 1.15 / 8.2 & 1.02 \\
Qwen2.5-3B-Inst & 2048 & 1.67 / 3.6 & 1.45 / 2.4 & 1.29 / 1.7 & 1.03 \\
Qwen2.5-7B & 3584 & 3.48 / 27.4 & 2.09 / 10.4 & 1.53 / 3.9 & 0.99 \\
Qwen3-4B & 2560 & 1.62 / 6.0 & 1.33 / 4.1 & 1.13 / 2.7 & 0.97 \\
Qwen3-8B & 4096 & 2.08 / 10.6 & 1.71 / 9.5 & 1.35 / 5.1 & 1.00 \\
QwQ-32B & 5120 & 3.41 / 13.5 & 2.05 / 6.8 & 1.41 / 2.9 & 1.00 \\
R1-Distill-Llama-8B & 4096 & 3.89 / 7.9 & 2.44 / 6.1 & 2.00 / 2.8 & 1.01 \\
R1-Distill-Qwen-14B & 5120 & 1.16 / 4.8 & 1.37 / 3.7 & 1.20 / 2.0 & 0.98 \\
R1-Distill-Qwen-7B & 3584 & 20.73 / 7.5 & 7.46 / 3.5 & 3.08 / 1.5 & 1.00 \\
SmolLM3-3B & 2048 & 1.31 / 6.0 & 1.24 / 4.5 & 1.29 / 2.4 & 1.02 \\
Yi-1.5-9B-Chat & 4096 & 1.44 / 11.7 & 1.12 / 8.3 & 1.03 / 4.8 & 0.99 \\
\bottomrule
\end{tabular}
\caption{Readout enrichment at three ranks for the concept subspace (FARS) and the readout pathway (top-$10$ PCA of the final layer), across 26 models. The Haar column is the null check: a random rank-$10$ subspace has expected enrichment $1.00$; the sampled controls are close (max deviation $0.046$). Enrichment generally approaches the Haar expectation as rank grows; the magnitude and ordering vary across models. The same tabulated comparisons are visualized in Fig.~\ref{fig:rank_sweep}.}
\label{tab:rank_sweep}
\end{table}

The reported results show separation at the reported ranks, with exceptions. Final-layer PCA is more enriched than FARS in $25/26$ models at $k=10$ and $k=32$, and in $24/26$ at $k=100$. R1-Distill-Qwen-7B reverses the comparison at all three ranks; Mamba-2.8B also reverses it at $k=100$ ($1.05$ versus $0.90$, both near the null). The results do not establish zero readout information at any rank. At $k=D$, both energy and enrichment equal $1$ for every subspace, so a separation cannot hold across the entire spectrum. The reported table measures concentration at three ranks, with the Haar expectation providing the reference scale; the comparison is limited to these reported ranks.

\begin{figure}[t]
\centering
\includegraphics[width=\linewidth]{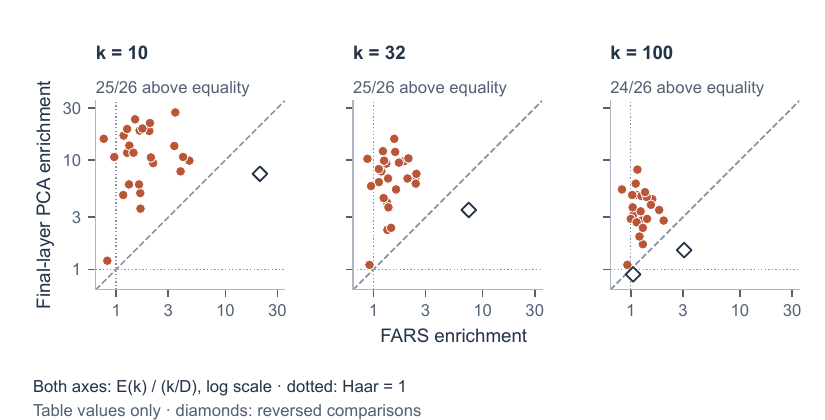}
\caption{\textbf{Paired readout enrichment at three reported ranks.} Each point is one of $26$ models, plotted from the rounded values in Table~\ref{tab:rank_sweep}. Both axes use the same logarithmic scale. The dashed diagonal marks equal enrichment; dotted lines mark the Haar expectation of $1$. Diamonds identify models where final-layer PCA is below FARS: R1-Distill-Qwen-7B at all three ranks, and Mamba-2.8B additionally at $k=100$. No error bars or full-spectrum trajectories are inferred from these table values.}
\label{fig:rank_sweep}
\end{figure}

\section{Published Concept Directions Under the Test}\label{app:published}
\paragraph{Question and scope.} How do externally motivated directions behave under the diagnostic? Readout comparisons depend on separability and layer matching; excluded and missing-AUC cases must remain visible.


\paragraph{Setup.} The body's verdict covers subspaces we extract. To test
directions the literature actually publishes, we rebuild two of them on each
model, at the same block as FARS and from the last input token: the
\emph{refusal direction} of \citet{arditi2024refusal} (mean of 200
AdvBench harmful instructions minus mean of as many Alpaca harmless ones,
chat-templated) and the \emph{truth direction} of \citet{marks2024geometry}
(mean of true minus mean of false city--country statements). Each is a unit
vector; we report its energy $\|R\mathbf{v}\|^2$ in the top-$10$ span of $W_U$
and in the readout channels pulled back to the block (App.~\ref{app:depth_matched}),
whose Haar expectation is $k/D$. A held-out AUC of the one-dimensional
projection checks that the direction does separate its classes; entries with
AUC $<0.9$ are parenthesised and excluded from the summary. The same
directions built at the \emph{final} block (last two columns) show what a
readout-adjacent version of each looks like.

\begin{table}[h]
\centering
\scriptsize
\setlength{\tabcolsep}{3pt}
\begin{tabular}{@{}l c cc c cc ccc cc@{}}
\toprule
 & & \multicolumn{2}{c}{AUC} & Haar & \multicolumn{2}{c}{vs.\ $W_U$} & \multicolumn{3}{c}{vs.\ $W_U\!\to\!\ell$} & \multicolumn{2}{c}{final block, vs.\ $W_U$} \\
\cmidrule(lr){3-4} \cmidrule(lr){6-7} \cmidrule(lr){8-10} \cmidrule(lr){11-12}
Model & $\ell$ & ref. & truth & $k/D$ & refusal & truth & refusal & truth & NextTok$_1$ & refusal & truth \\
\midrule
DeepSeek-V2-Lite       & 13 & 1.00 & 1.00 & 0.49 & 0.59 & 1.42 & 1.04 & 1.01 & 1.48 & 7.55 & 8.37 \\
Falcon-Mamba-7B        & 29 & 1.00 & 1.00 & 0.24 & 0.40 & 0.36 & 2.12 & 0.11 & 2.81 & 1.06 & 5.77 \\
Falcon3-7B-Inst        & 8 & 1.00 & 1.00 & 0.33 & 0.41 & 0.59 & 1.06 & 0.18 & 0.02 & 1.33 & 5.54 \\
gpt-oss-20B            & 7 & 1.00 & 1.00 & 0.35 & 0.29 & 0.24 & 0.51 & 0.15 & 2.61 & 4.05 & 3.84 \\
GPT-2 XL               & 20 & 1.00 & 1.00 & 0.62 & 3.92 & 0.33 & 4.28 & 0.28 & 0.30 & 7.86 & 1.44 \\
Granite-3.3-8B-Inst    & 12 & 1.00 & 1.00 & 0.24 & 0.43 & 1.90 & 0.19 & 0.16 & 0.05 & 0.41 & 0.82 \\
Llama-3.1-8B-Inst      & 8 & 1.00 & 1.00 & 0.24 & 0.27 & 1.18 & 0.30 & 0.22 & 4.80 & 3.22 & 4.86 \\
Mamba-2.8B             & 34 & 1.00 & 1.00 & 0.39 & 1.10 & 0.36 & 1.13 & 0.20 & 6.45 & 12.57 & 8.82 \\
Mistral-7B-Inst        & 9 & 1.00 & 1.00 & 0.24 & 0.77 & 0.18 & 0.42 & 0.14 & 0.01 & 0.78 & 3.16 \\
Mistral-7B             & 11 & 1.00 & 1.00 & 0.24 & 0.85 & 0.13 & 2.73 & 0.12 & 0.01 & 22.54 & 2.40 \\
OLMo-2-7B-Inst         & 12 & 1.00 & 1.00 & 0.24 & 0.17 & 0.09 & 0.47 & 0.10 & 5.05 & 0.94 & 4.50 \\
Phi-3.5-mini-Inst      & 17 & 1.00 & 1.00 & 0.33 & 0.25 & 0.30 & 0.22 & 0.11 & 7.28 & 27.65 & 49.43 \\
Phi-4                  & 12 & 1.00 & 0.50 & 0.20 & 0.28 & (0.76)$^\dagger$ & 0.20 & (0.12)$^\dagger$ & 0.04 & 1.45 & (1.92)$^\dagger$ \\
Qwen2.5-3B-Inst        & 21 & 1.00 & 1.00 & 0.49 & 1.14 & 0.71 & 0.24 & 0.52 & 8.22 & 1.60 & 4.35 \\
Qwen2.5-7B             & 13 & 0.99 & 1.00 & 0.28 & 0.34 & 0.88 & 0.58 & 0.29 & 0.03 & 5.52 & 22.47 \\
Qwen3-4B               & 10 & 1.00 & 0.98 & 0.39 & 0.91 & 0.27 & 0.40 & 0.40 & 6.45 & 0.70 & 1.92 \\
Qwen3-8B               & 10 & 1.00 & 0.99 & 0.24 & 0.23 & 0.23 & 0.23 & 0.20 & 3.98 & 0.47 & 0.86 \\
R1-Distill-Llama-8B    & 19 & 1.00 & 1.00 & 0.24 & 0.79 & 1.60 & 0.16 & 0.75 & 7.66 & 8.53 & 3.77 \\
r1-distill-qwen-14b    & 19 & 1.00 & 1.00 & 0.20 & 0.25 & 0.18 & 0.64 & 0.19 & 1.17 & 0.39 & 0.56 \\
R1-Distill-Qwen-7B     & 12 & 1.00 & 0.52 & 0.28 & 0.76 & (27.52)$^\dagger$ & 1.22 & (0.28)$^\dagger$ & 0.06 & 8.81 & (14.18)$^\dagger$ \\
SmolLM3-3B             & 10 & 1.00 & 1.00 & 0.49 & 0.64 & 0.57 & 0.38 & 0.44 & 12.13 & 0.10 & 4.67 \\
Yi-1.5-9B-Chat         & 14 & 1.00 & 1.00 & 0.24 & 0.14 & 0.44 & 0.17 & 0.18 & 13.36 & 2.36 & 0.55 \\
\bottomrule
\end{tabular}
\caption{Published concept directions under the readout test (22 models; energies in \%).
At the FARS block both the refusal and the truth direction sit at the Haar
floor against $W_U$ (medians 0.42\% and 0.36\%, $k/D \approx 0.26\%$)
and against the pulled-back readout (0.44\% and 0.20\%), while the
first next-token direction at the same block loads 2.71\%. The same two
directions built at the final block load 1.98\% and 4.09\%
(medians): the middle-layer construction has low linear readout loading and becomes readout-adjacent only
where the residual stream is about to be decoded. $^\dagger$AUC $<0.9$.}
\label{tab:published}
\end{table}

\paragraph{Reading.} Two published, behaviourally validated directions --- one
safety-relevant, one truthfulness-relevant --- receive the same verdict as the
reasoning-concept estimators: low linear readout loading, in
16/22 (refusal) and 19/20 (truth) models under the pulled-back
readout (criterion: energy $< 3k/D$). Their final-block counterparts are
readout-adjacent in 17/22 and 17/20, so the test
is not returning a constant for difference-of-means directions either.

\begin{figure}[h]
\centering
\includegraphics[width=\linewidth]{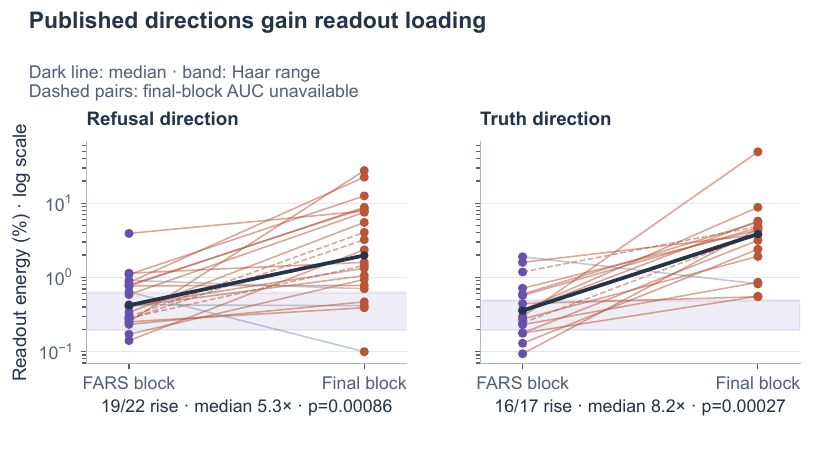}
\caption{\textbf{The same published direction changes verdict with extraction
depth.} Each line is one model: the energy its refusal (left) or truth (right)
direction carries in the top-$10$ span of $W_U$, read off the FARS block versus
the final block. Estimator, concept, stimuli and model are identical along each
line; only the depth changes. Band: the range of per-model Haar expectations $k/D$; dark line: population median. Grey lines fall. Directions with available held-out AUC $<0.9$ at either block are omitted. Dashed pairs lack a final-block AUC (three refusal, two truth) and are retained with this qualification.}
\label{fig:published_depth}
\end{figure}

This is the cleanest available demonstration that the test discriminates,
because the paired comparison holds everything fixed except depth: same
estimator, same concept, same model, same stimuli (Fig.~\ref{fig:published_depth}).
Readout energy rises from
the FARS block to the final block in 19/22 models for refusal
(median 5.3$\times$, sign test $p = 0.0009$) and
16/17 for truth (median 8.2$\times$,
$p = 0.0003$). Final-block AUC is available for $19/22$ models. Refusal AUC remains at least $0.98$ throughout that subset. Four truth directions fall below $0.9$: DeepSeek-V2-Lite ($0.8988$), GPT-2 XL ($0.6828$), Qwen2.5-7B ($0.8692$), and R1-Distill-Qwen-7B ($0.5948$). The last also fails at the FARS block, as does Phi-4, whose final-block AUC is unavailable. Filtering on all available AUC values leaves the $22$ refusal and $17$ truth pairs shown, including respectively three and two pairs without final-block AUC; these missing checks limit the classifier-quality control. A direction is not readout-aligned or not by virtue of
which concept it encodes; it is readout-aligned to the extent that it is read
off the stream where the stream is about to be decoded. The test therefore
transfers, without modification, to the directions steering and safety work
actually manipulate.

\subsection{A vocabulary-defined construction, which the test flags as readout}\label{app:vocab_anchored}

\paragraph{A readout-side construction from the same literature.} The estimators of
Table~\ref{tab:method_8way} are all built from \emph{activations}, and all land on the
locus side. A diagnostic whose verdict never changes across the constructions people use
is hard to act on, so we add the other kind of ``concept direction'' the literature also
writes down --- one defined in the model's \emph{vocabulary}. For concept $c$ we take the
tokens diagnostic of its stimuli (tf-idf over $c$'s six surface forms against the other
concepts, top $40$), average the corresponding rows of $W_U$, and take the top-$k$ PCA of
the $18$ resulting centroids. This is the construction behind token-anchored steering and
logit-attribution readings of ``the direction of $c$''. It needs no forward pass: it is a
function of $W_U$ and the tokenizer alone.

\begin{table}[h]
\centering
\scriptsize
\setlength{\tabcolsep}{4pt}
\begin{tabular}{@{}l c cccc c@{}}
\toprule
 & & Haar & \multicolumn{3}{c}{energy \% in top-$10$ readout span} & \\
\cmidrule(lr){4-6}
Model & $D$ & $k/D$ & FARS & random-token & vocabulary-anchored & ratio \\
\midrule
Llama-3.1-70B-Inst     & 8192 & 0.12 & 0.16 & 2.37 & $\mathbf{12.47}$ & 77.3$\times$ \\
SmolLM3-3B             & 2048 & 0.49 & 0.64 & 26.49 & $\mathbf{26.11}$ & 40.7$\times$ \\
R1-Distill-Qwen-14B    & 5120 & 0.20 & 0.23 & 1.99 & $\mathbf{6.12}$ & 27.0$\times$ \\
Llama-3.1-8B-Inst      & 4096 & 0.24 & 0.49 & 4.27 & $\mathbf{12.94}$ & 26.4$\times$ \\
gpt-oss-20B            & 2880 & 0.35 & 0.27 & 5.99 & $\mathbf{6.95}$ & 25.9$\times$ \\
OLMo-2-7B-Inst         & 4096 & 0.24 & 0.31 & 4.60 & $\mathbf{6.47}$ & 21.0$\times$ \\
Phi-4                  & 5120 & 0.20 & 0.40 & 3.82 & $\mathbf{8.17}$ & 20.5$\times$ \\
Mistral-7B             & 4096 & 0.24 & 0.36 & 5.45 & $\mathbf{7.04}$ & 19.3$\times$ \\
Mistral-7B-Inst        & 4096 & 0.24 & 0.40 & 5.42 & $\mathbf{7.03}$ & 17.6$\times$ \\
Phi-3.5-mini-Inst      & 3072 & 0.33 & 0.57 & 8.01 & $\mathbf{9.70}$ & 17.2$\times$ \\
Yi-1.5-9B-Chat         & 4096 & 0.24 & 0.35 & 2.61 & $\mathbf{6.03}$ & 17.1$\times$ \\
Qwen3-8B               & 4096 & 0.24 & 0.51 & 2.32 & $\mathbf{7.91}$ & 15.6$\times$ \\
R1-Distill-Llama-8B    & 4096 & 0.24 & 0.95 & 4.33 & $\mathbf{13.20}$ & 13.9$\times$ \\
QwQ-32B                & 5120 & 0.20 & 0.67 & 3.22 & $\mathbf{8.83}$ & 13.3$\times$ \\
Qwen2.5-7B             & 3584 & 0.28 & 0.97 & 6.08 & $\mathbf{12.32}$ & 12.7$\times$ \\
Qwen3-4B               & 2560 & 0.39 & 0.63 & 2.98 & $\mathbf{6.12}$ & 9.7$\times$ \\
Mamba-2.8B             & 2560 & 0.39 & 0.65 & 5.14 & $\mathbf{6.08}$ & 9.4$\times$ \\
DeBERTa-v3-large       & 1024 & 0.98 & 0.81 & 7.50 & $\mathbf{7.47}$ & 9.2$\times$ \\
Qwen2.5-3B-Inst        & 2048 & 0.49 & 0.85 & 3.43 & $\mathbf{7.57}$ & 8.9$\times$ \\
GPT-2 XL               & 1600 & 0.62 & 1.36 & 8.49 & $\mathbf{11.51}$ & 8.5$\times$ \\
DeepSeek-V2-Lite       & 2048 & 0.49 & 0.62 & 4.15 & $\mathbf{5.04}$ & 8.2$\times$ \\
Falcon3-7B-Inst        & 3072 & 0.33 & 0.31 & 3.18 & $\mathbf{1.96}$ & 6.3$\times$ \\
Mixtral-8x7B-Inst      & 4096 & 0.24 & 1.00 & 3.71 & $\mathbf{6.00}$ & 6.0$\times$ \\
Granite-3.3-8B-Inst    & 4096 & 0.24 & 1.14 & 3.51 & $\mathbf{5.75}$ & 5.0$\times$ \\
Falcon-Mamba-7B        & 4096 & 0.24 & 0.29 & 0.55 & $\mathbf{0.53}$ & 1.9$\times$ \\
R1-Distill-Qwen-7B     & 3584 & 0.28 & 5.78 & 5.56 & $\mathbf{9.95}$ & 1.7$\times$ \\
\bottomrule
\end{tabular}
\caption{Vocabulary-anchored concept directions against the readout basis, 26 models.
The construction lies in the row space of $W_U$ by design, so some loading is guaranteed;
the honest baseline is therefore the \emph{random-token} column, which repeats the identical
construction with random token sets of the same size ($20$ draws). Concept-diagnostic tokens
load 8.43\% on average against 5.20\% for random tokens and
0.80\% for FARS at its own layer --- a 16.9$\times$ separation from FARS,
in the same direction in 26/26 models, with per-model ratios spanning
1.7$\times$--77.3$\times$. The two subspaces are not the same object:
their mutual energy overlap is 0.56\%.}
\label{tab:vocab_anchored}
\end{table}

\paragraph{Reading.} The split the test reports is not between our estimators and other
people's. It is between \emph{where the direction is defined}: a concept direction read off
the activations has low readout loading, while the vocabulary-derived construction has greater loading. Both are called ``the direction of concept $c$'' in published work, and an
ablation of either will disturb the task; only the second is disturbed \emph{because it is
the channel the unembedding reads}, which is the inference the test exists to block.

Two qualifications keep this honest. First, most of the effect comes from the
\emph{anchoring} rather than from the concept selection: the random-token control, which
repeats the construction with arbitrary tokens, reaches 5.20\% against
8.43\% for concept-diagnostic tokens and exceeds it outright in
4 of 26 models. What makes these directions readout-adjacent is that they
are assembled from rows of $W_U$ at all, not which rows. Second, the effect size varies widely
(1.7$\times$--77.3$\times$). The narrowest ratio is
R1-Distill-Qwen-7B, where it is the \emph{numerator's} denominator that is unusual --- that
model's FARS is the one already flagged in Table~\ref{tab:readout_disruption}. More
informative is Falcon-Mamba-7B, whose vocabulary-anchored subspace sits at
0.53\% against a Haar floor of 0.24\%: there the construction is
\emph{not} readout-adjacent and the test says so. The verdict is a per-model measurement, not
a law.

\subsection{Sparse-autoencoder feature directions}\label{app:sae}

\paragraph{Setup.} Sparse autoencoders \citep{bricken2023sae,templeton2024scaling} are the
estimator the field now reaches for first, so the test has to say something about them. We
build the SAE analogue of FARS on the same stimuli, at the SAE's own layer: encode the cached
layer-19 residual states with the pretrained residual-stream SAE of
\texttt{Goodfire/Llama-3.1-8B-Instruct-SAE-l19} (65,536 features), score every feature by how selectively it
fires on each concept (mean activation on $c$ minus mean on the rest, in pooled s.d.\ units),
take each concept's top-$16$ features, sum their \emph{decoder} directions weighted by
selectivity, and take the top-$10$ PCA of the $18$ concept directions. The recipe is
concept-centroid PCA with the representation swapped for SAE feature space. Our encoding
reproduces the released sparsity (mean $L_0 = 98$ against the model card's $91$;
fraction of variance unexplained 0.59 on this out-of-distribution stimulus set), so the
comparison is against a correctly instantiated SAE rather than an assumed one.

\begin{table}[h]
\centering
\small
\begin{tabular}{@{}lccc@{}}
\toprule
Rank-$10$ subspace & $d_G$ & energy \% & $\theta_{\min}$ \\
\midrule
SAE dictionary (top-$10$ PCA of live decoder directions) & 4.590 & 3.93 & 56.3 \\
\textbf{SAE concept features} (selectivity-weighted, rank $10$) & 4.748 & 1.06 & 73.8 \\
FARS at the same layer & 4.759 & 0.96 & 75.1 \\
SAE random features (control) & 4.749 & 0.88 & 76.5 \\
\midrule
Haar expectation $k/D$ & --- & 0.24 & --- \\
\bottomrule
\end{tabular}
\caption{SAE feature directions against the top-$10$ readout channels of $W_U$ on
Llama-3.1-8B-Inst. The SAE's concept-selective features have low linear readout loading at
1.06\%, indistinguishable from FARS at the same layer (0.96\%) and from
randomly chosen features (0.88\%). The dictionary as a whole is different: the
top-$10$ PCA of all live decoder directions sits at 3.93\%, roughly
4$\times$ its own concept features --- a dictionary trained to
reconstruct the residual stream inherits whatever the readout writes into it, while the features that
carry concept identity do not.}
\label{tab:sae}
\end{table}

\paragraph{Reading.} The verdict on SAE concept features is the same as on concept-centroid
PCA, LDA, difference-of-means and probe weights: low linear readout loading. The two subspaces are
related but not identical (mutual energy overlap
26\%), so this is a genuine second estimator agreeing, not a
restatement of FARS. \emph{Scope:} one model. Of the 26 models in Table~\ref{tab:readout_disruption}
this is the only one with a published residual-stream SAE we could match exactly in both
checkpoint and layer, and we do not generalise beyond it.

\section{Layer-wise Localization of Concept Structure}\label{app:correlational_localization}
\paragraph{Question and scope.} Where is concept structure most recoverable? Peak-layer selection favors peak retrieval by construction; these curves do not independently establish a computation site.

Three independent analyses converge on middle layers as the site of format-agnostic representation before the body-level mechanistic tests are applied (Table~\ref{tab:main_results}, Figure~\ref{fig:main_results}): concept RSA peaks in middle layers and strengthens with scale ($\rho_{\max}$: $0.100 \to 0.200$; FDR-corrected); a ridge probe trained on English representations identifies the same concept in code or math at $60.3\%$ for Mistral-7B (chance $5.6\%$), jumping between GPT-2 XL ($32.7\%$) and Qwen2.5-3B ($52.8\%$); and format-agnostic neurons peak at $20$--$23\%$ in middle layers, with format-specific neurons dominating early (tokenisation) and late (generation) layers.

\begin{table}[t]
\centering
\caption{Localization of format-agnostic computation. RSA-C/F: peak concept/form RSA; Probe-W: within-form probe ceiling; Probe-X: cross-form probe (chance $5.6\%$); X/W: cross-/within-form ratio. Cross-form probing recovers $34$--$61\%$ of the ceiling, jumping between $1.6$B ($0.34$) and $3$B ($0.54$).}\label{tab:main_results}
\scriptsize
\setlength{\tabcolsep}{3pt}
\begin{tabular}{@{}lcccccc@{}}
\toprule
\textbf{Model} & \textbf{RSA-C} & \textbf{RSA-F} & \textbf{Probe-W\%} & \textbf{Probe-X\%} & \textbf{X/W} & \textbf{Agn\%} \\
\midrule
\raisebox{-0.18em}{\includegraphics[height=0.95em]{figures/logos/openai.pdf}}~GPT-2 XL & .100 & .430 & 96.6 & 32.7 & .34 & 20.0 \\
\raisebox{-0.18em}{\includegraphics[height=0.95em]{figures/logos/qwen.pdf}}~Qwen2.5-3B & .126 & .501 & 98.5 & 52.8 & .54 & 23.3 \\
\raisebox{-0.18em}{\includegraphics[height=0.95em]{figures/logos/qwen.pdf}}~Qwen2.5-7B & .150 & .485 & 99.1 & 59.8 & .60 & 19.5 \\
\raisebox{-0.18em}{\includegraphics[height=0.95em]{figures/logos/mistral.pdf}}~Mistral-7B & .200 & .529 & 98.5 & 60.3 & .61 & 20.8 \\
\raisebox{-0.18em}{\includegraphics[height=0.95em]{figures/logos/meta.pdf}}~Llama-3.1-8B & .165 & .525 & 99.4 & 53.9 & .54 & 21.1 \\
\bottomrule
\end{tabular}
\end{table}

\begin{figure}[t]
\centering
\includegraphics[width=\linewidth]{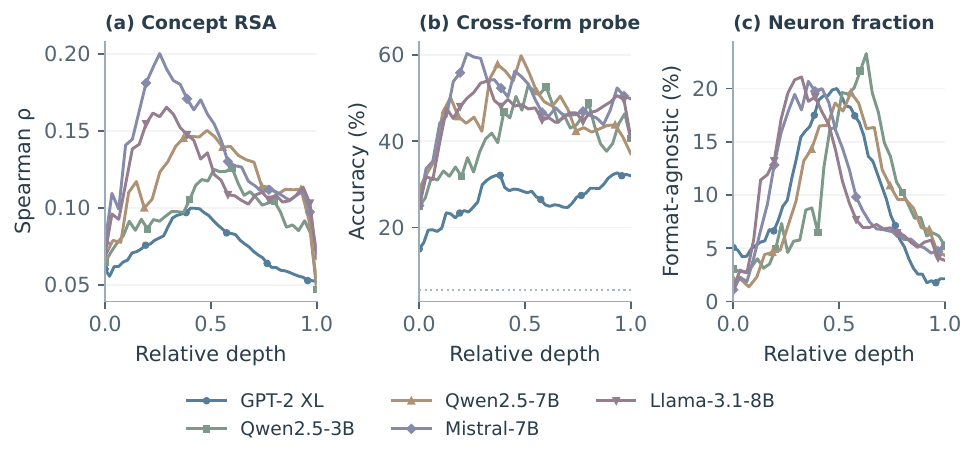}
\caption{\textbf{Layerwise representation metrics in the historical five-model baseline.} (a) Concept RSA. (b) Cross-form probe accuracy; the dotted line marks $1/18$ chance. (c) Format-agnostic neuron fraction. Depth is normalized within each model. Curves reproduce the saved per-layer results; colors and markers jointly identify models.}\label{fig:main_results}
\end{figure}

\paragraph{The prose-code boundary.} Full-activation patching across form pairs reveals a sharp asymmetry: EN$\to$Math top-$10$ overlap ($0.65$--$0.73$) is $3$--$4\times$ higher than EN$\to$Code ($0.16$--$0.22$) across five tested models, with every $95\%$ bootstrap CI above $2.4\times$. Two controls localise the cause: tokeniser Jaccard is near-zero-predictive ($r{=}0.03$, n.s.), and a declarative-Python control isolates the procedural-vs-declarative surface as accounting for $\sim 11\%$ of the gap. Full details and per-model numbers in App.~\ref{app:pydecl}.


\begin{figure}[h]
\centering
\includegraphics[width=\linewidth]{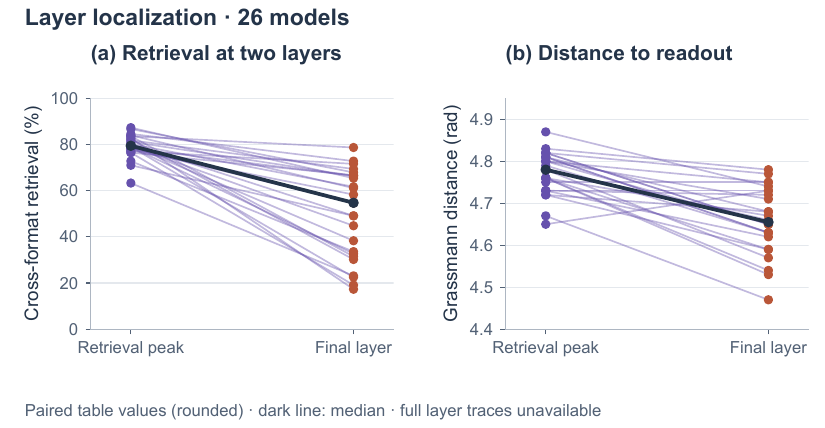}
\caption{\textbf{Retrieval peak versus final layer.} Each thin line joins two measurements from one model; the dark line joins population medians. (a) Cross-format retrieval. (b) Distance to the top-$10$ readout span. These plots use the reported rounded values in Table~\ref{tab:layer_loc}, not reconstructed layer-by-layer trajectories. Peak selection favours higher retrieval by construction.}
\label{fig:layer_loc}
\end{figure}

\paragraph{Per-layer localization.} Fig.~\ref{fig:layer_loc} compares retrieval and readout distance at the retrieval peak and final layer for each model; Table~\ref{tab:layer_loc} gives the per-model numbers.

\begin{table}[h]
\centering
\small
\begin{tabular}{@{}l cc cc cc@{}}
\toprule
Model & $L_\text{peak}/L$ & frac. & Agn\% peak & $d_G$ peak & Agn\% last & $d_G$ last \\
\midrule
Llama-3.1-8B-Inst      &   9/ 32 & 0.29 &  81.2\% & 4.79 &  66.4\% & 4.63 \\
Mistral-7B             &   9/ 32 & 0.29 &  81.8\% & 4.81 &  69.4\% & 4.63 \\
Llama-3.1-70B-Inst     &  24/ 81 & 0.30 &  87.3\% & 4.87 &  65.4\% & 4.74 \\
Mistral-7B-Inst        &  11/ 33 & 0.34 &  84.0\% & 4.81 &  61.1\% & 4.63 \\
Falcon3-7B-Inst        &  10/ 28 & 0.37 &  79.0\% & 4.82 &  32.7\% & 4.65 \\
Mixtral-8x7B-Inst      &  12/ 33 & 0.38 &  82.4\% & 4.73 &  30.2\% & 4.73 \\
Yi-1.5-9B-Chat         &  20/ 48 & 0.43 &  83.6\% & 4.82 &  31.5\% & 4.65 \\
Granite-3.3-8B-Inst    &  17/ 40 & 0.44 &  77.2\% & 4.72 &  71.6\% & 4.68 \\
R1-Distill-Llama-8B    &  14/ 33 & 0.44 &  83.6\% & 4.80 &  78.7\% & 4.75 \\
Qwen2.5-7B             &  12/ 28 & 0.44 &  78.4\% & 4.76 &  54.6\% & 4.59 \\
QwQ-32B                &  28/ 64 & 0.44 &  80.2\% & 4.81 &  49.1\% & 4.68 \\
GPT-2 XL               &  22/ 48 & 0.47 &  63.3\% & 4.67 &  23.1\% & 4.47 \\
DeepSeek-V2-Lite       &  13/ 28 & 0.48 &  79.3\% & 4.76 &  44.8\% & 4.54 \\
Qwen3-8B               &  18/ 36 & 0.51 &  77.5\% & 4.78 &  58.3\% & 4.66 \\
Phi-3.5-mini-Inst      &  18/ 33 & 0.56 &  78.4\% & 4.76 &  66.4\% & 4.53 \\
gpt-oss-20B            &  13/ 24 & 0.57 &  86.7\% & 4.78 &  67.9\% & 4.57 \\
Mamba-2.8B             &  39/ 65 & 0.61 &  76.5\% & 4.75 &  19.1\% & 4.75 \\
DeBERTa-v3-large       &  16/ 25 & 0.67 &  79.6\% & 4.73 &  38.3\% & 4.72 \\
Qwen2.5-3B-Inst        &  29/ 37 & 0.81 &  72.8\% & 4.73 &  33.6\% & 4.62 \\
Falcon-Mamba-7B        &  56/ 65 & 0.88 &  82.7\% & 4.82 &  17.3\% & 4.77 \\
Phi-4                  &  35/ 40 & 0.90 &  71.0\% & 4.72 &  49.1\% & 4.59 \\
OLMo-2-7B-Inst         &  29/ 33 & 0.91 &  84.6\% & 4.80 &  72.8\% & 4.65 \\
R1-Distill-Qwen-14B    &  44/ 49 & 0.92 &  78.7\% & 4.83 &  54.9\% & 4.78 \\
R1-Distill-Qwen-7B     &  26/ 29 & 0.93 &  79.0\% & 4.65 &  22.5\% & 4.73 \\
Qwen3-4B               &  34/ 36 & 0.97 &  76.5\% & 4.76 &  66.7\% & 4.67 \\
SmolLM3-3B             &  34/ 36 & 0.97 &  80.2\% & 4.80 &  61.7\% & 4.71 \\
\bottomrule
\end{tabular}
\caption{Layer-localization of FARS across 26 models. \emph{Left half}
gives the layer $L_\text{peak}$ at which cross-format concept retrieval
Agn\% peaks; \emph{right half} gives the last-layer values. In every
model the peak layer sits at depth fraction in $[0.29, 0.97]$
of the network, with $n_\text{mid}=17/26$ peaks falling inside the
middle band $[0.30, 0.85]$. At the peak layer, $d_G$ to the top-$10$
readout channels is higher than at the last layer in 24/26
models (median $\Delta d_G = +0.12$), and Agn\% drops sharply
toward the last layer. Concept-invariance and readout-adjacency move in
opposite directions along depth, so the FARS peak is not the layer
nearest the readout basis.}
\label{tab:layer_loc}
\end{table}

\section{Empirical Dimensionality: the $k=10$ Plateau}\label{app:k_sweep}
\paragraph{Question and scope.} Why use ten dimensions? The plateau is an empirical operating point on this inventory, not an intrinsic dimension of reasoning.


\label{sec:k_sweep}
The dimension $k = 10$ used throughout the paper is the empirical
elbow of the concept-centroid PCA spectrum, not a derived constant. We sweep $k \in \{1, \ldots, 17\}$
on the TriForm benchmark (18 concepts, 17 non-trivial dimensions after
mean-centering) and record two saturation metrics: the fraction of
top-$k=17$ centroid variance captured, and the fraction of top-$k=17$
cross-format retrieval accuracy captured. Averaged over 9
models spanning three architecture families:

\begin{table}[t]
\centering
\small
\begin{tabular}{@{}lcc@{}}
\toprule
Rank & Variance (\% of $k$=17) & Agn\% (\% of $k$=17) \\
\midrule
$k = 3$ & 52.9 & 52.7 \\
$k = 5$ & 71.2 & 60.5 \\
$k = 8$ & 84.3 & 73.4 \\
$k = 10$ & 89.9 & 83.0 \\
$k = 12$ & 94.2 & 89.4 \\
$k = 17$ & 100.0 & 100.0 \\
\bottomrule
\end{tabular}
\caption{Concept-centroid PCA saturation on TriForm, mean over
9 models. Beyond $k = 10$ the marginal gain in both metrics is
$\le 10$~percentage points; $k = 10$ captures $\sim 90\%$ of both
what is attainable and what a full-rank ($k = 17$) subspace would
attain. On the disjoint Novelty benchmark (10 concepts, at most 9
non-trivial dimensions) $k = 10$ is by construction full-rank and
therefore trivially saturates; the ELBOW test above is only meaningful
on TriForm.}
\label{tab:k_sweep_triform}
\end{table}

Per-model saturation at $k = 10$ (fraction of $k=17$ maximum):

\begin{table}[t]
\centering
\small
\begin{tabular}{@{}lcc@{}}
\toprule
Model & Variance (\% of $k$=17) & Agn\% (\% of $k$=17) \\
\midrule
qwen2.5-7b               &  89.1 &  82.5 \\
qwen2.5-3b-instruct      &  91.7 &  74.9 \\
phi-3.5-mini-instruct    &  91.3 &  86.0 \\
mistral-7b-v0.3          &  87.9 &  86.1 \\
gpt2-xl                  &  91.3 &  85.8 \\
olmo-2-7b-instruct       &  87.6 &  88.4 \\
mamba-2.8b               &  93.0 &  73.0 \\
falcon-mamba-7b          &  89.0 &  80.8 \\
deberta-v3-large         &  88.6 &  89.5 \\
\bottomrule
\end{tabular}
\caption{TriForm concept-centroid PCA saturation at $k = 10$ across
9 models. The plateau is architecture-agnostic:
dense decoders, state-space models, and encoder-MLMs all reach
$\ge 87\,\%$ of the $k = 17$ variance at $k = 10$, and $\ge 73\,\%$
of the $k = 17$ cross-format retrieval accuracy. Choosing $k = 10$
therefore trades $\le 13$~percentage points of TriForm variance for a
substantial rank reduction ($10$ of $1600$--$8192$ ambient dimensions,
a $160$--$820\times$ compression). The same $k = 10$ setting is used
uniformly across all experiments in the paper.}
\label{tab:k_sweep_per_model}
\end{table}

The plateau replicates across the state-space family (Mamba,
Falcon-Mamba) and the encoder-MLM DeBERTa in addition to the six
dense-decoder models, indicating that the operating point is a
property of the concept-centroid PCA operator on a well-formed
cross-format inventory, not a hyperparameter tuned to a specific
architecture or benchmark.

\section{Inter-Layer FARS Subspace Stability}\label{app:interlayer_stability}
\paragraph{Question and scope.} How stable is the extracted span across depth? Within-model overlap does not imply identical coordinates across different models.

Subspace overlap between layer bases: $\text{overlap}(B_L, B_{L'}) = \frac{1}{10}\sum_i \sigma_i(B_L B_{L'}^\top)$. Random baseline computed between two independent random orthonormal $10$-d subspaces in the same ambient $D_m$.

\begin{table*}[t]
\centering
\caption{Inter-layer FARS subspace overlap (mean cos of principal angles). Plateau = layers with RSA-C $\geq 90\%$ peak. ``Consec within'' = overlap between consecutive plateau layers; ``$\pm 2$--$3$ within'' = overlap between plateau layers at distance $2$ or $3$. The chance baseline is $\sqrt{10/D_m} \approx 0.03$--$0.08$ across our models. Observed in-plateau overlap is $\sim 17\times$ chance: the canonical concept frame is a stable manifold across multiple layers, not a layer-specific artifact.}\label{tab:interlayer_stability}
\footnotesize
\setlength{\tabcolsep}{3pt}
\begin{tabular}{@{}lrrrr@{}}
\toprule
\textbf{Model} & \textbf{Consec in-plat} & \textbf{$\pm 2$--$3$ in-plat} & \textbf{Outside} & \textbf{Random} \\
\midrule
GPT-2 XL            & $.952$ & $.902$ & $.968$ & $.067$ \\
Qwen-3B-Inst        & $.876$ & $.766$ & $.798$ & $.058$ \\
Phi-3.5-mini-Inst   & $.854$ & $.716$ & $.835$ & $.047$ \\
Mamba-2.8B          & $.947$ & $.875$ & $.956$ & $.052$ \\
Mistral-7B-base     & $.868$ & $.735$ & $.745$ & $.041$ \\
Mistral-7B-Inst     & $.856$ & $.727$ & $.682$ & $.041$ \\
Qwen-7B             & $.869$ & $.741$ & $.824$ & $.044$ \\
OLMo-2-7B-Inst      & $.881$ & $.754$ & $.851$ & $.041$ \\
Falcon-Mamba-7B     & $.946$ & $.879$ & $.928$ & $.041$ \\
Llama-3.1-8B-Inst   & $.841$ & $.705$ & $.717$ & $.041$ \\
DeepSeek-V2-Lite    & $.878$ & $.760$ & $.799$ & $.058$ \\
Mixtral-8x7B-Inst   & $.890$ & $.790$ & $.726$ & $.041$ \\
Llama-3.1-70B-Inst  & $.936$ & $.862$ & $.880$ & $.030$ \\
DeBERTa-v3-large    & $.895$ & $.759$ & $.870$ & $.082$ \\
RoBERTa-large       & $-$ & $-$ & $.755$ & $.082$ \\
\midrule
\textbf{Mean (n=14)} & $\mathbf{.892 \pm .036}$ & $\mathbf{.784 \pm .065}$ & $.822 \pm .085$ & $\sim .05$ \\
\bottomrule
\end{tabular}
\end{table*}

In-plateau consecutive overlap $0.89$ vs.\ random $\sim 0.05$, $\sim 17\times$ chance. $\pm 2$--$3$ within plateau $0.78$, a slow rotation. Crossing boundary $0.80\pm 0.18$. RoBERTa excluded (plateau width $1$). FARS is a thick canonical frame, recoverable at any plateau layer.

\section{A Prediction That Did Not Replicate: Concept-Family Ordering}\label{app:concept_families}
\paragraph{Question and scope.} Does the proposed concept-family ordering replicate? This is a negative-result control, retained rather than treated as support for the main claim.


\paragraph{A prediction that did not replicate.}
We had expected the readout verdict to track how output-facing a concept is. The
reasoning is mechanical: a direction encoding ``the next token is a numeral'' is
a statement about the logits and should sit inside the readout basis, whereas a
direction encoding ``this inference is valid'' is a statement about the input and
should not. To test it we built five balanced binary families of minimally
contrasting sentences ($20$ pairs each), fixed their surface-to-abstract ordering
before measuring anything, took the single direction $v = \mu_{+} - \mu_{-}$
per family --- the object the steering literature uses --- and measured its
readout enrichment $E_v(k)/(k/D)$ against the full singular basis of $W_U$.

\begin{table}[t]
\centering
\small
\setlength{\tabcolsep}{3.8pt}
\begin{tabular}{@{}lccccccc@{}}
\toprule
Model & $D$ & next token  & language id & sentiment & factual tru & inference v & mono.\ \\
\midrule
gpt2-xl & 1600 & 14.4 & 8.2 & 4.0 & 1.4 & 1.0 & \checkmark \\
phi-3.5-mini-instruct & 3072 & 13.5 & 35.4 & 18.0 & 9.7 & 7.2 & --- \\
qwen2.5-7b & 3584 & 62.9 & 2.1 & 4.8 & 33.9 & 38.0 & --- \\
llama-3.1-8b-instruct & 4096 & 17.3 & 2.1 & 4.6 & 3.8 & 1.5 & --- \\
mistral-7b-v0.3 & 4096 & 19.3 & 37.5 & 26.1 & 55.3 & 15.0 & --- \\
olmo-2-7b-instruct & 4096 & 27.4 & 4.0 & 2.1 & 14.2 & 6.3 & --- \\
\bottomrule
\end{tabular}
\caption{Final-layer readout enrichment $E_v(10)/(k/D)$ of the difference-of-means direction for five binary concept families, across 6 models. Columns are in the pre-registered surface-to-abstract order, so the prediction is that each row decreases left to right; the last column marks whether it does. Haar null is $0.99$.}
\label{tab:concept_families}
\end{table}

The prediction fails. The ordering is fully monotonic in 1 of 6 models, and
41 of 60 pairwise orderings are correct against 30 expected by chance ---
above chance, but nowhere near the clean gradient the mechanism predicts. In
particular the claim we would most have wanted, that an inference-validity
direction sits at chance, holds only on \texttt{gpt2-xl}
($1.03$) and not on the larger models, where it reaches
$38\times$.

The likely cause is visible in the table. On the larger models \emph{every}
family is enriched by an order of magnitude at the final layer: the smallest
enrichment in any family averages $5.6\times$ across the five
$7$--$8$B models, against $1.03\times$ on
\texttt{gpt2-xl}. That is expected on reflection --- the final residual
stream \emph{is} the readout pathway, so a probe placed there saturates and the
contrast between concept families washes out. A mid-network probe avoids the
saturation but asks about output-facing representations before the model has
built them, which is why our first attempt at this experiment, run at the middle
layer, also produced no ordering.

We report this because it bounds what the paper claims. The readout test's
two-sidedness rests on the final-layer positive control and the rank sweep
(\S\ref{sec:readout_control}, App.~\ref{app:rank_sweep}), both of which
replicate across every model we tested. It does \emph{not} rest on the concept
being abstract in some intuitive sense, and we have no evidence that the test
orders concepts along that axis. Finding a probe position that separates
output-facing from input-facing concept directions without saturating is open.


\FloatBarrier
\Needspace{8\baselineskip}
\section*{3. Interventions and behavioural boundaries}
\section{Multi-seed cross-concept intervention audit}
\label{app:intervention_extension}
\paragraph{Question and scope.} Do interventions move outputs toward another concept instance? The event is a KL-proximity comparison, not semantic correctness. Seed ranges describe repeated runs, not confidence intervals.

The extension comprises $4$ models $\times$ $3$ seeds $\times$ $2$ protocols, or $24$ completed runs. Each run uses $120$ directed concept pairs. The prefix protocol compares two bases and the last-token protocol uses five conditions, yielding $10{,}080$ saved intervention records. These are repeated measurements, not $10{,}080$ independent samples. Within each seed, the concept-pair order and sampled surface forms are shared across models. Seeds change both pair sampling and the random subspace, so their variation reflects both sources.

The prefix protocol replaces projected residual differences at aligned prefix indices up to the shorter sequence length. Such indices need not have matching semantics. The last-token protocol instead maps the source's last token onto the target's last token at the same layer. For the norm-matched control, the random projected difference is rescaled separately for each pair to the norm of the FARS-projected difference. Full replacement uses the unprojected residual difference; no intervention adds zero. Other target-token states remain untouched in the last-token protocol. Patching full state at one layer does not replace the entire source computation.

Model/layer pairs are GPT-2 XL/20, Qwen2.5-3B-Instruct/22, SmolLM3-3B/10, and Mamba-2.8B/35 (zero-based block indices). Runs use float16, rank $10$, seeds $0,1,2$, and a maximum input length of $512$ tokens. They use PyTorch $2.6.0$+CUDA~$12.4$ and Transformers $5.13.1$ on an A100~40GB; this software environment need not match historical runs. Each output records per-pair KL distances, concept identities, form, seed, and basis hash; last-token runs additionally record intervention norms. Analytic checks cover projection, norm matching, full replacement, and the no-op control. Historical results are not overwritten.

\begin{table}[htbp]
\centering\footnotesize\setlength{\tabcolsep}{4pt}
\begin{tabular}{@{}lccccc@{}}\toprule
Model & FARS & Random & Norm-matched & Full vector & No intervention \\ \midrule
GPT-2 XL & 10.8 [7.5, 16.7] & 0.0 & 0.0 & 41.9 [34.2, 47.5] & 0.0 \\
Qwen2.5-3B-Inst. & 3.3 [0.8, 7.5] & 0.0 & 0.0 & 47.8 [43.3, 51.7] & 0.0 \\
SmolLM3-3B & 0.8 [0.0, 1.7] & 0.0 & 0.0 & 19.7 [16.7, 22.5] & 0.0 \\
Mamba-2.8B & 0.8 [0.0, 1.7] & 0.0 & 0.0 & 26.9 [21.7, 33.3] & 0.0 \\
\bottomrule\end{tabular}
\caption{Last-token source-directed event rates (\%): seed mean [minimum, maximum], $3$ seeds with $120$ directed pairs each. Zero entries are identical across seeds. The brackets are observed seed ranges, not confidence intervals. Random seeds vary both pair sampling and the random basis.}
\label{tab:intervention_extension_summary}
\end{table}

\begin{figure}[htbp]
\centering
\includegraphics[width=\linewidth]{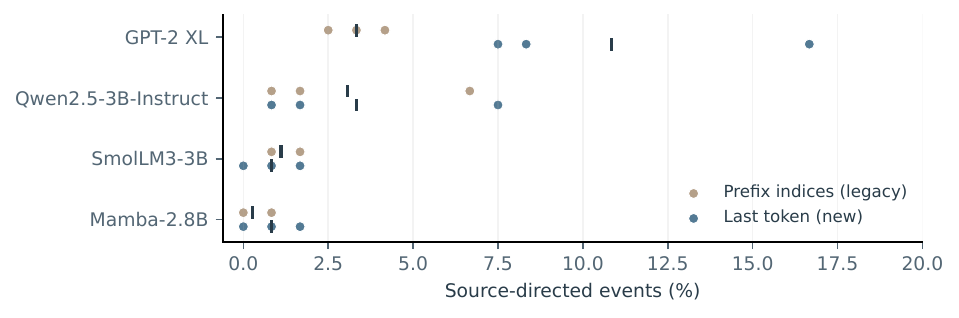}
\caption{\textbf{Token-position protocol changes the measured intervention effect.} FARS source-directed event rates for the legacy prefix and last-token protocols. Dots represent seeds; horizontal marks represent seed means. Shared concept pairs make these repeated observations. A position change is not a test of semantic correctness.}
\label{fig:intervention_protocols}
\end{figure}


\subsection{Logged dose replication on Qwen2.5-3B-Instruct}\label{app:logged_dose}
We reran the cached layer-22, rank-10 intervention $h\leftarrow h+cB^\top Bh$ on GSM8K with greedy float16 generation, applying the hook to all positions processed at that layer. The chat-template protocol uses $50$ test questions (sampling seed $123$), a $384$-token output limit, and two rank-matched Haar controls. A separate plain-prompt protocol uses $80$ questions (seed $42$), a $256$-token limit, and FARS only. All $650+400$ condition--question records completed; saved records contain question IDs, full prompts and generations, parsed answers, token-limit indicators and the basis hash. Answers use the \texttt{\#\#\#\#} pattern with a last-number fallback. Reported accuracy includes token-limited generations under that parser.

\begin{figure}[H]
\centering
\includegraphics[width=\linewidth]{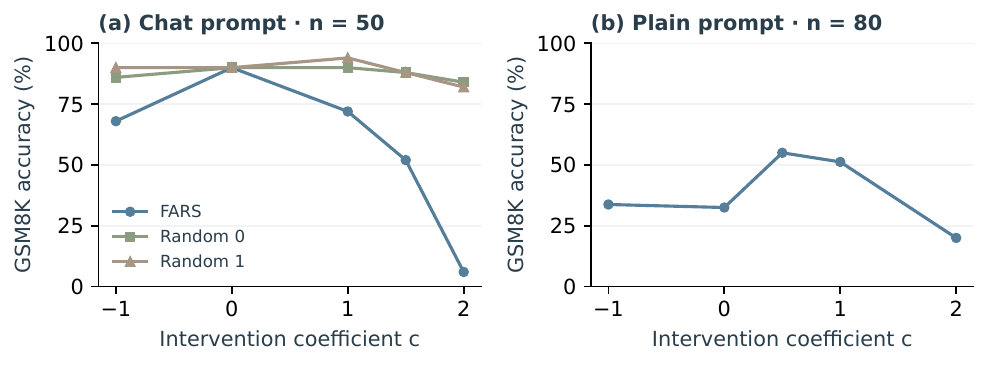}
\caption{\textbf{New logged dose sweeps, separated by protocol.} Both panels use Qwen2.5-3B-Instruct at cached layer 22. Chat: baseline $90\%$; FARS at $c=2$, $6\%$; two random controls, $84\%$ and $82\%$. Plain: baseline $32.5\%$; FARS at $c=0.5$, $55\%$, and at $c=2$, $20\%$. Lines join tested settings, without interpolated measurements or uncertainty estimates. Random controls match rank, not intervention norm.}\label{fig:logged_dose}
\end{figure}

The chat run reproduces strong disruption at high positive dose relative to rank-matched random directions. The plain-prompt run shows that a moderate positive dose can instead improve parsed accuracy under a different protocol. These runs differ in prompts, sampled items and generation limits, so their difference cannot be attributed to any one factor. They are not pooled and do not resolve the provenance of the historical $8\%$ versus $13\%$ entries in Fig.~\ref{fig:threshold}. Perturbation magnitude and truncation remain relevant alternative explanations; these results do not establish semantic sufficiency.

\subsection{Tokenwise norm-matched task intervention}\label{app:normmatched_task}
We add a logged $650$-record control experiment on Qwen2.5-3B-Instruct, holding the $50$ questions (seed $123$), layer $22$, chat prompt, greedy decoding and $384$-token limit fixed. Each question receives baseline and FARS or three rank-$10$ Haar interventions at $c\in\{-1,1,2\}$. At each affected token, a random projection is scaled to the norm of the FARS projection of that run's current hidden state. Thus rank and local perturbation norm are matched; different autoregressive trajectories need not have equal norms. The maximum recorded relative matching error is $2.41\times10^{-4}$.

\begin{figure}[H]\centering
\includegraphics[width=\linewidth]{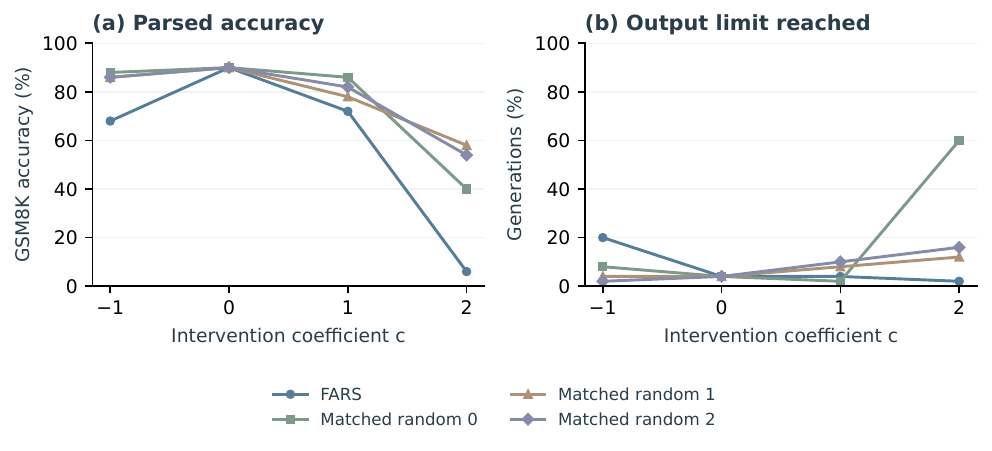}
\caption{\textbf{Norm-matched intervention and generation limits.} Each curve uses the same $50$ questions; the three random bases are fixed across questions and doses. Baseline is $90\%$ accurate. At $c=2$, FARS reaches $6\%$ versus $40$--$58\%$ for random controls, while output-limit rates are $2\%$ versus $12$--$60\%$. Truncated generations remain in the accuracy denominator. Lines connect tested doses; random curves are separate basis draws, not confidence bounds.}\label{fig:normmatched_task}
\end{figure}

FARS minus the average of the three random controls is $-18.7$, $-10.0$ and $-44.7$ percentage points at $c=-1,1,2$, respectively. Paired percentile bootstrap intervals ($10{,}000$ resamples of the $50$ questions; all conditions for a question kept together; seed $20260929$) are $[-30.0,-8.0]$, $[-21.3,0.7]$ and $[-54.0,-35.3]$ points. These intervals condition on the three sampled bases and one model; they do not quantify cross-model uncertainty. The moderate-amplification interval includes zero. Norm matching also makes high-dose random directions disruptive, so the earlier rank-only control overstates how inert random perturbations are. The residual high-dose gap supports directional sensitivity under this protocol, not causal sufficiency or a universally optimal reasoning subspace. Unequal generation-limit rates motivate a separate fixed-budget sensitivity test.

\section{Causal Validation: Cross-Form Activation Patching}\label{app:causal_validation}
\paragraph{Question and scope.} What survives cross-form activation patching? Preservation can also occur under random patches; contrast it with source-directed interventions and task-level tests.

Replacing only the $10$ FARS dimensions during cross-form activation patching preserves the model's argmax in $89.7$--$97.3\%$ of cases across five models (Table~\ref{tab:subspace_patching}). Full-PCA of the same rank retains $37.3$--$87.7\%$; full-activation replacement collapses to $22.0$--$83.3\%$. Paired Wilcoxon rejects FARS$=$Full-PCA at $p{<}10^{-22}$ on every model (Holm-corrected).

\begin{table}[H]
\centering
\caption{Causal validation under three metrics on the same $10$ dimensions: top-$1$ next-token match, top-$10$ overlap, median KL of patched vs.\ clean. FARS cross-form patching matches within-form same-instance ceiling ($0.85$ on Qwen-$7$B-Inst).}\label{tab:subspace_patching}
\scriptsize
\setlength{\tabcolsep}{3pt}
\begin{tabular}{@{}lccc|ccc|ccc@{}}
\toprule
 & \multicolumn{3}{c|}{\textbf{Top-1\%} $\uparrow$} & \multicolumn{3}{c|}{\textbf{Top-10 ov.} $\uparrow$} & \multicolumn{3}{c}{\textbf{KL med.} $\downarrow$} \\
\textbf{Model} & FARS & PCA & Full & FARS & PCA & Full & FARS & PCA & Full \\
\midrule
\raisebox{-0.18em}{\includegraphics[height=0.95em]{figures/logos/openai.pdf}}~GPT-2 XL & \textbf{93.7} & 87.7 & 81.0 & \textbf{.90} & .65 & .47 & \textbf{.016} & .154 & .655 \\
\raisebox{-0.18em}{\includegraphics[height=0.95em]{figures/logos/qwen.pdf}}~Qwen-3B & \textbf{89.7} & 37.3 & 22.0 & \textbf{.92} & .60 & .44 & \textbf{.012} & .612 & 2.77 \\
\raisebox{-0.18em}{\includegraphics[height=0.95em]{figures/logos/qwen.pdf}}~Qwen-7B & \textbf{94.7} & 59.7 & 26.3 & \textbf{.94} & .63 & .48 & \textbf{.006} & .262 & .683 \\
\raisebox{-0.18em}{\includegraphics[height=0.95em]{figures/logos/mistral.pdf}}~Mistral-7B & \textbf{97.3} & 85.3 & 83.3 & \textbf{.96} & .74 & .56 & \textbf{.002} & .034 & .142 \\
\raisebox{-0.18em}{\includegraphics[height=0.95em]{figures/logos/meta.pdf}}~Llama-8B & \textbf{94.0} & 56.7 & 39.7 & \textbf{.94} & .64 & .47 & \textbf{.004} & .098 & .811 \\
\bottomrule
\end{tabular}
\end{table}

\subsection{Cross-concept patching: the sufficiency question}\label{app:cross_concept}

\paragraph{Historical pilot and its scope.} Cross-form prediction preservation does not establish that a patch installs a concept: both source and target already express the same concept, and random patches can also preserve the prediction. The historical pilot instead patched between distinct concepts in the same surface form. Its one-model result is retained below; the expanded four-model experiment and explicit norm-matched control are reported in App.~\ref{app:intervention_extension}.

\paragraph{Protocol.} For ordered pairs $(A,B)$, the pilot replaces the target's FARS-projected residual component with the source's at prefix-aligned token indices. It records $d_A=\mathrm{KL}(p_A\Vert p_P)$ and $d_B=\mathrm{KL}(p_B\Vert p_P)$, where $P$ is the patched output. An event is $d_A<d_B$; it is not a semantic accuracy measurement. The historical Haar control matches the subspace rank but does \emph{not} explicitly match the per-pair intervention norm. That distinction matters when attributing an effect specifically to direction rather than perturbation size.

\begin{table}[htbp]
\centering\small\setlength{\tabcolsep}{6pt}
\begin{tabular}{@{}lrcccc@{}}\toprule
 & & \multicolumn{2}{c}{Source-directed events (\%)} & \multicolumn{2}{c}{Median $d_B$} \\
\cmidrule(lr){3-4}\cmidrule(lr){5-6}
Model & Pairs & FARS & Haar & FARS & Haar \\ \midrule
GPT-2 XL & 40 & 5.0 & 0.0 & 0.02435 & 0.00128 \\
\bottomrule\end{tabular}
\caption{Historical cross-concept pilot, $40$ directed pairs on one model. The FARS perturbation changes the output distribution more than this rank-matched Haar control, but the two interventions are not norm-matched. Values reproduce the saved pilot summary.}
\label{tab:cross_concept}
\end{table}

\paragraph{Interpretation.} The pilot records an intervention effect while rarely producing an output distribution nearer the source. It does not by itself isolate a norm-independent directional effect, prove necessity, or show that a source concept has been installed. The new experiment adds a per-pair norm-matched random condition, last-token alignment, full replacement, and no intervention. Those controls refine the empirical boundary without turning KL proximity into task correctness.

\section{Probe-, Token-, and Task-Level Interventions}\label{app:causal_battery}
\paragraph{Question and scope.} How do ablation and amplification change reported task outcomes? Effects depend on model, layer, strength and protocol; necessity in a tested intervention is not a recovered natural mechanism.

FARS ablation is evaluated at three levels of abstraction and against a matched-rank random control at every one. The subsections below report each in turn.

\subsection{Probe level: cross-form concept identifiability}\label{app:probe_ablation}

Ridge probe ($\alpha{=}0.1$) trained on source-form activations, tested on target after FARS-ablation vs.\ random-$10$-d ablation. FARS ablation collapses cross-form accuracy by $17$--$50$\,pp on every one of the $15$ models in this pool, spanning $4$ architecture families (Tables~\ref{tab:probe_ablation_new},~\ref{tab:probe_ablation}); random ablation moves accuracy by $\pm 0.2$\,pp. The same $10$ dimensions that recover concept identity destroy it when removed; arbitrary $10$ dimensions do neither. Llama-$70$B-Inst is probe-necessary at $-43.9$\,pp despite being task-level inert (App.~\ref{app:gsm8k}), a clean geometric-vs-behavioural dissociation.

\begin{table*}[t]
\centering
\caption{Probe-level FARS ablation replicates across $4$ new architecture families. $L$ is the model's best FARS layer, $\Delta$ the FARS-ablation effect in percentage points. Encoder models are mean-pooled. Base = cross-form ridge probe trained on form $A$, tested on form $B$ at the model's best FARS layer. FARS ablation removes the $10$-d FARS subspace from both train and test activations; random ablation removes a matched-rank random subspace (mean over $3$ seeds). Every architecture family shows $\geq -21$\,pp FARS-ablation effect with $\leq \pm 0.001$ random-ablation effect.}\label{tab:probe_ablation_new}
\scriptsize
\setlength{\tabcolsep}{2pt}
\begin{tabular}{@{}lccccc@{}}
\toprule
\textbf{Model (family)} & \textbf{L} & \textbf{Base} & \textbf{FARS abl.} & \textbf{Rand abl.} & \textbf{$\Delta$} \\
\midrule
\raisebox{-0.18em}{\includegraphics[height=0.95em]{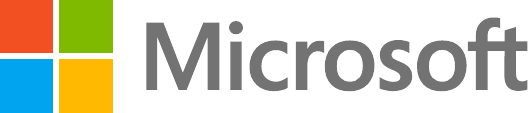}}~\textbf{DeBERTa-v3-large (encoder MLM)} & 15 & $0.602$ & $0.104$ & $0.603$ & $\mathbf{-49.8}$\,pp \\
\raisebox{-0.18em}{\includegraphics[height=0.95em]{figures/logos/mistral.pdf}}~Mistral-7B-Inst (dense) & 10 & $0.620$ & $0.173$ & $0.619$ & $\mathbf{-44.7}$\,pp \\
\raisebox{-0.18em}{\includegraphics[height=0.95em]{figures/logos/meta.pdf}}~Llama-3.1-70B-Inst (dense) & 26 & $0.628$ & $0.189$ & $0.628$ & $\mathbf{-43.9}$\,pp \\
\raisebox{-0.18em}{\includegraphics[height=0.95em]{figures/logos/mistral.pdf}}~Mixtral-8x7B-Inst (MoE) & 10 & $0.617$ & $0.220$ & $0.616$ & $\mathbf{-39.7}$\,pp \\
\raisebox{-0.18em}{\includegraphics[height=0.95em]{figures/logos/mistral.pdf}}~Mistral-7B-base (dense) & 11 & $0.528$ & $0.156$ & $0.529$ & $\mathbf{-37.2}$\,pp \\
\raisebox{-0.18em}{\includegraphics[height=0.95em]{figures/logos/microsoft.pdf}}~Phi-3.5-mini-Inst (dense) & 18 & $0.527$ & $0.179$ & $0.524$ & $\mathbf{-34.8}$\,pp \\
~Mamba-2.8B (state-space) & 35 & $0.433$ & $0.151$ & $0.432$ & $\mathbf{-28.2}$\,pp \\
\raisebox{-0.18em}{\includegraphics[height=0.95em]{figures/logos/qwen.pdf}}~Qwen-2.5-3B-Inst (dense) & 22 & $0.473$ & $0.203$ & $0.472$ & $\mathbf{-27.0}$\,pp \\
\raisebox{-0.18em}{\includegraphics[height=0.95em]{figures/logos/meta.pdf}}~RoBERTa-large (encoder MLM) & 24 & $0.270$ & $0.056$ & $0.271$ & $\mathbf{-21.4}$\,pp \\
\bottomrule
\end{tabular}
\end{table*}

\begin{table}[H]
\centering
\caption{Cross-form concept-probe accuracy with FARS or a random 10-dim subspace ablated from \emph{both} training and test activations. ``$\Delta$'' is the drop from the no-ablation baseline. FARS ablation drops accuracy $17$--$44$\,pp; random ablation does not move accuracy detectably. The same 10 dimensions that constitute FARS are the literal carriers of cross-form concept identifiability.}\label{tab:probe_ablation}
\footnotesize
\setlength{\tabcolsep}{3pt}
\begin{tabular}{@{}lcccc@{}}
\toprule
\textbf{Model} & \textbf{Baseline} & \textbf{FARS abl.} & \textbf{Random abl.} & \textbf{FARS $\Delta$} \\
\midrule
GPT-2 XL & 29.4 & \textbf{12.0} & 29.6 & $-$17.4 \\
Qwen2.5-7B & 52.8 & \textbf{18.3} & 52.7 & $-$34.5 \\
Mistral-7B & 52.8 & \textbf{15.6} & 52.9 & $-$37.2 \\
Llama-3.1-8B-Instruct & 56.5 & \textbf{17.7} & 56.4 & $-$38.8 \\
\bottomrule
\end{tabular}
\vspace{0.2em}\\
{\footnotesize All accuracies are mean cross-form probe accuracies (\%) across all $15$ form pairs. Random ablation values are means over $5$ orthonormal draws; std across draws $\leq 0.2$\,pp on every model.}
\end{table}

\subsection{Token level: FARS vs.\ random ablation}\label{app:fars_ablation}

FARS ablation vs.\ matched-rank random orthonormal ablation: top-$1$ argmax changes in $22$\,pp more cases, median KL $130\times$ larger (Table~\ref{tab:ablation_control}).

\begin{table*}[t]
\centering
\caption{Ablation control on Qwen2.5-7B-Instruct (top-$3$ FARS layers, $n{=}54$ per layer, $5$ random draws). FARS ablation flips top-$1$ argmax in $22$\,pp more cases than random ablation and median KL is $130\times$ larger.}\label{tab:ablation_control}
\footnotesize
\begin{tabular}{@{}lccc@{}}
\toprule
\textbf{Ablation} & \textbf{Top-1} & \textbf{Top-10 overlap} & \textbf{Median KL} \\
\midrule
Random 10-dim (control, mean of 5 draws) & 0.988 & 0.984 & 0.0005 \\
FARS 10-dim & \textbf{0.767} & \textbf{0.804} & \textbf{0.065} \\
\midrule
\textbf{Difference} & $-$22\,pp & $-$18\,pp & $130\times$ \\
\bottomrule
\end{tabular}
\end{table*}

\subsection{Task level: GSM8K and MATH}\label{app:gsm8k}

We test whether FARS ablation degrades downstream task performance on GSM8K~\citep{cobbe2021gsm8k}, an out-of-distribution chain-of-thought benchmark. We sample \GSMN{} problems and greedily generate completions under (a) baseline, (b) FARS ablation at the best layer via forward hook ($h\leftarrow h-B^\top Bh$), (c) random $10$-d ablation at the same layer ($\GSMR$ QR-orthonormal draws). Answers extracted by \texttt{\#\#\#\#} pattern.

\paragraph{Headline result on Qwen2.5-7B-Inst ($L{=}13$).} Baseline $\GSMBASEQ\%$; random ablation $\GSMRANDQ\%$ ($0$\,pp drop); FARS ablation $\GSMFARSQ\%$ ($\mathbf{-\GSMDROPQ}$\,pp vs.\ baseline, $\mathbf{-\GSMVSRANDQ}$\,pp vs.\ random; paired Wilcoxon $p{=}\GSMWPQ$). Cross-benchmark replication on MATH \citep{hendrycks2021math} (Level $1$--$3$, $n{=}60$): baseline $71.7\%$, FARS ablation $\mathbf{53.3\%}$ ($-18.4$\,pp), random $70.0\%$, FARS-vs-random $p{=}5\!\times\!10^{-3}$. The necessity is not GSM8K-specific.

\paragraph{Bidirectional necessity.} Amplifying instead of ablating: $h{\to} h{+}c\mathbf{B}^\top\mathbf{B}h$. On Qwen-Inst $L{=}13$, $c{=}1.0$: FARS $71\%$ vs.\ random $90\%$, $\mathbf{-19}$\,pp, paired $p{=}2.7\!\times\!10^{-5}$. Random subspaces at identical $c$ are inert in either direction (Table~\ref{tab:amplification}).

\paragraph{Scope: BBH.} On three BIG-Bench Hard sub-tasks ($n{=}75$, logical deduction / causal judgement / colored-objects reasoning), the FARS-vs-random gap is $-3.3$\,pp ($p{=}0.23$); FARS is most informative for mathematical-symbolic reasoning.

\paragraph{Scale-modulated threshold $c^\star$ on $6$ instruct models.} The same writer-layer protocol on Phi-$3.5$-mini, Qwen-$3$B-Inst, Qwen-$7$B-Inst, Mistral-$7$B-Inst, Llama-$8$B-Inst, Llama-$70$B-Inst reports model-dependent responses at the sampled doses (Fig.~\ref{fig:dose_response}, Fig.~\ref{fig:threshold}): $c^\star_{\text{Phi-3.5}}{=}c^\star_{\text{Qwen-3B}}{=}c^\star_{\text{Qwen-7B}}{=}1.0$; $c^\star_{\text{Mistral-Inst}}{=}c^\star_{\text{Llama-8B}}{=}2.0$; $c^\star_{\text{Llama-70B}}{>}2.0$. Llama-$70$B's writer-layer amplification at $c{=}+2.0$ gives FARS $93\%$ vs.\ random $87\%$ ($n{=}30$, no FARS$<$Random signal). Multi-layer scrubbing $w\in\{1,3,5,7,11\}$ at $L{=}23$ all inert ($\Delta\leq -5$\,pp, $p\geq 0.16$): $70$B's plateau exceeds $11$-layer simultaneous scrub. The token-level (App.~\ref{app:fars_ablation}) and probe-level (App.~\ref{app:probe_ablation}) tests, which target the representation directly, replicate on Llama at every scale.

\paragraph{Mistral-Inst is the cleanest bidirectional case.} On Mistral-7B-Inst $L{=}10$ ($n{=}50$): baseline $50\%$; $c{=}-1.0$ FARS $42\%$ vs.\ random $52\%$ ($-10$\,pp, $p{=}0.028$); $\mathbf{c{=}+2.0}$ FARS $\mathbf{8\%}$ vs.\ random $59\%$ ($\mathbf{-51}$\,pp, $\mathbf{p{=}1.2\!\times\!10^{-7}}$). Together with Qwen-Inst this rules out a single-model artefact; the Llama-family inertia is architecture-family-specific.

\begin{figure}[H]
\centering
\includegraphics[width=\linewidth]{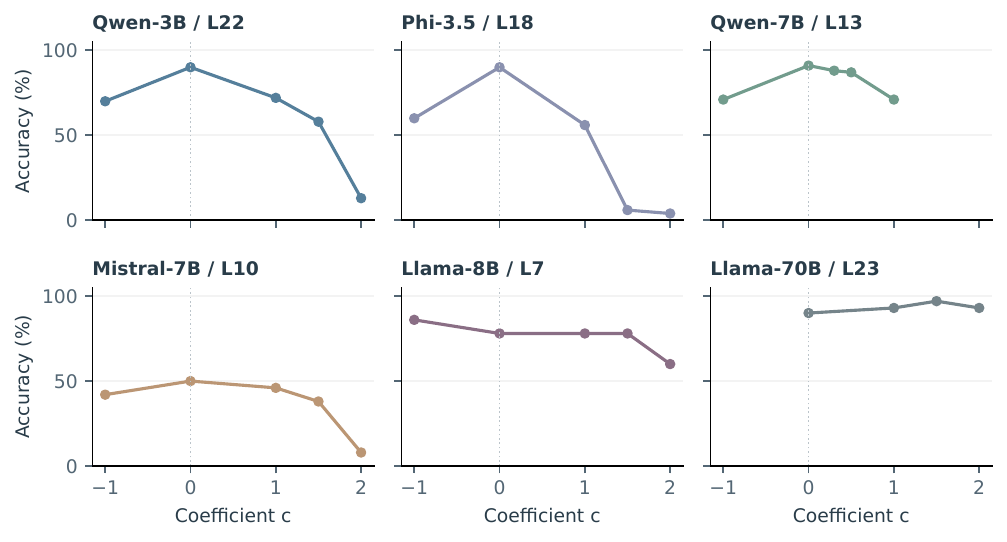}
\caption{\textbf{Historical amplification profiles, separated by model.} Points reproduce the six chat-template summaries embedded in the original plotting script, not a newly rerun dose sweep. $c=0$ is baseline and $c=-1$ is projection removal. Lines connect observed settings only; no intermediate doses or error bars are inferred. These summaries are distinct from the paired FARS--random entries in Fig.~\ref{fig:threshold}.}\label{fig:dose_response}
\end{figure}

\begin{figure}[H]
\centering
\includegraphics[width=\linewidth]{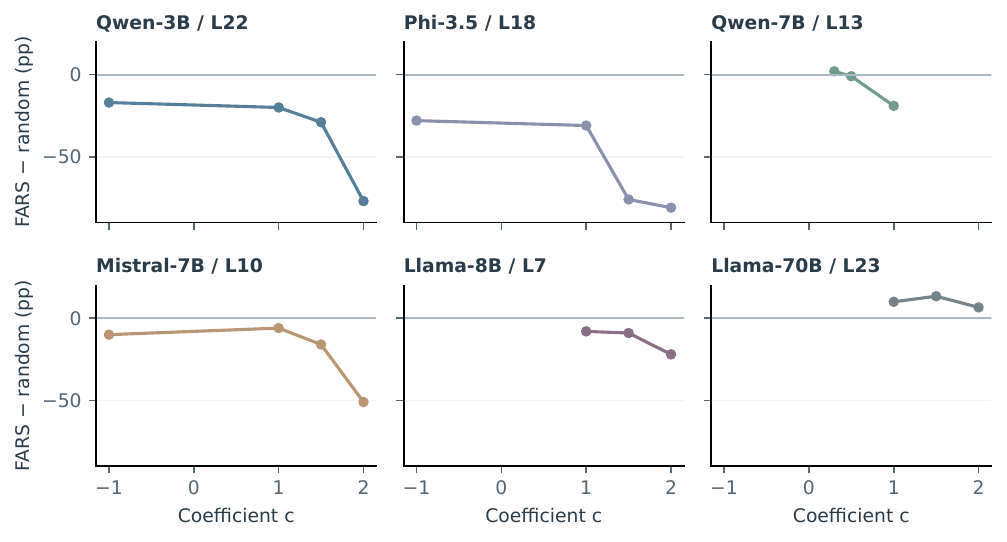}
\caption{\textbf{Paired historical FARS--random differences.} Vertical values are percentage points of GSM8K accuracy, recomputed from the paired summaries embedded in the original threshold-plot script. These are not subtractions from the separate dose-profile figure: for Qwen-3B at $c=2$, the two sources report FARS accuracy $8\%$ and $13\%$, respectively. Their run correspondence remains unresolved; they are therefore kept separate.}\label{fig:threshold}
\end{figure}

\paragraph{FCTS negative.} Concept-targeted steering (project $h$ onto FARS, find nearest centroid $\mathbf{c}^\star$, pull $h_\text{FARS}$ toward $\mathbf{c}^\star_\text{full}$) does not exceed baseline at any $\alpha$ on Qwen-Inst GSM8K; at $\alpha{=}2$ FCTS collapses to $0\%$ ($p{=}5{\times}10^{-4}$). Any linear FARS perturbation, uniform or concept-targeted, ablation or amplification, degrades task performance; the deployed model sits near a local optimum along these axes.

\begin{table*}[t]
\centering
\caption{GSM8K under FARS- vs.\ random-ablation at each model's best FARS layer ($n{=}100$, chat template). Qwen shows bidirectional necessity at $L{=}13$; Llama is inert to single-layer intervention at $L{=}8$.}\label{tab:gsm8k}
\footnotesize
\setlength{\tabcolsep}{4pt}
\begin{tabular}{@{}lccccc@{}}
\toprule
\textbf{Model} & $n$ & \textbf{Base.} & \textbf{Rand.\,abl.} & \textbf{FARS\,abl.} & \textbf{Paired test} \\
\midrule
Qwen2.5-7B-Instruct ($L{=}13$)   & \GSMN{} & \GSMBASEQ{} & \GSMRANDQ{} & \textbf{\GSMFARSQ{}} & $p{=}\GSMWPQ{}$ \\
Llama-3.1-8B-Instruct ($L{=}8$)  & \GSMN{} & \GSMBASEL{} & \GSMRANDL{} & \GSMFARSL{} & $p{=}\GSMWPL{}$ (n.s.) \\
\bottomrule
\end{tabular}
\end{table*}

\paragraph{Interpretation.} Token-level argmax flips ($+22\,$pp under FARS ablation, App.~\ref{app:fars_ablation}) translate into a task-level accuracy degradation. Random subspaces of the same rank are nearly inert by either metric; FARS is not. Because the FARS directions were chosen with no reference to GSM8K and the test problems are out of distribution for our 18 concepts, this provides evidence that FARS captures \emph{general} concept-bearing computation rather than benchmark-specific structure.

\paragraph{Bidirectional causal necessity: FARS amplification.} The ablation experiment removes the FARS subspace ($h\to h-\mathbf{B}^\top\mathbf{B}h$). If FARS is genuinely load-bearing in the residual stream, then \emph{amplifying} the same subspace ($h\to h+c\cdot\mathbf{B}^\top\mathbf{B}h$ for $c>0$) should also degrade task performance, because amplification shifts the FARS-projected activation off the magnitude the rest of the model expects. Amplifying a \emph{random} 10-dim subspace at the same coefficient should not. This is a symmetric, bidirectional causal test of FARS (Table~\ref{tab:amplification}). We ran it on Qwen2.5-7B-Instruct ($n{=}100$ GSM8K test items, $L{=}13$, $c\in\{0.3,0.5,1.0\}$, two random-subspace draws per coefficient). Baseline accuracy is $91.0\%$. At $c{=}0.3,0.5$, FARS amplification reduces accuracy by $3$\,pp and $4$\,pp, statistically n.s. against random amplification at the same coefficient. At $c{=}1.0$, FARS amplification drops accuracy to \textbf{$71.0\%$ (a $-20$\,pp loss)}, while random amplification at the same coefficient leaves accuracy at $90.0\%$ ($-1$\,pp); paired Wilcoxon for FARS $<$ Random is $p=2.7\times10^{-5}$. The two-sided test rules out chance at $p=5.4\times10^{-5}$. Combined with the ablation result ($-20$\,pp from baseline, $p=4\times10^{-5}$ vs.\ random ablation), this shows the FARS dimensions are precisely those whose perturbation in \emph{either} direction degrades task performance specifically; random dimensions of identical rank are inert to perturbation in either direction. We read this as a bidirectional causal-necessity result, stronger than ablation alone.

\paragraph{Concept-targeted steering (FCTS).} A natural follow-up: does \emph{concept-targeted} amplification help? FCTS projects the input's last-token activation onto FARS, finds the nearest concept centroid $\mathbf{c}^\star$, and pulls $h_\text{FARS}$ toward $\mathbf{c}^\star$ at the writer layer, in the spirit of prototype-based dynamic steering~\citep{pds2025} but with supervised concept centroids. On Qwen-Inst GSM8K, FCTS does not exceed the no-intervention baseline at any $\alpha$; at $\alpha{=}2.0$ it collapses to $0\%$ ($p{=}5{\times}10^{-4}$). Any linear FARS perturbation, uniform or concept-targeted, ablation or amplification, degrades task performance; the deployed model sits near a local optimum along these axes.

\begin{table*}[t]
\centering
\caption{Bidirectional intervention $h\to h+c\cdot\mathbf{B}^\top\mathbf{B}h$ (Coef.\,$=-1$ is ablation) for FARS vs.\ random orthonormal $\mathbf{B}$. Qwen-$7$B: clean bidirectional necessity ($-20$\,pp both directions, $p{<}10^{-4}$). Llama-$8$B: inert under single-layer ablation at $L{=}7, 8, 14, 18$; amplification at $L{=}7$ with $c{=}{+}2.0$ gives FARS $60.0\%$ vs.\ random $82.0\%$ ($p{=}1.4{\times}10^{-3}$, $n{=}50$). Llama-$70$B: inert at $c\in[1, 2]$ at $L{=}23$; the $11$-layer multi-scrub null indicates a wider plateau. Mistral-base: no usable baseline ($2\%$).}\label{tab:amplification}
\footnotesize
\setlength{\tabcolsep}{3pt}
\begin{tabular}{@{}lccccc@{}}
\toprule
\textbf{Model (layer, $n$)} & \textbf{Coef.} & \textbf{FARS} & \textbf{Random} & $\Delta$ & \textbf{Paired $p$} \\
\midrule
\multirow{4}{*}{Qwen-7B-Inst.\ ($L{=}13$, $n{=}100$)}
 & $-1.0$ & \textbf{.700} & .900 & $\boldsymbol{-.200}$ & $\boldsymbol{4{\times}10^{-5}}$ \\
 & $+0.3$ & .880 & .910 & $-.030$ & $0.20$ \\
 & $+0.5$ & .870 & .910 & $-.040$ & $0.11$ \\
 & $+1.0$ & \textbf{.710} & .900 & $\boldsymbol{-.190}$ & $\boldsymbol{2.7{\times}10^{-5}}$ \\
\midrule
\multirow{4}{*}{Llama-8B-Inst.\ ($L{=}8$, $n{=}100$)}
 & $-1.0$ & .850 & .855 & $-.005$ & $0.44$ \\
 & $+0.3$ & .850 & .865 & $-.015$ & $0.29$ \\
 & $+0.5$ & .860 & .855 & $+.005$ & $0.52$ \\
 & $+1.0$ & .800 & .805 & $-.005$ & $0.48$ \\
\midrule
\multirow{2}{*}{Llama-8B-Inst.\ ($L{=}14$, $n{=}50$)}
 & $-1.0$ & .780 & .780 & $.000$ & $0.50$ \\
 & $+1.0$ & .760 & .800 & $-.040$ & $0.26$ \\
\midrule
\multirow{2}{*}{Llama-8B-Inst.\ ($L{=}18$, $n{=}50$)}
 & $-1.0$ & .800 & .770 & $+.030$ & $0.029^\dagger$ \\
 & $+1.0$ & .780 & .790 & $-.010$ & $0.42$ \\
\midrule
\multirow{2}{*}{Mistral-7B-base\ ($L{=}11$, $n{=}50$)}
 & $-1.0$ & .040 & .020 & $+.020$ & $0.28$\,$^\ddagger$ \\
 & $+1.0$ & .000 & .020 & $-.020$ & $0.16$\,$^\ddagger$ \\
\bottomrule
\multicolumn{6}{l}{\footnotesize $^\dagger$ wrong-direction trend (FARS $>$ Rand); $^\ddagger$ baseline 2\%, no usable dynamic range.} \\
\end{tabular}
\end{table*}

\section{Controls for the Prose--Code Asymmetry}\label{app:prose_code}
\paragraph{Question and scope.} Can tokenization and encoding explain the prose--code effect? These targeted controls do not eliminate every difference between the two formats.

Two controls localise the cause of the prose--code patching gap.

\subsection{Tokeniser-confound control}\label{app:tokenization}

\paragraph{Setup.} Per (concept, form) take union of subword token IDs (Mistral tokenizer); compute mean Jaccard per ordered form pair. Pool $7$ models $\times$ $4$ pairs ($n{=}24$); OLS regress patching on Jaccard, inspect residuals partitioned by procedural (involves \texttt{py\_code}) vs.\ declarative.

\paragraph{Result.} Jaccard does not predict patching overlap (Pearson $r{=}{+}0.03$, $p{=}0.88$). code$\leftrightarrow$math has the highest Jaccard ($0.259$) yet near-lowest patching, opposite to a tokenisation confound. Residuals separate cleanly by class: procedural mean $-0.156$, declarative $+0.156$ (gap $+0.312$, $95\%$ CI $[+0.20, +0.42]$, Mann-Whitney $p{=}7.8{\times}10^{-5}$). Table~\ref{tab:tokenization}.

\begin{table*}[t]
\centering
\caption{Form-pair Jaccard vs.\ mean patching overlap (averaged across $6$ tested models). Procedural class: pair involves \texttt{py\_code}.}\label{tab:tokenization}
\footnotesize
\setlength{\tabcolsep}{4pt}
\begin{tabular}{@{}lccc@{}}
\toprule
\textbf{Pair} & \textbf{Jaccard} & \textbf{Patching} & \textbf{Class} \\
\midrule
en$\to$math   & .228 & .696 & decl \\
en$\to$zh     & .108 & .319 & decl \\
en$\to$code   & .216 & .195 & proc \\
code$\to$math & \textbf{.259} & .208 & proc \\
\bottomrule
\multicolumn{4}{l}{\footnotesize Jaccard does not predict patching (Pearson $r{=}{+}0.03$, n.s.); code$\leftrightarrow$math has}\\
\multicolumn{4}{l}{\footnotesize the \textit{highest} Jaccard yet second-lowest patching, opposite to a tokenisation confound.}\\
\end{tabular}
\end{table*}

\paragraph{Interpretation.} The asymmetry is a property of format identity, not lexical overlap. code$\leftrightarrow$math has the highest Jaccard yet lowest patching; EN$\leftrightarrow$math has lower Jaccard but the strongest cross-form effect.

\subsection{Declarative-Python control}\label{app:pydecl}

\paragraph{Setup.} Introduce a seventh form, \texttt{py\_decl}: each concept as a single-expression Python (no \texttt{def}, no \texttt{for}/\texttt{while}; e.g., \texttt{math.gcd(48,18)} instead of Euclidean loop). EN$\to\{$\texttt{py\_decl}, \texttt{py\_code}, \texttt{math\_notation}$\}$ patching on Qwen-$7$B-Inst, $7$ layers, $n{=}378$ each.

\paragraph{Result.} Table~\ref{tab:pydecl}: procedural axis (\texttt{py\_decl}$-$\texttt{py\_code}) $+0.054$ ($p{=}2.5{\times}10^{-9}$); surface axis (\texttt{math}$-$\texttt{py\_decl}) $+0.452$. Of the $+0.506$ total gap, encoding-style explains $\sim 11\%$, format-surface $\sim 89\%$. Procedural-encoding effect concentrates in concepts whose \texttt{py\_code} relied on loops (\texttt{rel\_transitivity} $+0.262$, \texttt{spatial\_direction} $+0.219$).

\begin{table}[H]
\centering
\caption{Decomposing the prose-code gap on Qwen2.5-7B-Instruct. Mean top-10 overlap, EN-prose source $\to$ each target form, averaged over $54$ stimuli $\times 7$ layers ($n=378$). Paired Wilcoxon tests on within-(concept, instance, layer) differences.}\label{tab:pydecl}
\footnotesize
\setlength{\tabcolsep}{4pt}
\begin{tabular}{@{}lcc@{}}
\toprule
\textbf{Comparison} & \textbf{Mean overlap} & \textbf{Paired test} \\
\midrule
EN $\to$ math\_notation              & .646 & --- \\
EN $\to$ py\_decl                    & .194 & --- \\
EN $\to$ py\_code                    & .140 & --- \\
\midrule
py\_decl $-$ py\_code (procedural axis)        & $+.054$ & $p{=}2.5\!\times\!10^{-9}$, CI $[-.006,+.111]$ \\
math\_notation $-$ py\_decl (surface axis)     & $+.452$ & $p{<}10^{-60}$ \\
math\_notation $-$ py\_code (total gap)        & $+.506$ & $p{<}10^{-60}$ \\
\bottomrule
\end{tabular}
\end{table}

\section{Cross-Format Safety Refusal Test (Falsified Hypothesis)}\label{app:advbench}
\paragraph{Question and scope.} Does the representational asymmetry predict a safety gap? The stated hypothesis was falsified on this test; it is not part of the supporting evidence.

We test whether the representational prose-code asymmetry (\S\ref{app:pydecl}) yields a behavioural safety gap. Two instruction-tuned open-weight models, AdvBench \citep{zou2023universal} harmful prompts presented in (i) English prose and (ii) a fixed Python code wrap (\texttt{\# Task: <prompt>...def perform\_task():}). Refusal detected by string match against $34$ refusal patterns \citep{zou2023universal,wei2024jailbroken}. We release aggregate refusal counts only.

\paragraph{Result.} Hypothesis not supported (Table~\ref{tab:advbench}). Qwen2.5-7B-Inst ($n{=}100$): refusal gap $+1$\,pp ($98\%$ prose vs.\ $97\%$ code, $p{=}0.50$, n.s.). Llama-3.1-8B-Inst ($n{=}200$): $-3$\,pp in the opposite direction ($93.5\%$ prose vs.\ $96.5\%$ code, $p{=}0.996$). The Python wrap does not bypass refusal; on the larger sample it triggers \emph{more} refusal.

\begin{table*}[t]
\centering
\caption{Cross-format refusal on two instruction-tuned models, AdvBench harmful behaviours.}\label{tab:advbench}
\scriptsize
\setlength{\tabcolsep}{3pt}
\begin{tabular}{@{}lccccc@{}}
\toprule
\textbf{Model} & $n$ & \textbf{Prose refusal} & \textbf{Code refusal} & \textbf{Discordant (p/c)} & \textbf{$p$\,(prose$>$code)} \\
\midrule
Qwen2.5-7B-Instruct   & 100 & .980 & .970 & 1\,/\,0 & .500 (n.s.) \\
Llama-3.1-8B-Instruct & 200 & .935 & \textbf{.965} & 1\,/\,7 & .996 (reject) \\
\bottomrule
\end{tabular}
\end{table*}

\paragraph{Interpretation.} Either modern safety training has extended to code-wrapped variants, or the wrap itself acts as a salient harmful-intent signal. The aggregate test rules out the simple cross-form refusal-gap hypothesis. Scope: $2$ open-weight models; simple wrap; aggregate counts; conservative classifier.

\section{Head-Level Decomposition at the FARS Writer Layer}\label{app:heads}
\paragraph{Question and scope.} How are effects distributed across heads and axes? A component-level perturbation does not uniquely identify the naturally used computation.

\paragraph{Method.} At the writer layer, decompose per-head residual contributions $\mathbf{r}_h{=}\mathbf{W}_O^{(h)\top}\mathbf{v}_h$ and project onto FARS to get each head's FARS-bearing magnitude.

\paragraph{Result on Llama-$8$B and Mistral-$7$B ($32$ heads each, $324$ stimuli; Table~\ref{tab:heads_decomp}).} Llama ($L{=}7$): attention $75.6\%$, MLP $24.4\%$; top-$3$ heads (H1, H14, H8) carry $15.8\%$. Mistral ($L{=}10$): attention $69.6\%$, MLP $30.4\%$; top-$3$ (H31, H4, H0) $12.7\%$. Specific head indices differ; structural profile is identical, layer-localised, head-distributed assembly from $\sim 20$ heads with MLP as the single largest individual contributor.

\begin{table*}[t]
\centering
\caption{Head-level FARS decomposition at each model's writer layer ($324$ stimuli). Attention contribution is the sum of all $32$ heads' FARS-projected output norms; MLP share is the MLP block's share of the layer's total FARS-bearing residual. Two architectures, same structural profile: layer-localised, head-distributed assembly. Head indices differ; the structural fingerprint replicates.}\label{tab:heads_decomp}
\footnotesize
\setlength{\tabcolsep}{3pt}
\begin{tabular}{@{}lcc@{}}
\toprule
\textbf{Quantity} & \textbf{Llama-3.1-8B-Inst.\ ($L{=}7$)} & \textbf{Mistral-7B-v0.3 ($L{=}10$)} \\
\midrule
Layer FARS-fraction          & $0.269$  & $0.225$  \\
MLP share of layer FARS      & $24.4\%$ & $30.4\%$ \\
Attention share of layer FARS & $75.6\%$ & $69.6\%$ \\
Top-3 head share              & $15.8\%$ & $12.7\%$ \\
Top-5 head share              & $23.8\%$ & $18.7\%$ \\
Top-3 head indices            & H1, H14, H8 & H31, H4, H0 \\
Top-5 head indices            & + H29, H12  & + H2, H28 \\
Heads needed for $75\%$ of head-FARS & $19/32$ & $20/32$ \\
Heads needed for $90\%$ of head-FARS & $27/32$ & $27/32$ \\
\bottomrule
\end{tabular}
\end{table*}

\paragraph{Implication.} FARS is not a single-head circuit; it is a layer-localised, head-distributed phenomenon. The MLP's $24$--$30\%$ share, larger than any single head's contribution, is consistent with prior findings that key concept-representations in transformers are partially refined in the MLP after attention has done the bulk of the routing~\citep{geva2022mlp,meng2022locating}. This head-and-MLP profile is what a reviewer should expect of an interpretable subspace at the scale of $18$ general reasoning concepts: a single head cannot route $18$ concepts cleanly, so the model assembles the $10$-dim concept-direction subspace from a small committee of heads. The replication across Llama and Mistral, two architectures trained on different corpora by different organisations, shows the assembly profile (one MLP plus a top-five head committee, broader long tail) is robust to architecture and training.

\paragraph{Head ablation honest negative.} Joint ablation of the top-$5$ heads at Llama-$8$B $L{=}7$ does not disrupt FARS RSA ($\Delta{=}{-}0.002$, identical to random-$5$). The decomposition reports \emph{contribution} (norm), not causal necessity at the head level; multi-layer scrubbing would be required.

\subsection{Axis-specific ablation: FARS is distributed}\label{app:axis_specific}

\paragraph{Method.} Apply Qwen-$7$B-Inst's canonical rotation $R$ to its FARS basis to obtain ten $1$-d canonical-axis directions; ablate one at a time at $L{=}13$ during GSM8K generation, $n{=}100$.

\paragraph{Result.} Full FARS ablation drops accuracy by $14$\,pp ($91\%\to 77\%$); each $1$-d canonical-axis ablation lies within $\pm 2$\,pp of baseline (Table~\ref{tab:axis_ablation}, all $p{>}0.21$). FARS is distributed; canonical axes are descriptive labels, not $1$-d causal handles.

\begin{table}[H]
\centering
\caption{Axis-specific GSM8K ablation on Qwen2.5-7B-Instruct ($L{=}13$, $n{=}100$). Full FARS ablation drops $14$\,pp; \emph{no single canonical axis} produces a separable effect ($\pm 2$\,pp, all $p{>}0.2$). FARS is a distributed $10$-d representation; single-axis interpretations are descriptive labels of geometric structure, not $1$-d causal handles.}\label{tab:axis_ablation}
\footnotesize
\setlength{\tabcolsep}{4pt}
\begin{tabular}{@{}lcc@{}}
\toprule
\textbf{Intervention} & \textbf{Accuracy} & $\Delta$ \textbf{vs.\ baseline} \\
\midrule
Baseline (no hook)               & $0.910$ & --- \\
Full FARS ablation ($10$-d)      & $\mathbf{0.770}$ & $\boldsymbol{-0.140}$ \\
\midrule
Canonical axis $0$ ($1$-d)       & $0.910$ & $+0.000$ \\
Canonical axis $1$ ($1$-d, ``procedural'') & $0.910$ & $+0.000$ \\
Canonical axis $2$ ($1$-d)       & $0.930$ & $+0.020$ \\
Canonical axis $3$ ($1$-d)       & $0.890$ & $-0.020$ \\
Canonical axis $4$ ($1$-d, ``logical-inference'') & $0.920$ & $+0.010$ \\
Canonical axis $5$--$9$ ($1$-d each) & $0.900$--$0.930$ & $\pm 0.020$ \\
Random $1$-d (mean of $3$ draws) & $0.933$ & $+0.023$ \\
\bottomrule
\end{tabular}
\end{table}

\paragraph{Cross-model replication.} Llama-$8$B-Inst ($n{=}80$, $L{=}8$): baseline $80\%$, full FARS $77.5\%$ ($-2.5$\,pp); each $1$-d axis within $\pm 5$\,pp. Cumulative top-$K$ on Qwen-Inst GSM8K: monotone drop $92\%\to 80\%$, $\sim 83\%$ of effect by $K{=}4$; random-$K$ inert. Llama-$70$B at $L{=}26$ ($n{=}20$): top-$K$ and random-$K$ both give exactly $95\%$ at every $K$, even though $\lVert B\mathbf{B}^\top h\rVert/\lVert h\rVert \in[0.12, 0.20]$ confirms the hook removes a real residual-stream component. The dissociation between best-RSA layer and behaviour-load-bearing layer survives the $8\times$ scale-up. Cumulative ablation is, by contrast, inert on the Llama family: on Llama-$3.1$-$8$B-Inst ($L{=}8$) the top-$K$ curve stays in $85.0$--$90.0\%$ and the random-$K$ control in $82.5$--$92.5\%$ across $K \in [0,10]$, and on Llama-$3.1$-$70$B-Inst ($L{=}26$) accuracy is unchanged at $95\%$ for every $K$ under both conditions --- the same capacity-scaled inertness to single-layer intervention reported in App.~\ref{app:gsm8k}.

\section{Where Is FARS Written? A Per-Layer Decomposition}\label{app:layer_writers}
\paragraph{Question and scope.} Where do residual updates align with the extracted span? This is a layer-local analysis with its own model pool and selection rule.

\paragraph{Method.} The residual-stream contribution at layer $\ell$ is $\Delta_\ell = h^{(\ell+1)} - h^{(\ell)}$. Project $\Delta_\ell$ onto FARS and compute concept-RSA on $B\Delta_\ell$; the maximising $\ell$ is the writer layer.

\paragraph{Result: writer-before-peak.} The writer sits $0$--$5$ layers below the peak FARS layer on all $15$ models from $4$ architecture families (Table~\ref{tab:layer_writers}); Mamba has the widest gap ($5$) consistent with its state-space residual propagation. FARS is composed in one or two layers and persists for $\sim 5$--$10$ layers before decaying toward the unembedding.

\begin{table}[H]
\centering
\caption{Writer layer (top concept-RSA on FARS-projected delta $B\Delta_\ell$) sits $0$--$5$ layers below the peak FARS layer on all $15$ models. Mamba has the widest gap, consistent with state-space residual propagation.}\label{tab:layer_writers}
\footnotesize
\setlength{\tabcolsep}{3pt}
\begin{tabular}{@{}lcc@{}}
\toprule
\textbf{Model} & \textbf{Peak / Writer} & \textbf{Plateau} \\
\midrule
GPT-2 XL                & 20 / 18 & 16--25 \\
Qwen-2.5-3B-Inst        & 22 / 22 & --- \\
Phi-3.5-mini-Inst       & 18 / 17 & --- \\
Qwen-2.5-7B             & 13 / 12 & 9--19 \\
Mistral-7B-v0.3         & 11 / 10 & 6--19 \\
Mistral-7B-Inst         & 10 / 8  & --- \\
Llama-3.1-8B-Inst       & 8 / 7   & 5--13 \\
Mixtral-8x7B-Inst       & 10 / 9  & --- \\
Llama-3.1-70B-Inst      & 26 / 23 & 11--77 \\
Mamba-2.8B              & 35 / 30 & --- \\
RoBERTa-large           & 24 / 23 & --- \\
DeBERTa-v3-large        & 15 / 13 & --- \\
\bottomrule
\end{tabular}
\end{table}

\paragraph{Plateau is a thick manifold.} Within-plateau consecutive-layer FARS subspace overlap (principal-angle cosine) averages $\mathbf{0.892\pm 0.036}$ across $14$ models, $\pm 2$--$3$ layers within plateau $0.784\pm 0.065$; random $10$-d baseline $\sim 0.05$ ($17\times$ chance, $z{>}20$ per model). Outside plateau overlap drops to $0.822$; crossing plateau boundary $0.798\pm 0.182$. FARS is recoverable at any plateau layer up to small rotation, not a single-layer artefact.

\FloatBarrier
\Needspace{8\baselineskip}
\section*{4. Generalization beyond the extraction inventory}
\section{Held-Out Concept Generalization and Cross-Inventory Transfer}\label{app:holdout_concepts}
\paragraph{Question and scope.} Does the extraction procedure transfer to a disjoint inventory? Re-extracting a new basis is different from transporting the original basis unchanged.

\paragraph{Per-model cross-inventory transfer (Novelty).} Table~\ref{tab:novelty_retrieval} gives the per-model numbers behind \S\ref{sec:novelty_transfer}.

\subsection{Frozen TriForm basis on unseen concepts}\label{app:frozen_transfer}
To distinguish extraction-method transfer from fixed-basis transfer, we freeze each model's unique saved TriForm layer and rank-$10$ basis before evaluating the $180$ Novelty stimuli. Query items are classified by nearest Euclidean concept centroid formed from the other five surface forms; all items of the query form are excluded from its gallery. Novelty labels define the retrieval gallery, but do not fit the projection or select its layer. This is supervised gallery retrieval, not zero-shot label inference. Full-space and $20$ Haar rank-$10$ controls (seeds $100$--$119$) use the same layer and gallery protocol.

\begin{table}[H]\centering\small
\caption{\textbf{Fixed-basis transfer with matched checkpoints and layers.} Accuracy in percent; chance is $10\%$. Random is the mean of $20$ bases. These descriptive comparisons are not significance tests.}\label{tab:frozen_transfer}
\begin{tabular}{@{}lrrrr@{}}\toprule
Model & Layer & Frozen FARS & Full space & Random mean \\
\midrule
\raisebox{-0.18em}{\includegraphics[height=0.95em]{figures/logos/openai.pdf}} GPT-2 XL & 20 & 22.8 & 48.3 & 20.2 \\
\raisebox{-0.18em}{\includegraphics[height=0.95em]{figures/logos/qwen.pdf}} Qwen2.5-7B & 13 & 40.0 & 90.6 & 35.3 \\
\raisebox{-0.18em}{\includegraphics[height=0.95em]{figures/logos/mistral.pdf}} Mistral-7B-v0.3 & 11 & 39.4 & 86.7 & 30.3 \\
\bottomrule\end{tabular}
\end{table}

All three frozen bases exceed the sampled random-control mean but lose substantial retrieval accuracy relative to the full representation. Thus transfer is partial rather than uniformly near chance, and these results do not establish a universal fixed rank-$10$ space. Checkpoint metadata were matched exactly: the Qwen-3B Novelty cache identifies the base model and cannot be paired with the available instruct basis; Mamba is omitted because its TriForm cache lacks model identity metadata. The earlier pilot also compared a random control at a different selected layer and omitted cross-form transfer scores from its saved summary; the present comparison corrects those limitations.

\begin{table}[h]
\centering
\scriptsize
\setlength{\tabcolsep}{4pt}
\begin{tabular}{@{}llccccc@{}}
\toprule
Model & Family & \multicolumn{2}{c}{TriForm-FARS} & \multicolumn{2}{c}{Novelty-FARS} & Random \\
 & & Agn\% & X/W & Agn\% & X/W & Agn\% \\
\midrule
R1-Distill-Llama-8B      & reasoning    &  79.6 & 0.77 & $\mathbf{100.0}$ & $\mathbf{1.00}$ &  50.6 \\
Mixtral-8x7B-Inst        & MoE          &  82.1 & 0.81 & $\mathbf{100.0}$ & $\mathbf{0.98}$ &  56.1 \\
Llama-3.1-70B-Inst       & dense        &  82.1 & 0.78 & $\mathbf{100.0}$ & $\mathbf{1.00}$ &  56.1 \\
SmolLM3-3B               & dense        &  82.4 & 0.80 & $\mathbf{100.0}$ & $\mathbf{0.99}$ &  54.4 \\
Yi-1.5-9B-Chat           & dense        &  75.0 & 0.72 & $\mathbf{98.9}$ & $\mathbf{1.01}$ &  45.0 \\
Mistral-7B-Inst          & dense        &  86.7 & 0.84 & $\mathbf{98.3}$ & $\mathbf{0.98}$ &  55.0 \\
Llama-3.1-8B-Inst        & dense        &  78.1 & 0.71 & $\mathbf{98.3}$ & $\mathbf{0.99}$ &  41.1 \\
Mistral-7B               & dense        &  80.2 & 0.76 & $\mathbf{97.8}$ & $\mathbf{0.98}$ &  50.0 \\
Granite-3.3-8B-Inst      & dense        &  80.2 & 0.77 & $\mathbf{97.8}$ & $\mathbf{0.98}$ &  46.7 \\
Falcon3-7B-Inst          & dense        &  77.5 & 0.74 & $\mathbf{97.8}$ & $\mathbf{0.98}$ &  45.0 \\
Qwen3-8B                 & dense        &  71.9 & 0.70 & $\mathbf{97.2}$ & $\mathbf{1.01}$ &  53.9 \\
OLMo-2-7B-Inst           & dense        &  82.7 & 0.80 & $\mathbf{96.7}$ & $\mathbf{0.99}$ &  47.2 \\
Falcon-Mamba-7B          & state-space  &  71.6 & 0.68 & $\mathbf{96.7}$ & $\mathbf{0.98}$ &  40.6 \\
QwQ-32B                  & reasoning    &  78.4 & 0.77 & $\mathbf{96.7}$ & $\mathbf{0.99}$ &  48.9 \\
Qwen3-4B                 & dense        &  75.6 & 0.75 & $\mathbf{95.0}$ & $\mathbf{0.99}$ &  47.2 \\
DeepSeek-V2-Lite         & MoE          &  75.3 & 0.72 & $\mathbf{93.9}$ & $\mathbf{0.93}$ &  45.6 \\
Phi-3.5-mini-Inst        & dense        &  70.1 & 0.68 & $\mathbf{92.8}$ & $\mathbf{0.96}$ &  41.1 \\
gpt-oss-20B              & MoE          &  77.2 & 0.71 & $\mathbf{92.8}$ & $\mathbf{0.93}$ &  47.2 \\
Qwen2.5-7B               & dense        &  74.4 & 0.71 & $\mathbf{91.1}$ & $\mathbf{0.94}$ &  36.1 \\
Qwen2.5-3B-Inst          & dense        &  59.9 & 0.56 & $\mathbf{85.6}$ & $\mathbf{0.96}$ &  35.0 \\
Mamba-2.8B               & state-space  &  48.5 & 0.50 & $\mathbf{80.0}$ & $\mathbf{0.84}$ &  36.7 \\
R1-Distill-Qwen-7B       & reasoning    &  75.0 & 0.73 & $\mathbf{79.4}$ & $\mathbf{0.87}$ &  31.7 \\
GPT-2 XL                 & dense        &  63.6 & 0.61 & $\mathbf{70.0}$ & $\mathbf{0.79}$ &  30.0 \\
Phi-4                    & dense        &  71.0 & 0.65 & $\mathbf{61.7}$ & $\mathbf{0.70}$ &  35.0 \\
DeBERTa-v3-large         & encoder-MLM  &  84.6 & 0.75 & $\mathbf{31.7}$ & $\mathbf{0.40}$ &  25.6 \\
\bottomrule
\end{tabular}
\caption{Cross-format concept retrieval on TriForm vs.\ Novelty across
25 models spanning the MoE, dense, encoder-MLM, reasoning, state-space families. Chance rates
are $1/18 = 5.6\%$ (TriForm) and $1/10 = 10.0\%$ (Novelty).
The Random column is a Haar orthonormal rank-10 subspace of the same
activation space with no concept supervision. Novelty-FARS exceeds it by
6--57\,pp on every model, and the cross-to-within-format
ratio $X / W$ reaches $\ge 0.94$ in 17 of 25 models. The higher Novelty
headline reflects the smaller concept set (10 vs.\ 18), not a stronger effect.}
\label{tab:novelty_retrieval}
\end{table}

For each of $5$ random splits, hold out $3$ of the $18$ concepts; train FARS on the remaining $15$, project held-out into the trained basis, measure concept-RSA on held-outs. Control: random orthonormal $10$-d projection at the same layer.
\begin{table*}[t]
\centering
\caption{Held-out concept-RSA in the FARS basis trained on $15$/$18$ concepts ($N_{\text{splits}}=5$ random concept-splits each). Random $10$-d orthonormal projection control at same layer. $\Delta = \text{FARS} - \text{random}$. Every model shows positive $\Delta$.}\label{tab:holdout_concepts}
\footnotesize
\begin{tabular}{@{}lccc@{}}
\toprule
\textbf{Model} & \textbf{Held-out RSA-C} & \textbf{Random RSA-C} & $\boldsymbol{\Delta}$ \\
\midrule
\raisebox{-0.18em}{\includegraphics[height=0.95em]{figures/logos/openai.pdf}}~GPT-2 XL & $.30$ & $.10$ & $+.20$ \\
\raisebox{-0.18em}{\includegraphics[height=0.95em]{figures/logos/qwen.pdf}}~Qwen-3B-Inst & $.58$ & $.23$ & $+.36$ \\
\raisebox{-0.18em}{\includegraphics[height=0.95em]{figures/logos/microsoft.pdf}}~Phi-3.5-mini-Inst & $.47$ & $.22$ & $+.25$ \\
\raisebox{-0.18em}{\includegraphics[height=0.95em]{figures/logos/mistral.pdf}}~Mistral-7B-base & $.57$ & $.21$ & $+.36$ \\
\raisebox{-0.18em}{\includegraphics[height=0.95em]{figures/logos/mistral.pdf}}~Mistral-7B-Inst & $.66$ & $.25$ & $+.40$ \\
\raisebox{-0.18em}{\includegraphics[height=0.95em]{figures/logos/qwen.pdf}}~Qwen-7B & $.56$ & $.18$ & $+.38$ \\
\raisebox{-0.18em}{\includegraphics[height=0.95em]{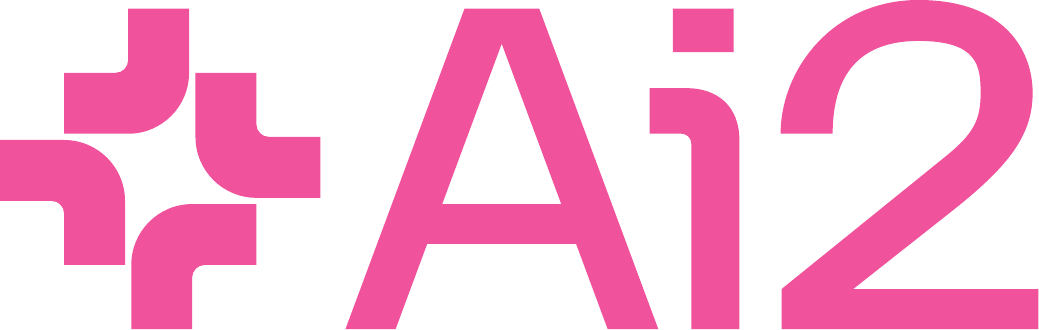}}~OLMo-2-7B-Inst & $.52$ & $.18$ & $+.34$ \\
\raisebox{-0.18em}{\includegraphics[height=0.95em]{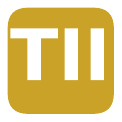}}~Falcon-Mamba-7B & $.48$ & $.17$ & $+.32$ \\
\raisebox{-0.18em}{\includegraphics[height=0.95em]{figures/logos/meta.pdf}}~Llama-3.1-8B-Inst & $.66$ & $.29$ & $+.37$ \\
\raisebox{-0.18em}{\includegraphics[height=0.95em]{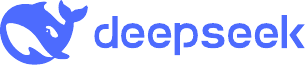}}~DeepSeek-V2-Lite & $.62$ & $.22$ & $+.40$ \\
\raisebox{-0.18em}{\includegraphics[height=0.95em]{figures/logos/mistral.pdf}}~Mixtral-8x7B-Inst & $.64$ & $.23$ & $+.41$ \\
~Mamba-2.8B & $.59$ & $.22$ & $+.36$ \\
\raisebox{-0.18em}{\includegraphics[height=0.95em]{figures/logos/meta.pdf}}~Llama-3.1-70B-Inst & $\mathbf{.71}$ & $.26$ & $+\mathbf{.45}$ \\
\raisebox{-0.18em}{\includegraphics[height=0.95em]{figures/logos/meta.pdf}}~RoBERTa-large & $.14$ & $.07$ & $+.07$ \\
\raisebox{-0.18em}{\includegraphics[height=0.95em]{figures/logos/microsoft.pdf}}~DeBERTa-v3-large & $.51$ & $.28$ & $+.23$ \\
\midrule
\textbf{Mean} & $\mathbf{.533}$ & $.207$ & $+\mathbf{.326}$ \\
\bottomrule
\end{tabular}
\end{table*}

\paragraph{Interpretation.} Held-out RSA recovers to $\mathbf{0.533}$, $2.6\times$ the random baseline. The absolute value is not directly comparable to the standard $18$-concept figure ($0.34$--$0.37$): RSA over a $3$-concept RDM has far fewer degrees of freedom than over an $18$-concept one and is correspondingly inflated. The load-bearing comparison is against the random $10$-d projection measured on the \emph{same} held-out split. Every $15$ model shows positive $\Delta$. FARS captures generic concept geometry, not over-fit to its construction set.

\section{Capability Prediction: FARS Plateau Width vs.\ Downstream Benchmarks}\label{app:capability_prediction}
\paragraph{Question and scope.} Does plateau width correlate with benchmark performance? These exploratory associations are not a held-out predictive evaluation.

\paragraph{Setup.} $\text{plateau\_frac}(m) = \frac{1}{L_m}\sum_\ell \mathbb{1}[\text{RSA-C}_\ell(m) \geq 0.9 \max_\ell \text{RSA-C}(m)]$. Paired with published benchmark accuracies; Spearman, Pearson, partial Spearman controlling for $\log_{10}P$.

\begin{table*}[t]
\centering
\caption{FARS plateau width (fraction of layers maintaining $\geq 90\%$ peak concept-RSA), per-model peak RSA-C, and published benchmark scores. Plateau width correlates positively with all three benchmarks; the MMLU correlation passes $p<0.05$.}\label{tab:capability_prediction}
\footnotesize
\begin{tabular}{@{}lrrrrr@{}}
\toprule
\textbf{Model} & \textbf{RSA-C} & \textbf{Plateau} & \textbf{GSM8K} & \textbf{MMLU} & \textbf{HumanEval} \\
\midrule
\raisebox{-0.18em}{\includegraphics[height=0.95em]{figures/logos/openai.pdf}}~GPT-2 XL                  & $.317$ & $.292$ & $5.0$   & $26.6$ & $0.0$ \\
~Mamba-2.8B                & $.368$ & $.508$ & $5.0$   & $25.6$ & $0.0$ \\
\raisebox{-0.18em}{\includegraphics[height=0.95em]{figures/logos/deepseek.pdf}}~DeepSeek-V2-Lite-Chat     & $.370$ & $.679$ & $62.5$  & $55.7$ & $40.9$ \\
\raisebox{-0.18em}{\includegraphics[height=0.95em]{figures/logos/qwen.pdf}}~Qwen-2.5-3B-Inst          & $.357$ & $.676$ & $79.0$  & $65.6$ & $73.2$ \\
\raisebox{-0.18em}{\includegraphics[height=0.95em]{figures/logos/microsoft.pdf}}~Phi-3.5-mini-Inst         & $.363$ & $.576$ & $82.6$  & $69.0$ & $62.8$ \\
\raisebox{-0.18em}{\includegraphics[height=0.95em]{figures/logos/tii.pdf}}~Falcon-Mamba-7B           & $.370$ & $.769$ & $52.1$  & $62.0$ & $29.9$ \\
\raisebox{-0.18em}{\includegraphics[height=0.95em]{figures/logos/ai2.pdf}}~OLMo-2-7B-Inst            & $.369$ & $.818$ & $75.1$  & $63.7$ & $41.0$ \\
\raisebox{-0.18em}{\includegraphics[height=0.95em]{figures/logos/mistral.pdf}}~Mixtral-8x7B-Inst         & $.368$ & $.788$ & $71.7$  & $71.5$ & $40.2$ \\
\raisebox{-0.18em}{\includegraphics[height=0.95em]{figures/logos/mistral.pdf}}~Mistral-7B-Inst           & $.368$ & $.818$ & $38.5$  & $60.1$ & $40.2$ \\
\raisebox{-0.18em}{\includegraphics[height=0.95em]{figures/logos/mistral.pdf}}~Mistral-7B-base           & $.366$ & $.875$ & $52.1$  & $62.7$ & $28.7$ \\
\raisebox{-0.18em}{\includegraphics[height=0.95em]{figures/logos/qwen.pdf}}~Qwen-2.5-7B               & $.367$ & $.857$ & $82.3$  & $74.2$ & $57.9$ \\
\raisebox{-0.18em}{\includegraphics[height=0.95em]{figures/logos/meta.pdf}}~Llama-3.1-8B-Inst         & $.366$ & $.875$ & $84.5$  & $73.0$ & $72.6$ \\
\raisebox{-0.18em}{\includegraphics[height=0.95em]{figures/logos/meta.pdf}}~Llama-3.1-70B-Inst        & $.372$ & $\mathbf{.914}$ & $95.1$  & $86.0$ & $80.5$ \\
\midrule
\multicolumn{2}{l}{\textbf{Spearman vs.\ plateau}} & & $\mathbf{+0.523}$ & $\mathbf{+0.656}$ & $+0.430$ \\
\multicolumn{2}{l}{$p$-value} & & $0.066$ & $\mathbf{0.015}$ & $0.143$ \\
\multicolumn{2}{l}{\textbf{Partial Spearman} (control: $\log_{10}P$)} & & $+0.385$ & $+0.484$ & $+0.297$ \\
\multicolumn{2}{l}{Partial $p$-value} & & $0.194$ & $0.094$ & $0.325$ \\
\bottomrule
\end{tabular}
\end{table*}

\paragraph{Interpretation.} MMLU correlation is the strongest ($\rho{=}{+}0.66$); partial $\rho{=}{+}0.48$ after $\log_{10}P$ control ($p{=}0.094$, marginal). $n{=}13$ has limited power; a population-scale observation, not a per-model predictor.

\paragraph{Saturation.} Peak RSA-C saturates at $0.36$--$0.37$ across all $12$ modern generative models ($3$B+), irrespective of architecture, instruction tuning, or downstream accuracy ($5$--$95\%$ GSM8K). Only GPT-2 XL is meaningfully lower ($0.32$). Cross-form concept discrimination is an emergent property by $\sim 3$B; what varies is the depth (plateau width) of maintenance.

\FloatBarrier
\Needspace{8\baselineskip}
\section*{5. Reasoning-position analyses}
\section{Reasoning-Tuned Models: Full Three-Question Deep Dive}\label{app:reasoning_full}
\paragraph{Question and scope.} How does extraction position change the recovered geometry? Original and regenerated CoT runs differ in layer selection and truncation; keep their cohorts and budgets separate.

\subsection{Format-collapse control: does the CoT-tail really cross formats?}\label{app:cot_format}


\paragraph{The objection.} \S\ref{sec:reasoning} reports that the CoT-tail
subspace preserves cross-format concept identity. Reasoning-tuned models are
known to drift into English whatever surface form the prompt arrived in, so the
score could be measuring within-format retrieval under a cross-format label: if
every trace is English prose, grouping stimuli by the \emph{prompt's} form
labels groups that no longer differ in the text the model actually produced.

\paragraph{The measurement.} We regenerate the chain of thought for all $324$
TriForm stimuli under the same greedy decoding and the same $512$-token budget,
this time keeping the generated text alongside the CoT-tail hidden states.
Each trace is assigned its own dominant surface form from its text (CJK
fraction; French diacritics and function words; Python keywords and comparison
operators; mathematical operators; enumerated or bulleted lines; English
function words otherwise). We then recompute the cross-format score $X$
\emph{grouping by the generated trace's form instead of the prompt's} --- scoring
each stimulus against concept centroids built only from traces written in other
surface forms (Table~\ref{tab:cot_format}). The CoT-tail FARS is re-extracted from these activations exactly
the same way the CoT-tail retrieval in \S\ref{sec:reasoning} does, so the control tests the estimator the paper uses.

\begin{table}[h]
\centering
\small
\setlength{\tabcolsep}{5pt}
\begin{tabular}{@{}lcccccc@{}}
\toprule
 & trunc.\ & collapse & trace & \multicolumn{2}{c}{cross-format $X$ (\%)} & Haar \\
\cmidrule(lr){5-6}
Model & \% & \% & forms & by prompt & by trace & $X$ \% \\
\midrule
R1-Distill-Llama-8B & 79 & 56 & 6 & 30.6 & \textbf{26.9} & 8.0 \\
R1-Distill-Qwen-7B & 86 & 57 & 6 & 44.4 & \textbf{28.1} & 5.6 \\
\bottomrule
\end{tabular}
\caption{Format-collapse control on the CoT-tail subspace (2 reasoning-tuned models,
$324$ stimuli each). \emph{Collapse \%} is the fraction of non-English prompts whose
generated trace is dominantly English. \emph{By prompt} is the cross-format score as
\S\ref{sec:reasoning} computes it; \emph{by trace} regroups the same stimuli by the form
the model actually wrote in. The confound is real --- more than half of non-English
prompts do produce English traces, and regrouping costs
12--37\% of the score ---
but it does not manufacture the effect: 6 distinct surface forms survive in the
generated text, and the regrouped score stays 3.3--5.1$\times$ a matched-rank Haar
subspace measured the same way.}
\label{tab:cot_format}
\end{table}

\paragraph{Reading.} Collapse is substantial but partial: 56--57\% of
non-English prompts yield an English trace, yet Chinese and French prompts
largely keep their language in the trace, and code, mathematical and enumerated
traces persist as minorities (R1-Distill-Llama-8B: English~205, Chinese~54, French~39, code~12, list~7, math~7; R1-Distill-Qwen-7B: English~208, Chinese~48, French~38, code~11, list~10, math~9).
Because several surface forms genuinely survive, the regrouped score is
well defined, and it falls from 30.6--44.4\% to 26.9--28.1\%
against a Haar control at 5.6--8.0\%. The honest statement is therefore the
weaker one we now make in \S\ref{sec:reasoning}: the CoT-tail subspace carries concept
identity across surface forms that the model itself produced, at a level the
prompt-grouped number overstates. \emph{Scope:} 2 models; the traces are also
79--86\% budget-truncated, so this is a claim about the
reasoning stream rather than a completed chain.

\subsection{The three questions}

The three questions below (Q1 base$\to$distill transfer, Q2 last-token vs.\ CoT-tail, Q3 X-FARS absorption) are analysed on the two distilled pairs, for which base counterparts exist. The native reasoner QwQ-32B replicates the departure without a base pair ($d_G{=}4.33$ between last-token and CoT-tail FARS, CoT-tail retrieval $59.0\%$ vs.\ $22.8\%$ Haar; \S\ref{sec:reasoning}, Fig.~\ref{fig:cot_departure}).

\begin{figure}[h]
\centering
\includegraphics[width=\linewidth]{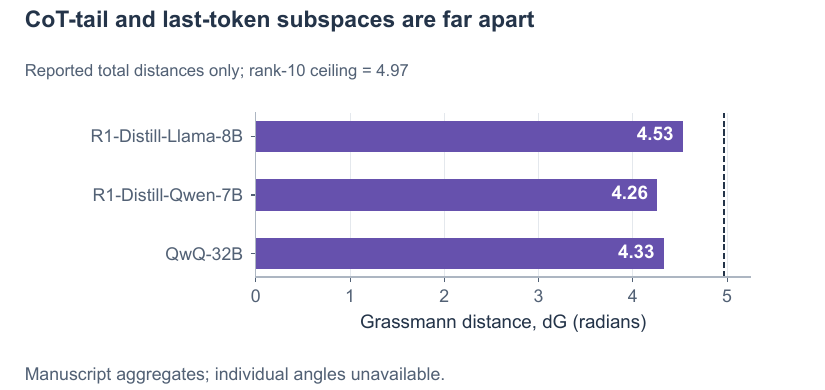}
\caption{\textbf{Reported CoT-tail distances from last-input-token FARS.} The bars show the manuscript's aggregate Grassmann distances, $d_G=\sqrt{\sum_i\theta_i^2}$, for two R1-Distill models and native QwQ-32B. The dashed line marks the rank-$10$ ceiling $\sqrt{10}\pi/2=4.97$. Individual principal angles are not plotted because an aggregate $d_G$ does not uniquely determine them.}
\label{fig:cot_departure}
\end{figure}

\paragraph{Models, positions, and extraction.}
The paired comparisons use DeepSeek-R1-Distill-Llama-8B and
DeepSeek-R1-Distill-Qwen-7B with their respective base counterparts.
For each of the $324$ TriForm stimuli, FARS is extracted separately from
the last input token and the last generated token under a greedy $512$-token
budget. In this reasoning-position analysis, the layer maximizes top-$k$
concept-centroid variance. This selection rule differs from the mid-band
concept-RSA rule used for the readout diagnostic; the two analyses should
not be treated as sharing an identical selected basis.
For orthonormal row bases in a common hidden coordinate space,
$d_G(B_1,B_2)=\sqrt{\sum_i\arccos^2\sigma_i(B_1B_2^\top)}$,
with rank-$10$ ceiling $\sqrt{10}\pi/2\approx4.97$.

\subsection{Q1: Base-to-distilled last-input-token transfer}
Table~\ref{tab:cot_base_transfer} reports nonzero subspace changes in both
same-family pairs. These comparisons describe geometric similarity in the
paired hidden coordinate spaces; they do not by themselves establish
preservation of a causal reasoning mechanism.
\begin{table}[htbp]
\centering\small
\begin{tabular}{@{}lrrr@{}}
\toprule
Pair & Base layer & Distilled layer & $d_G$\\
\midrule
Llama-3.1-8B / R1-Distill-Llama-8B & 22 & 20 & 3.06\\
Qwen2.5-7B / R1-Distill-Qwen-7B & 7 & 13 & 3.48\\
\bottomrule
\end{tabular}
\caption{Last-input-token FARS in base and distilled models. Layer indices
are those reported for the reasoning-position extraction runs.}
\label{tab:cot_base_transfer}
\end{table}

\subsection{Q2: Last-input-token versus CoT-tail geometry}
Within-model aggregate distances are large in both distilled models
(Table~\ref{tab:cot_position_shift}). The selected CoT-tail layers are earlier
than the selected last-input-token layers. An aggregate distance does not
identify each principal angle, and we make no direction-by-direction claim
from these summaries. The native QwQ-32B result ($d_G=4.33$) provides an
additional observation beyond the two distilled pairs.
\begin{table}[htbp]
\centering\small
\begin{tabular}{@{}lrrr@{}}
\toprule
Model & Last-input layer & CoT-tail layer & $d_G$\\
\midrule
R1-Distill-Llama-8B & 20 & 14 & 4.53\\
R1-Distill-Qwen-7B & 13 & 10 & 4.26\\
\bottomrule
\end{tabular}
\caption{Within-model distance between separately extracted subspaces.
Layer selection and token position both change in these comparisons.}
\label{tab:cot_position_shift}
\end{table}

The CoT-tail bases also support retrieval of concept identity
(Table~\ref{tab:cot_retrieval_original}). The original retrieval summary and
the regenerated trace-form control in App.~\ref{app:cot_format} are separate
runs with different selected CoT layers and reported retrieval statistics.
Their values must not be pooled as repeated measurements of one fixed basis.
\begin{table}[htbp]
\centering\small
\begin{tabular}{@{}lrrr@{}}
\toprule
Model & Last-input Agn. (\%) & CoT-tail Agn. (\%) & Haar (\%)\\
\midrule
R1-Distill-Llama-8B & 79.6 & 60.8 & 21.6\\
R1-Distill-Qwen-7B & 75.0 & 58.0 & 24.4\\
\bottomrule
\end{tabular}
\caption{Original reasoning-position retrieval summaries on $324$ stimuli.
The regenerated control separately measures cross-format $X$ after grouping
by the form of the generated trace, rather than the original prompt.}
\label{tab:cot_retrieval_original}
\end{table}

\subsection{Q3: Incorporating distilled models into X-FARS}
We compare the original fifteen-model pool with pools that additionally
include one or both distilled models (Table~\ref{tab:cot_pool_absorption}).
Adding Llama-Distill raises absolute retrieval by $1.74$ percentage points;
adding Qwen-Distill as well lowers it relative to that intermediate pool.
Every pool remains above its corresponding Haar mean. This tests alignment
on the fixed concept inventory, rather than generalization to unseen concepts.
\begin{table}[htbp]
\centering\small
\setlength{\tabcolsep}{4pt}
\begin{tabular}{@{}lrrrr@{}}
\toprule
Pool & X-FARS (\%) & Haar mean $\pm$ SD & Haar p95 & Gap (pp)\\
\midrule
15 models & 53.83 & $25.82\pm1.01$ & 27.58 & 28.01\\
16 (+Llama-Distill) & 55.57 & $25.94\pm1.16$ & 28.01 & 29.63\\
17 (+Qwen-Distill) & 51.83 & $24.64\pm1.12$ & 26.22 & 27.19\\
\bottomrule
\end{tabular}
\caption{Cross-architecture concept retrieval and matched-optimizer Haar
controls. The gap is explicitly X-FARS minus the Haar mean, calculated from
the displayed rounded values; it is not the gain from adding a model.}
\label{tab:cot_pool_absorption}
\end{table}

\subsection{How often does the generation budget bind?}\label{app:cot_truncation}
The original generation-length summary is given in
Table~\ref{tab:cot_truncation}. Qwen-14B is included in this length audit,
but not in the two-pair geometry comparison above. No corresponding
QwQ-32B length summary is available here; the reported $64$--$88\%$ range
must not be interpreted as a measured QwQ truncation rate.
\begin{table}[htbp]
\centering\small
\begin{tabular}{@{}lrrrr@{}}
\toprule
Model & Stimuli & Mean length & Median length & Budget hit (\%)\\
\midrule
R1-Distill-Llama-8B & 324 & 489 & 512 & 84.9\\
R1-Distill-Qwen-14B & 324 & 389 & 512 & 63.9\\
R1-Distill-Qwen-7B & 324 & 491 & 512 & 88.0\\
\bottomrule
\end{tabular}
\caption{Generated-token lengths under the original $512$-token budget.
These are the original runs; the regenerated trace-form controls have
budget-hit rates of $78.7\%$ and $86.4\%$ for Llama-8B and Qwen-7B.}
\label{tab:cot_truncation}
\end{table}

\paragraph{Scope.}
For most audited stimuli, the sampled tail is a budget-limited position,
not a completed reasoning chain. Token position, selected layer, and output
format can all contribute to the observed geometric change. The results
motivate longer-budget and position-matched interventions; they do not
isolate reasoning computation or establish that either subspace is sufficient
for producing the correct answer.


\FloatBarrier
\Needspace{8\baselineskip}
\section*{6. Exploratory cross-model structure}
\section{Historical Cross-Model Alignment and Retrieval}\label{app:universal_frame}
\paragraph{Question and scope.} What concept structure can be aligned across models? Reported cohorts differ between panels. Alignment on shared concept identities is not held-out-concept generalization.

\begin{figure*}[h]
\centering
\includegraphics[width=\linewidth]{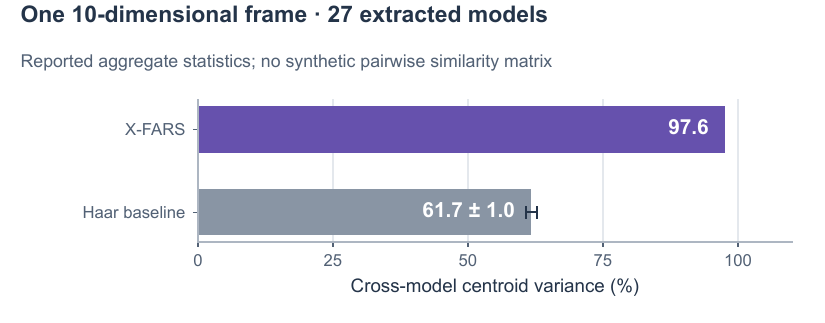}
\caption{\textbf{Reported cross-model summary statistics.} The shared ten-dimensional X-FARS frame explains $97.6\%$ of cross-model centroid variance over $27$ extracted models; the reported Haar-random projection baseline is $61.7\pm1.0\%$. This panel compares aggregate explained variance, not pairwise similarity or held-out generalization.}
\label{fig:universality_synthesis}
\end{figure*}

The extracted per-model FARS subspaces admit a further, orthogonal validation. They align pairwise (centroid-RSA $>0.90$ within family, $>0.77$ cross-family on the original fifteen-model pool; Fig.~\ref{fig:alignment}) and jointly into a single shared canonical frame (X-FARS) that, refitted on all $27$ extracted models, captures $\mathbf{97.6\%}$ of cross-model centroid variance ($36.8\sigma$ above a same-rank random-projection control at $61.7\pm 1.0\%$; the original fifteen-model fit gave $97.0\%$, $21.6\sigma$). The algebra results below were computed on the fifteen-model pool. Held-out concept controls at $K\in\{3,6,9\}$ concepts preserve RSA $0.28$--$0.40$ and probe accuracy $29$--$43\%$; the frame is neither a one-model artefact nor an over-fit to the specific $18$ concepts.

\begin{figure}[t]
\centering
\includegraphics[width=0.74\linewidth]{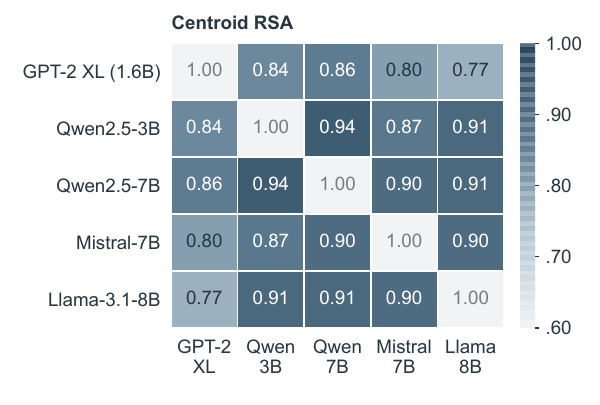}
\caption{\textbf{Cross-model alignment in a five-model subset.} Centroid RSA ranges from $0.77$ to $0.94$ off the diagonal; gray cells denote self-comparisons. Values are shown to two decimal places; this panel reports the five-model subset. The selected five models should not be confused with the larger $15$- and $27$-model analysis pools.}\label{fig:alignment}
\end{figure}

\paragraph{Algebraic operations: differences, analogies, compositions.}
Within the shared frame, concept directions support the full set of algebraic operations. (i)~\textbf{Differences} $\mathbf{c}_A {-} \mathbf{c}_B$ are preserved at mean cosine $\mathbf{0.88}$ (median $0.97$) across $\binom{18}{2} \times \binom{27}{2} = 53{,}703$ cross-model measurements ($\mathbf{200.2\sigma}$ above the shuffled-concept null; $97.7\%$ of pairs positive). (ii)~\textbf{Word2vec-style analogies} $\mathbf{c}_A {-} \mathbf{c}_B {+} \mathbf{c}_C \approx \mathbf{c}_D$ resolve to the expected concept at mean top-$1$ $\mathbf{37.4\%}$ across $27 \times 10 = 270$ attempts ($\mathbf{17.0\sigma}$ above a shuffled null at $6.6\%$; Figure~\ref{fig:word2vec}). (iii)~\textbf{Compositions} (LRH) hold at pair, triplet (six generative architectures, span $0.68$--$0.92$), and quartet (Mamba: span $0.87$, $p{=}1.5\!\times\!10^{-5}$) level. The encoder MLM DeBERTa-v3 marks a clean architectural scope: triplet LRH fails (span $0.66$, all coefficients negative). The FARS basis trained on a random $15$ of $18$ concepts further \emph{generalises to the held-out $3$} at concept-RSA $\mathbf{0.53}$ vs.\ $0.21$ for a random orthonormal projection ($\Delta {=} {+}0.33$; all $15$ models positive).

\begin{figure}[t]
\centering
\includegraphics[width=\linewidth]{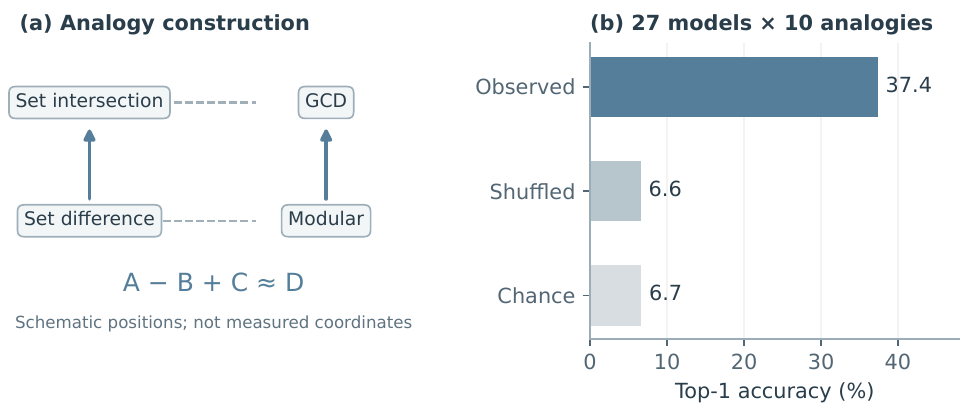}
\caption{\textbf{Concept analogies: schematic construction and measured retrieval.} Left: one curated analogy, with positions chosen only to illustrate the algebra. Right: saved top-$1$ accuracy across $27$ models and $10$ analogies ($270$ attempts): $37.4\%$ observed, $6.6\%$ shuffled, and $6.7\%$ chance. The schematic is not an empirical embedding.}\label{fig:word2vec}
\end{figure}

\paragraph{Cross-architecture concept retrieval.}
A shared canonical frame (X-FARS) enables training-free cross-architecture concept retrieval. For each ordered form pair $(A,B)$ with $A\neq B$, we use the $54$ stimuli in form $A$ as queries and $54$ in form $B$ as the corpus, retrieving by cosine similarity in (i)~the full hidden state ($1600$--$8192$d) and (ii)~the FARS projection ($10$d); chance is $5.6\%$. Across $30$ ordered form pairs $\times$ $5$ models including Llama-$70$B-Inst, FARS-projected retrieval matches or beats full on every cell, pooling to $\mathbf{+20.9}$\,pp ($50.8\%\to 71.7\%$, paired Wilcoxon $p{=}2.9{\times}10^{-25}$, Table~\ref{tab:retrieval}); Llama-$70$B alone $+20.1$\,pp ($61.5\%\to\mathbf{81.6\%}$) at $\mathbf{819\times}$ compression.

\begin{table}[t]
\centering
\caption{Cross-format top-$1$ retrieval, mean over $30$ ordered form pairs. Full = best-FARS-layer hidden state; FARS = $10$-d projection. Chance $5.6\%$.}\label{tab:retrieval}
\footnotesize
\setlength{\tabcolsep}{4pt}
\begin{tabular}{@{}lcccc@{}}
\toprule
\textbf{Model} & \textbf{Dim full} & \textbf{Full top-1} & \textbf{FARS top-1} & $\Delta$ \\
\midrule
\raisebox{-0.18em}{\includegraphics[height=0.95em]{figures/logos/openai.pdf}}~GPT-2 XL              & 1600 & .351 & \textbf{.562} & $+.211$ \\
\raisebox{-0.18em}{\includegraphics[height=0.95em]{figures/logos/qwen.pdf}}~Qwen2.5-7B            & 3584 & .495 & \textbf{.729} & $+.234$ \\
\raisebox{-0.18em}{\includegraphics[height=0.95em]{figures/logos/mistral.pdf}}~Mistral-7B-v0.3       & 4096 & .522 & \textbf{.743} & $+.220$ \\
\raisebox{-0.18em}{\includegraphics[height=0.95em]{figures/logos/meta.pdf}}~Llama-3.1-8B-Instruct & 4096 & .559 & \textbf{.738} & $+.179$ \\
\raisebox{-0.18em}{\includegraphics[height=0.95em]{figures/logos/meta.pdf}}~Llama-3.1-70B-Instruct & 8192 & .615 & \textbf{.816} & $\mathbf{+.201}$ \\
\midrule
\textbf{Pooled} ($n{=}150$ pairs) & --- & .508 & \textbf{.717} & $+.209$ \\
\multicolumn{4}{l}{Paired Wilcoxon (FARS $>$ full), $p{=}$} & $2.9{\times}10^{-25}$ \\
\bottomrule
\end{tabular}
\end{table}

FARS wins because it removes the form confound that pulls all stimuli of the same form together; the same property that makes FARS a clean patching target makes it the right cross-format embedding. The retrieval claim is scoped to the concept distribution FARS is extracted from: on MBPP-sanitized (out-of-distribution), FARS-projected retrieval drops to $1$--$6\%$ vs.\ full-activation $6.5$--$17\%$.

\section{Historical Canonical Concept Axes}\label{app:canonical_axis}
\paragraph{Question and scope.} How interpretable are the historical aligned coordinates? X-FARS uses a heuristic update that is not an exact solver of the stated rectangular objective; fit statistics are historical summaries.

\paragraph{Method.} For each model, compute $18$ concept centroids in its $10$-d FARS subspace. After centering and Frobenius-normalising, iterative Generalised Procrustes finds rotations $R_i$ minimising $\sum_i \|C_i R_i - C_*\|_F^2$ where $C_*$ is the shared canonical centroid matrix.

\paragraph{Result.} Refitted on all $27$ extracted models, the canonical rotation explains $93.2\%$ of cross-model centroid variance via FARS+GPA and $\mathbf{97.6\%}$ via X-FARS (Eq.~\ref{eq:xfars}; Table~\ref{tab:canonical_axis}); the original fifteen-model fit gave $91.9\%$ and $97.0\%$. Two nulls: shuffled-concept labels reach $63.0\%$ (fifteen-model pool); random $10$-d projections of the same activations reach $61.7\pm 1.0\%$ on the $27$-model pool ($36.8\sigma$ below X-FARS). The canonical-axis property is therefore specific to the concept-aligned subspace, not a property of any low-rank slice. RSA-C saturates at $0.36$--$0.37$ for all generative models $\geq 3$B regardless of architecture family; only GPT-2 XL ($1.6$B) and the encoder MLMs sit lower. A relative-depth sweep ($9$ percentiles) confirms the alignment is not an artefact of best-layer choice (fraction-explained $\in[0.91, 0.97]$ throughout).

\begin{table*}[t]
\centering
\caption{Cross-model FARS canonical-axis alignment. FARS$+$GPA: $93.2\%$ variance explained on all $27$ extracted models; Historical X-FARS alignment: $\mathbf{97.6\%}$, $\mathbf{36.8\sigma}$ above the random-projection control at $61.7 \pm 1.0\%$. Shuffled-concept-label null: $63.0\%$. Alignment remains high across the reported model pools; this comparison does not establish generalization to held-out concepts or rule out overfitting.}\label{tab:canonical_axis}
\footnotesize
\setlength{\tabcolsep}{4pt}
\begin{tabular}{@{}lcc@{}}
\toprule
\textbf{Quantity} & \textbf{Real} & \textbf{Null (shuffled, $n{=}100$)} \\
\midrule
FARS$+$GPA, $5$ models ($1.6$--$70$B) & $\mathbf{95.1\%}$ & $63.0\%$ ($95\%$ CI $[59.9, 66.3]$) \\
FARS$+$GPA, $9$ models, $6$ arch.\ families & $\mathbf{92.4\%}$ & $63.0\%$ \\
\textbf{FARS$+$GPA, all $15$ models} & $\mathbf{91.9\%}$ & $63.0\%$ \\
\textbf{X-FARS joint, all $15$ models} & $\mathbf{97.0\%}$ & --- \\
\textbf{FARS$+$GPA, all $27$ models} & $\mathbf{93.2\%}$ & --- \\
\textbf{X-FARS joint, all $27$ models} & $\mathbf{97.6\%}$ & --- \\
Random $10$-d projection ($27$ models, $10$ seeds) & $61.7 \pm 1.0\%$ & --- \\
$z$-score (X-FARS vs.\ random projection, $27$ models) & --- & $\mathbf{36.8\sigma}$ \\
\midrule
\multicolumn{3}{@{}p{0.97\textwidth}@{}}{\footnotesize Per-model residual to canonical (full $15$-model X-FARS): generative $.148$--$.310$; RoBERTa $.400$; DeBERTa-v3 $.295$.} \\
\bottomrule
\end{tabular}
\end{table*}

A layer-selection sweep at $9$ relative-depth percentiles gives fraction-explained in $[0.91, 0.97]$ throughout, so the alignment is not driven by best-layer choice.

\paragraph{Per-concept stability mirrors the prose-code axis.} Cross-architecture $\sigma$ separates declarative concepts (most stable: \texttt{causal\_confound}, \texttt{rel\_transitivity}, \texttt{rel\_set\_difference}; $\sigma{=}0.021$--$0.025$) from procedural ones (most variable: \texttt{arith\_gcd}, \texttt{rel\_function\_\allowbreak composition}, \texttt{arith\_multi\_step}; $\sigma{=}0.045$--$0.079$).

\paragraph{Interpretable canonical axes.} The $10$ canonical axes admit clean dichotomies at their $+/-$ extremes (Table~\ref{tab:canonical_axes_interp}). Axis $0$ ($24\%$ variance) is set-theoretic vs.\ causal-spatial. \textbf{Axis $1$ ($16\%$) is the declarative-procedural axis at the geometric level}, the same dichotomy that drives the prose-code asymmetry at the behavioural level (\S\ref{app:pydecl}). Axis $4$ ($8\%$) is a dedicated logical-inference axis. Top-$5$ axes carry $76\%$ of canonical variance; $+/-$ extreme concepts and loading signs are preserved when Llama-$70$B is added to the alignment.

\begin{figure*}[t]
\centering
\includegraphics[width=\linewidth]{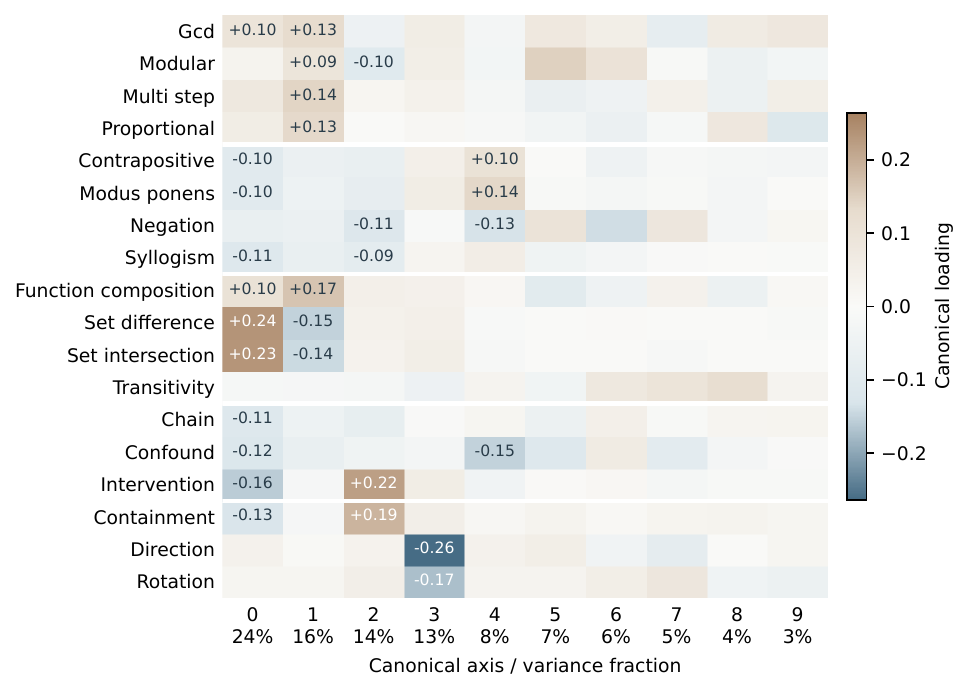}
\caption{Canonical FARS axes loadings: rows are $18$ concepts grouped by domain; columns are the $10$ canonical axes ordered by variance. Axis numbers and variance fractions are shown below the columns; interpretation labels appear in Table~\ref{tab:canonical_axes_interp}. Axis $1$ is the procedural-vs-set-relational axis that mirrors the prose-code behavioural asymmetry.}\label{fig:canonical_heatmap}
\end{figure*}

\begin{figure*}[t]
\centering
\includegraphics[width=\linewidth]{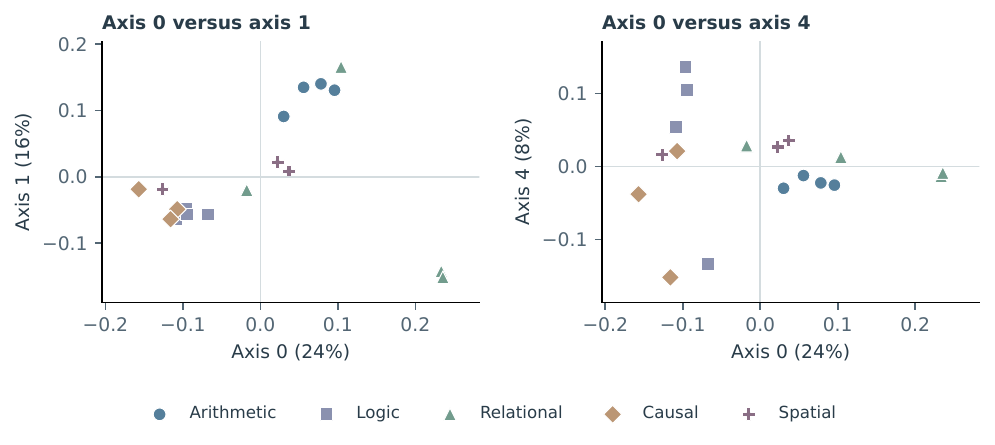}
\caption{Biplot of the $18$ concepts in the (axis $0$, axis $1$) (a) and (axis $0$, axis $4$) (b) planes of the canonical FARS frame, colored by domain. Colors and marker shapes identify domains; exact concept loadings are in the accompanying heatmap. Axis signs and interpretations refer to this historical fitted frame.}\label{fig:concept_biplot}
\end{figure*}

\begin{table}[tbp]
\centering
\caption{Interpretation of the top-$5$ canonical FARS axes on the $5$-model GPA alignment (including Llama-3.1-70B-Instruct). The variance fractions are reported on the $5$-model canonical frame; the $+/-$ extreme concepts and the sign of every loading are preserved between the $4$-model ($1.6$B--$8$B) and $5$-model ($1.6$B--$70$B) alignments. Each axis admits a recognisable concept dichotomy; cross-model stability is $\sigma{<}0.02$ for axes $0$--$4$. Axis $1$ corresponds to the same procedural-vs-declarative distinction that drives our prose-code asymmetry at the behavioural level (\S\ref{app:pydecl}).}\label{tab:canonical_axes_interp}
\footnotesize
\setlength{\tabcolsep}{4pt}
\begin{tabular}{@{}clp{2.7cm}p{2.7cm}@{}}
\toprule
\textbf{Ax.} & \textbf{Var} & \textbf{$+$ extreme} & \textbf{$-$ extreme} \\
\midrule
0 & $.242$ & set\_diff, set\_$\cap$, func\_comp & causal\_intervention, spatial\_containment \\
1 & $.157$ & func\_comp, multi\_step, proportional & set\_diff, set\_$\cap$, causal\_confound \\
2 & $.153$ & causal\_intervention, spatial\_containment & logic\_negation, arith\_modular \\
3 & $.131$ & causal\_intervention, arith\_modular & spatial\_direction, spatial\_rotation \\
4 & $.078$ & modus\_ponens, contrapositive, syllogism & causal\_confound, logic\_negation \\
\bottomrule
\end{tabular}
\end{table}

\paragraph{X-FARS: jointly-aligned in one step.} We replace the two-step FARS+GPA pipeline with a single joint objective:
\begin{equation}
\min_{B_1,\ldots,B_M,\, \bar{C}} \;\; \sum_m \bigl\|\,\tilde{C}_m B_m^\top - \bar{C}\,\bigr\|_F^2 \quad \text{s.t.} \quad B_m B_m^\top = I_k, \;\; \|\bar{C}\|_F = 1,
\label{eq:xfars}
\end{equation}
where $\tilde{C}_m$ is model $m$'s centered centroid matrix at its best FARS layer. The objective is taken subject to $\|\bar{C}\|_F = 1$, which rules out the degenerate optimum in which every $B_m$ projects onto the null space of $\tilde{C}_m$ and $\bar{C} \to 0$. This objective specifies a joint alignment criterion. For rectangular projections, its quadratic term depends on $B_m$, so a cross-covariance SVD is not in general an exact minimizer without additional restrictions. The following are empirical alignment summaries, not a claim of a globally optimal solution to Eq.~\ref{eq:xfars}. X-FARS attains $\mathbf{97.0\%}$ variance ($+5.1$\,pp over FARS+GPA, $+34.2$\,pp over random projection, $\mathbf{21.6\sigma}$; Table~\ref{tab:xfars}) at $\Delta\text{RSA}\in[-0.008,+0.006]$ across all $15$ models. Plug-and-play leave-one-out (Table~\ref{tab:xfars_holdout}): every held-out model recovers concept-RSA within $\Delta\in[-0.004,+0.014]$ (mean $+0.002$), the plug-in basis is on average \emph{slightly better} than independent extraction. Alignment is robust to concept subsampling ($9$ concepts $\to 98.0\%$, $15$ concepts $\to 93.6\%$) and to instruction tuning (Mistral-base $\to$ Mistral-Inst Procrustes $92.4\%$, cross-basis RSA loss $1$--$3\%$).

\paragraph{Explicit-objective optimization audit.} A separate audit fixes the unique cached FARS layer in $26$ models; Llama-70B is omitted because two saved candidate layers make this selection ambiguous. We compare raw-centroid least squares, least squares after per-model Frobenius normalization of centroids, and alignment after additionally normalizing each projected centroid matrix. Every objective uses a centered, unit-Frobenius shared frame. Three starts (historical basis, cached FARS, random orthonormal basis) are optimized using exact gradients, Stiefel tangent projection, polar retraction and Armijo descent. Finite-difference gradient checks pass. This audit uses fixed cached layers, not a new layer-selection search.

\begin{table}[H]
\centering\small
\caption{\textbf{Optimization diagnostics after 700 iterations.} All nine fits reach the iteration budget without meeting the $10^{-6}$ maximum tangent-gradient criterion. Final objective values are comparable only within the same objective definition. Orthogonality residuals are below $9\times10^{-15}$. Small objective values alone do not validate concept preservation.}\label{tab:alignment_audit}
\begin{tabular}{@{}llrr@{}}\toprule
Objective & Initialization & Final loss & Max.\ gradient \\
\midrule
Raw centroids & Historical & 5.1923 & $1.3e+02$ \\
Raw centroids & Cached FARS & 798.21 & $8.1e+02$ \\
Raw centroids & Random & 69.723 & $2.8e+02$ \\
Unit-norm centroids & Historical & 1.4303 & $0.00052$ \\
Unit-norm centroids & Cached FARS & 1.4503 & $0.0018$ \\
Unit-norm centroids & Random & 1.4274 & $0.00061$ \\
Normalized projections & Historical & 0.065186 & $0.13$ \\
Normalized projections & Cached FARS & 0.089666 & $0.091$ \\
Normalized projections & Random & 0.00080568 & $0.017$ \\
\bottomrule\end{tabular}
\end{table}

The objective decreases monotonically in these runs, but none satisfies the prescribed stationarity tolerance. Consequently these fits are diagnostic, not corrected optima or new evidence for a universal concept frame. The historical alignment summaries below remain empirical results under their original fitting procedure and model pools.

\begin{figure}[H]
\centering
\includegraphics[width=\linewidth]{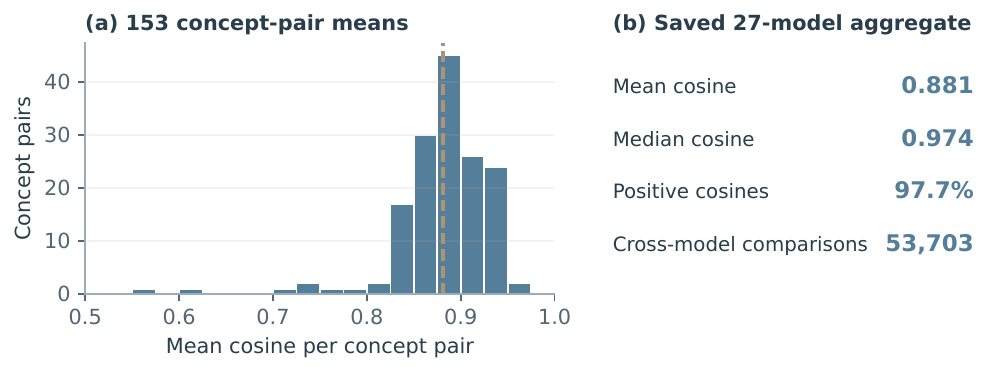}
\caption{\textbf{Concept-difference alignment in the saved 27-model analysis.} Left: distribution of $153$ concept-pair means, each averaging $351$ cross-model comparisons. This is not a histogram of the $53{,}703$ individual cosines. Right: aggregate statistics recorded in the same result file. Raw pairwise cosines and shuffled samples were not reconstructed for this plot.}\label{fig:concept_arithmetic_histogram}
\end{figure}

\paragraph{Comprehensive algebra: $\mathbf{53{,}703}$ cosines.} All $\binom{18}{2}{=}153$ concept pairs $\times$ $\binom{27}{2}{=}351$ cross-model pairs, in the frame fitted on all $27$ models: mean cosine $\mathbf{0.881}$, median $0.974$, $97.7\%$ positive, $91.2\%$ above $0.5$. Shuffled null $-0.001\pm 0.004$ over $20$ shuffles; separation $\mathbf{200.2\sigma}$. Widening the pool from the original fifteen models to all $27$ lowers the mean cosine (from $0.961$) while raising the separation, as expected when weaker and more distant architectures enter. Concept-pair averages and saved aggregate statistics are shown in Fig.~\ref{fig:concept_arithmetic_histogram}.

\paragraph{Full distance landscape preserved.} The $18\times 18$ centroid-distance matrix in each model's X-FARS projection, correlated upper-triangles cross-model: mean Pearson $\boldsymbol{r{=}0.613\pm 0.248}$ across $351$ pairs (median $0.686$; max $0.983$); $74\%$ at $r{>}0.5$, $48\%$ at $r{>}0.7$; null $-0.0001\pm 0.0065$, separation $\mathbf{94.9\sigma}$. The full concept-neighbourhood structure is preserved, not just selected directions.

\paragraph{Word2vec analogy resolution.} $10$ curated analogies $\mathbf{c}_A {-} \mathbf{c}_B {+} \mathbf{c}_C \approx \mathbf{c}_D$ evaluated on all $27$ models ($270$ attempts): mean top-$1$ $\mathbf{37.4\%}$ (chance $6.7\%$, shuffled null $6.6\%{\pm}1.8\%$, $\mathbf{17.0\sigma}$); top-$3$ $54.4\%$ ($13.7\sigma$). Within-domain analogies carry the effect, e.g.\\ $\mathbf{c}_\text{modular}{-}\mathbf{c}_\text{gcd}{+}\mathbf{c}_{\cap}\approx\mathbf{c}_{\setminus}$; arbitrary cross-domain analogies do not resolve.

\paragraph{Cross-architecture concept retrieval.} X-FARS bases project every model into the same $10$-d frame, enabling training-free cross-architecture search. Pooled across $30$ ordered cross-architecture pairs and $6$ forms, top-$1$ is $\mathbf{69.8\%}$ (chance $5.6\%$; Table~\ref{tab:xfars_xarch}). Same-family pairs (Llama-$70$B$\leftrightarrow$Llama-$8$B) hit $84$--$87\%$; cross-family similar-scale dense models ($7$--$8$B Llama/Mistral/Qwen) $80$--$87\%$; GPT-2 XL and Mixtral-as-query are weakest. Cross-architecture retrieval is essentially at parity with within-model cross-form retrieval ($70.5\%$).

\begin{figure*}[t]
\centering
\includegraphics[width=0.70\linewidth]{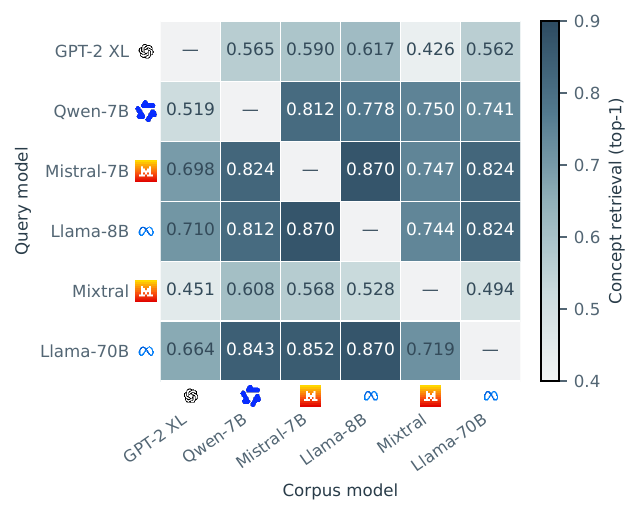}
\caption{\textbf{Historical cross-model concept retrieval in a six-model subset.} Values reproduce Table~\ref{tab:xfars_xarch} at its reported precision (off-diagonal mean $69.6\%$); rows are query models and columns are corpus models. Self-comparisons are omitted. This panel replaces a legacy figure whose fitting statistics referred to a different model pool. It reports empirical retrieval under historical alignment, not an optimizer guarantee or held-out-concept generalization.}\label{fig:xfars}
\end{figure*}

\begin{table*}[t]
\centering
\caption{Cross-architecture top-$1$ concept retrieval via X-FARS shared canonical frame (chance $5.6\%$). Row: query model; column: corpus model. Cell: mean top-$1$ accuracy across $6$ surface forms, $54$ same-form stimuli per side. No model weights are updated; alignment uses shared concept identities. This is not an unpaired evaluation.}\label{tab:xfars_xarch}
\footnotesize
\setlength{\tabcolsep}{3pt}
\begin{tabular}{@{}l|cccccc@{}}
\toprule
\textbf{Query $\backslash$ Corpus} & GPT-2 XL & Qwen-7B & Mistral-7B & Llama-8B & Mixtral-MoE & Llama-70B \\
\midrule
GPT-2 XL                   & ,      & $.565$ & $.590$ & $.617$ & $.426$ & $.562$ \\
Qwen2.5-7B                 & $.519$ & ,      & $.812$ & $.778$ & $.750$ & $.741$ \\
Mistral-7B-v0.3            & $.698$ & $.824$ & ,      & $\mathbf{.870}$ & $.747$ & $.824$ \\
Llama-3.1-8B-Inst          & $.710$ & $.812$ & $\mathbf{.870}$ & ,      & $.744$ & $.824$ \\
Mixtral-8x7B-Inst (MoE)    & $.451$ & $.608$ & $.568$ & $.528$ & ,      & $.494$ \\
Llama-3.1-70B-Inst         & $.664$ & $.843$ & $.852$ & $.870$ & $.719$ & ,      \\
\midrule
\multicolumn{7}{l}{Mean of the $30$ printed cross-model cells: $\mathbf{0.696}$. Same-family Llama-8B$\leftrightarrow$70B: $0.85$.}\\
\bottomrule
\end{tabular}
\end{table*}

\begin{table*}[t]
\centering
\caption{Leave-one-out plug-and-play protocol validated on the full $12$-model set. For each held-out model, refit X-FARS on the remaining $11$ to obtain a canonical frame $\bar{C}^{(-m)}$, then construct the held-out basis using a polar-factor SVD update against $\bar{C}^{(-m)}$. The plug-in basis recovers $\Delta\text{RSA}\in[-0.004, +0.014]$ relative to the model's standard FARS basis (mean $+0.002$). Architectures most divergent from the canonical (RoBERTa, Qwen-3B) benefit most.}\label{tab:xfars_holdout}
\footnotesize
\setlength{\tabcolsep}{4pt}
\begin{tabular}{@{}lccc@{}}
\toprule
\textbf{Held-out model} & \textbf{Resid.\ to canonical} & \textbf{Std.\ FARS RSA} & \textbf{Plug-in X-FARS RSA} \\
\midrule
\raisebox{-0.18em}{\includegraphics[height=0.95em]{figures/logos/openai.pdf}}~GPT-2 XL                & $.213$ & $.279$ & $.278$ ($\Delta{-}.001$) \\
\raisebox{-0.18em}{\includegraphics[height=0.95em]{figures/logos/qwen.pdf}}~Qwen-2.5-3B-Inst        & $.143$ & $.332$ & $.341$ ($\Delta{+}.009$) \\
\raisebox{-0.18em}{\includegraphics[height=0.95em]{figures/logos/qwen.pdf}}~Qwen-2.5-7B             & $.117$ & $.357$ & $.360$ ($\Delta{+}.003$) \\
\raisebox{-0.18em}{\includegraphics[height=0.95em]{figures/logos/microsoft.pdf}}~Phi-3.5-mini-Inst       & $.141$ & $.352$ & $.351$ ($\Delta{-}.001$) \\
\raisebox{-0.18em}{\includegraphics[height=0.95em]{figures/logos/mistral.pdf}}~Mistral-7B-v0.3         & $.118$ & $.362$ & $.363$ ($\Delta{+}.001$) \\
\raisebox{-0.18em}{\includegraphics[height=0.95em]{figures/logos/mistral.pdf}}~Mistral-7B-Inst         & $.110$ & $.364$ & $.365$ ($\Delta{+}.001$) \\
\raisebox{-0.18em}{\includegraphics[height=0.95em]{figures/logos/meta.pdf}}~Llama-3.1-8B-Inst       & $.093$ & $.364$ & $.366$ ($\Delta{+}.002$) \\
\raisebox{-0.18em}{\includegraphics[height=0.95em]{figures/logos/meta.pdf}}~Llama-3.1-70B-Inst      & $.143$ & $.371$ & $.371$ ($\Delta{+}.000$) \\
\raisebox{-0.18em}{\includegraphics[height=0.95em]{figures/logos/mistral.pdf}}~Mixtral-8x7B-Inst (MoE) & $.110$ & $.355$ & $.357$ ($\Delta{+}.001$) \\
~Mamba-2.8B (SSM)        & $.140$ & $.364$ & $.362$ ($\Delta{-}.001$) \\
\raisebox{-0.18em}{\includegraphics[height=0.95em]{figures/logos/meta.pdf}}~RoBERTa-large (encoder) & $.402$ & $.176$ & $.190$ ($\boldsymbol{\Delta}{+}\mathbf{.014}$) \\
\raisebox{-0.18em}{\includegraphics[height=0.95em]{figures/logos/microsoft.pdf}}~DeBERTa-v3-large (encoder) & $.366$ & $.336$ & $.332$ ($\Delta{-}.004$) \\
\bottomrule
\end{tabular}
\end{table*}

\begin{table*}[t]
\centering
\caption{Standard FARS (two-step: per-model PCA then post-hoc GPA) vs.\ X-FARS (historical alignment heuristic) as reported in the historical summary. Cohorts differ by metric; the variance row specifies its own nine-model pool. These entries are not a fresh optimizer validation.}\label{tab:xfars}
\footnotesize
\setlength{\tabcolsep}{4pt}
\begin{tabular}{@{}lcc@{}}
\toprule
\textbf{Metric} & \textbf{FARS + GPA} & \textbf{X-FARS} \\
\midrule
Canonical variance explained ($9$ models, $6$ arch.\ families) & $92.43\%$ & $\mathbf{97.22\%}$ \\
Cross-family pooled $\rho$ (pred.\ patching, $n{=}120$) & $-0.806$ & $-0.806$ \\
$\Delta$ per-model concept-RSA vs.\ FARS         & ,        & $[-0.002,\,-0.0006]$ \\
Steps required                                     & PCA + iterated Procrustes & iterated alignment heuristic \\
\bottomrule
\end{tabular}
\end{table*}

\paragraph{Implication.} The reported alignment summaries suggest shared structure on this fixed concept inventory. They do not establish convergence to a unique geometry, held-out-concept generalization, or exact optimization. Canonical-axis interpretations and behavioural associations should be read within their stated model pools.

\section{Composite Concepts and Constituent-Span Reconstruction}\label{app:compositionality}
\paragraph{Question and scope.} How closely do composite projections lie in constituent spans? Span reconstruction is not evidence that the model executes the corresponding symbolic operation.

\paragraph{Method.} For each compositional stimulus expressing two atomic TriForm concepts $A,B$, we project the last-token hidden state at the best FARS layer onto FARS and decompose $\text{observed} \approx \alpha \mathbf{c}_A + \beta \mathbf{c}_B$ via least-squares. The span ratio $\lVert\text{reconstruction}\rVert/\lVert\text{observed}\rVert$ measures the fraction of activation \emph{magnitude} in $\text{span}(\mathbf{c}_A,\mathbf{c}_B)$ (a ratio of norms, not of variance; the corresponding energy fraction is its square). Controls: random-$2$-concept and random-$2$-direction in $10$-d FARS.

\paragraph{Result: pair LRH on $4$ architecture families.} Span ratios significantly above both controls on every tested architecture (Table~\ref{tab:compositionality}, $p{<}10^{-9}$ on every row). The finding replicates on Mamba-$2.8$B (state-space, span $0.77$) and RoBERTa-large (encoder MLM, span $0.83$); both LSQ coefficients positive in $66$--$85\%$ of stimuli.

\begin{table*}[t]
\centering
\caption{Compositional FARS span ratios. Top ($6$ pairs, $n{=}36$): original concept pairs. Bottom ($18$ pairs, $n{=}108$): extended set. Compositionality holds across $4$ architecture families spanning $1.6$B--$70$B; $70$B has the highest span ratio and both-positive rate.}\label{tab:compositionality}
\scriptsize
\setlength{\tabcolsep}{4pt}
\begin{tabular}{@{}lcccc@{}}
\toprule
\textbf{Model} & \textbf{Span(A,B)} & \textbf{Span(rand C)} & \textbf{Span(rand dirs)} & \textbf{Both $\alpha, \beta {>}0$} \\
\midrule
\multicolumn{5}{l}{\textit{Original $6$-pair set ($n{=}36$ stimuli):}} \\
GPT-2 XL              & $\mathbf{.736}$ & $.537$ & $.453$ & $24/36$ ($67\%$) \\
Llama-3.1-8B-Inst     & $\mathbf{.715}$ & $.393$ & $.410$ & $27/36$ ($75\%$) \\
Mistral-7B-v0.3       & $\mathbf{.758}$ & $.473$ & $.406$ & $31/36$ ($86\%$) \\
\midrule
\multicolumn{5}{l}{\textit{Extended $18$-pair set ($n{=}108$ stimuli):}} \\
Llama-3.1-8B-Inst     & $\mathbf{.696}$ & $.422$ & $.424$ & $73/108$ ($68\%$) \\
Mistral-7B-v0.3       & $\mathbf{.708}$ & $.481$ & $.425$ & $73/108$ ($68\%$) \\
\textbf{Llama-3.1-70B-Inst} & $\mathbf{.750}$ & $.481$ & $.390$ & $\mathbf{92/108}$ ($\mathbf{85\%}$) \\
\midrule
\multicolumn{5}{l}{\textit{Architecture-family extension:}} \\
\textbf{Mamba-2.8B} (state-space) & $\mathbf{.766}$ & $.571$ & $.430$ & $71/108$ ($66\%$) \\
\textbf{RoBERTa-large} (encoder-only MLM) & $\mathbf{.830}$ & $.654$ & $.394$ & $78/108$ ($72\%$) \\
\midrule
\multicolumn{5}{@{}p{0.97\textwidth}@{}}{\footnotesize Paired Wilcoxon Span(A,B) $>$ controls: $p{<}10^{-9}$ all rows; $70$B: $p{=}2.8\!\times\!10^{-11}$ vs.\ rand-C, $6.7\!\times\!10^{-17}$ vs.\ rand-dir.} \\
\bottomrule
\end{tabular}
\end{table*}

\paragraph{Triplet LRH ($3$-way).} $48$ stimuli, $8$ triplet templates $\times$ $3$ instances $\times$ $2$ forms; span(A,B,C) significantly above rand-$3$-direction on all $6$ generative architectures including state-space Mamba (Table~\ref{tab:triplet_compositionality}).

\begin{table*}[t]
\centering
\caption{Triplet compositionality ($A \wedge B \wedge C$): $48$ stimuli across $8$ triplet templates, tested on $\mathbf{6}$ models spanning dense decoder-only AND state-space (Mamba) architectures. Span(A,B,C) is the least-squares span ratio against $3$ atomic centroids; ``rand-3-C'' and ``rand-3-D'' are random-3-concept and random-3-direction controls. All six tested models show span(A,B,C) significantly higher than random-direction control ($p{<}10^{-4}$ on every model); the smaller models (Phi-3.5, Qwen-3B) show the largest span ratios ($0.83$--$0.92$) and highest all-positive-3-coefs rates ($52$--$58\%$), well above the chance rate of $1/2^3 = 12.5\%$.}\label{tab:triplet_compositionality}
\scriptsize
\setlength{\tabcolsep}{3pt}
\begin{tabular}{@{}lccccc@{}}
\toprule
\textbf{Model} & \textbf{Span(A,B,C)} & \textbf{rand-3-C} & \textbf{rand-3-D} & \textbf{$p$ vs rand-D} & \textbf{All $\alpha,\beta,\gamma{>}0$} \\
\midrule
\textbf{Phi-3.5-mini-Inst (3.8B dense, $L{=}18$)} & $\mathbf{.92}$ & $.86$ & $.53$ & $\mathbf{<10^{-12}}$ & $\mathbf{25/48}$ ($\mathbf{52\%}$) \\
\textbf{Qwen-2.5-3B-Inst (dense, $L{=}22$)} & $\mathbf{.83}$ & $.76$ & $.46$ & $<10^{-12}$ & $\mathbf{26/48}$ ($\mathbf{54\%}$) \\
\textbf{Mamba-2.8B (state-space, $L{=}35$)} & $\mathbf{.83}$ & $.60$ & $.52$ & $\mathbf{2.3{\times}10^{-12}}$ & $27/48$ ($56\%$) \\
Qwen-7B (dense decoder, $L{=}13$) & $\mathbf{.81}$ & $.62$ & $.49$ & $1.1{\times}10^{-11}$ & $28/48$ ($58\%$) \\
Mistral-7B-Inst (dense, $L{=}10$) & $\mathbf{.76}$ & $.60$ & $.53$ & $1.5{\times}10^{-6}$ & $19/48$ ($40\%$) \\
Llama-3.1-8B-Inst (dense, $L{=}8$) & $\mathbf{.68}$ & $.60$ & $.51$ & $9.3{\times}10^{-5}$ & $19/48$ ($40\%$) \\
\bottomrule
\end{tabular}
\end{table*}

\paragraph{Architectural scope: encoder MLMs fail triplet.} DeBERTa-v3-large ($L{=}15$, same $48$ stimuli): span$(A,B,C){=}.66$, below rand-$3$-concept $.76$; all-three-positive in $\mathbf{0/48}$. Pair LRH still holds on both encoders, so the encoder boundary sits between $2$D and $3$D composition.

\paragraph{Quartet ($4$-way) saturates.} Only Mamba-$2.8$B clearly beats rand-$4$-concept (span $0.87$ vs.\ $0.79$, $p{=}0.023$); dense transformers tie or lose vs.\ rand-C while beating rand-D (Table~\ref{tab:quartet_compositionality}). LRH composition saturates as arity approaches FARS dimensionality.

\begin{table*}[t]
\centering
\caption{Quartet ($4$-way) compositionality on $4$ models, $24$ stimuli. Span(A,B,C,D) vs.\ rand-$4$-concept (rand-C) and rand-$4$-direction (rand-D). Only Mamba's $\Delta$ over rand-C reaches significance.}\label{tab:quartet_compositionality}
\scriptsize
\setlength{\tabcolsep}{4pt}
\begin{tabular}{@{}lccccc@{}}
\toprule
\textbf{Model} & \textbf{Span(A,B,C,D)} & \textbf{rand-C} & \textbf{rand-D} & $\Delta$\,rand-C & \textbf{All 4 ${>}0$} \\
\midrule
\textbf{Mamba-2.8B} (state-space) & $\mathbf{.87}$ & $.79$ & $.60$ & $+.08$ ($p{=}0.023$) & $\mathbf{10/24}$ ($42\%$) \\
Qwen-7B (dense)                  & $.84$ & $.84$ & $.63$ & $.00$ (n.s.) & $6/24$ ($25\%$) \\
Mistral-7B-Inst (dense)          & $.79$ & $.74$ & $.58$ & $+.05$ (n.s.) & $4/24$ ($17\%$) \\
Llama-3.1-8B-Inst (dense)        & $.68$ & $.70$ & $.60$ & $-.02$ (n.s.) & $4/24$ ($17\%$) \\
\bottomrule
\end{tabular}
\end{table*}

\begin{figure*}[t]
\centering
\includegraphics[width=\linewidth]{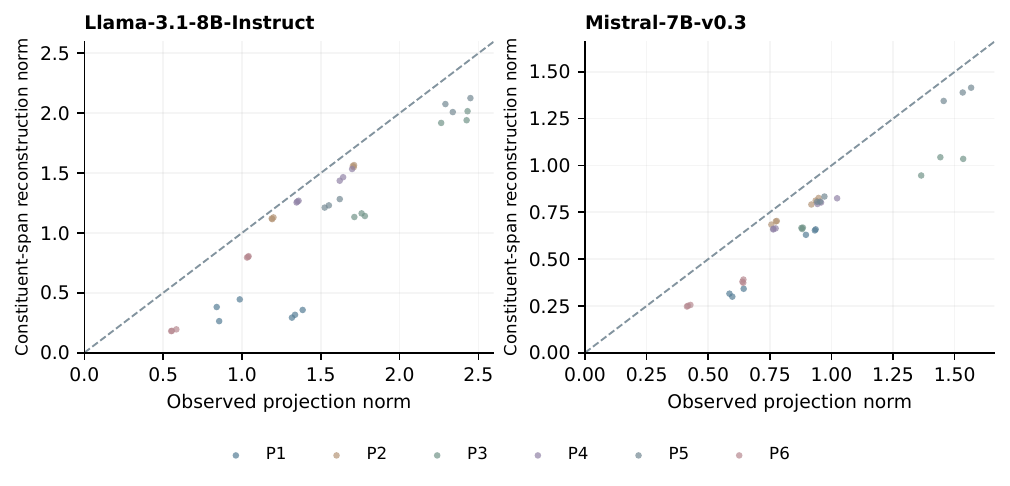}
\caption{\textbf{Constituent-span reconstruction in two cached model runs.} Each panel contains $36$ composite projections over six concept pairs. Axes show projection and reconstruction norms; the dashed line is equality. Mean norm ratios are $0.715$ for Llama-3.1-8B-Instruct (layer $8$) and $0.758$ for Mistral-7B-v0.3 (layer $11$). Both panels are rebuilt from saved activations and bases; they are a two-model subset of Table~\ref{tab:compositionality}. P1: gcd + transitivity; P2: modular arithmetic + modus ponens; P3: causal chain + multi-step arithmetic; P4: syllogism + set intersection; P5: function composition + modus ponens; P6: spatial direction + proportionality.}\label{fig:compositionality}
\end{figure*}

\paragraph{Scope.} Strict equal-weight sum $\text{observed}\approx \mathbf{c}_A + \mathbf{c}_B$ fails ($p{\approx}1$, $n{=}36$); the span-level claim is supported, coefficient mixtures depend on the instance rather than being canonical $(1,1)$. FARS concepts compose linearly but with data-dependent coefficients.

\section{Cross-Model FARS Geometry Predicts Other Models' Behavior}\label{app:cross_model_predict}
\paragraph{Question and scope.} Do geometric relations predict reported patching overlap? Random low-rank projections also recover much of this relation, so predictive correlation is not unique to FARS.

\begin{figure*}[t]
\centering
\includegraphics[width=\linewidth]{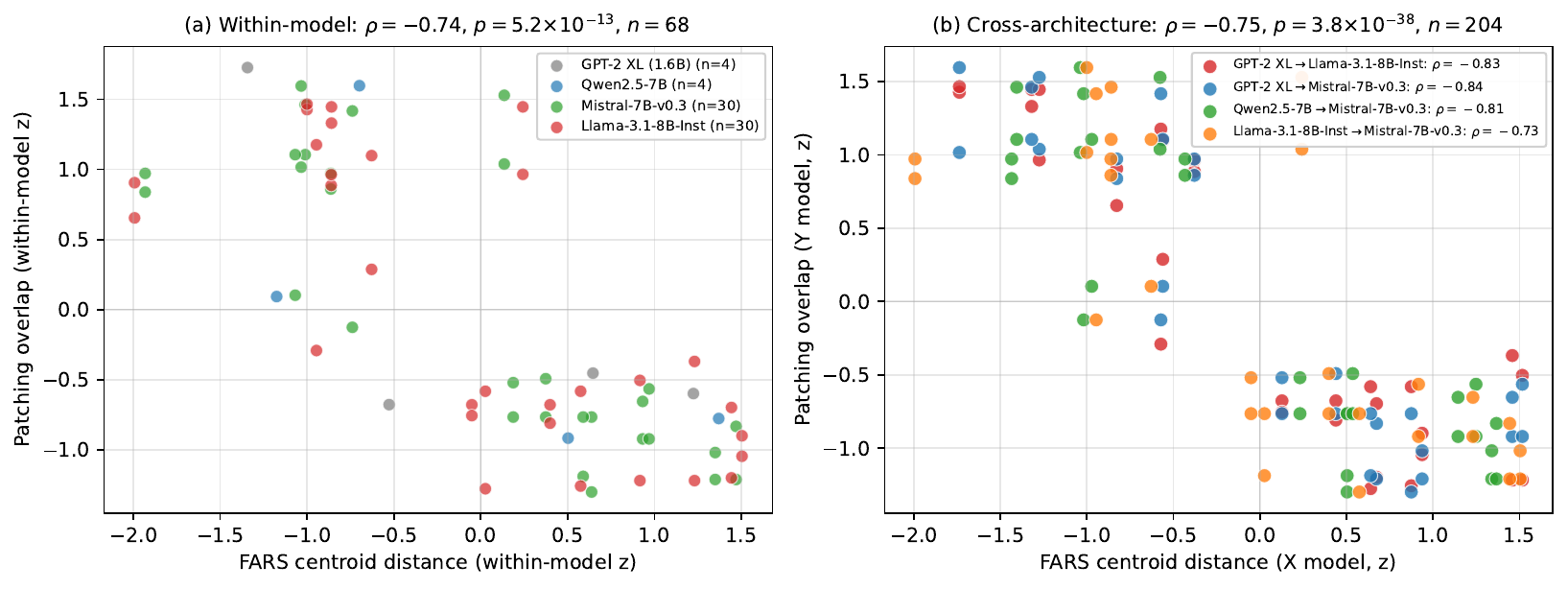}
\caption{\textbf{Historical predictiveness from centroid distances.} Within-model (left) and cross-model (right) associations with patching overlap. The reported cross-family aggregate is $\rho=-0.75$ over $204$ repeated entries. Random low-rank projections recover much of the association ($\rho=-0.71$); these correlations do not establish causal necessity.}\label{fig:hero_predict}
\end{figure*}

\paragraph{Method and reported result.} For each model pair $(X,Y)$, correlate $X$'s centroid distances across ordered form pairs with $Y$'s patching overlap. Llama-3.1-8B-Instruct and Mistral-7B have $30$ tested form pairs; GPT-2 XL and Qwen-7B have $4$. Table~\ref{tab:cross_model_predict} reports the resulting matrix. Across the $12$ off-diagonal cells, within-cell standardisation gives pooled Spearman $\rho=-0.750$ over $204$ entries. The reported association is exploratory: cells share models and form pairs, and the random-projection control below recovers much of it.

\begin{table}[H]
\centering
\caption{Spearman $\rho$ between $X$'s FARS-distance per form-pair (rows) and $Y$'s patching overlap (cols). Off-diagonal cells with $n{=}30$ are the strongest evidence: $X$ and $Y$ are different models. Pooled cross-family $\rho{=}{-}0.750$ ($p{=}3.8{\times}10^{-38}$, $n{=}204$).}\label{tab:cross_model_predict}
\footnotesize
\setlength{\tabcolsep}{2pt}
\begin{tabular}{@{}lcccc@{}}
\toprule
$X\downarrow$ / $Y\rightarrow$ & GPT-2 & Llama & Mistral & Qwen \\
\midrule
GPT-2 XL              & $-.40$ & $\mathbf{-.83}$ & $\mathbf{-.84}$ & $-.80$ \\
Llama-Inst            & $+.20$ & $-.70$ & $\mathbf{-.73}$ & $-.60$ \\
Mistral-7B            & $+.20$ & $\mathbf{-.76}$ & $-.80$ & $-.60$ \\
Qwen-7B               & $+.20$ & $\mathbf{-.78}$ & $\mathbf{-.81}$ & $-.60$ \\
\bottomrule
\multicolumn{5}{l}{\footnotesize Bold: cross-family $n{=}30$. Plain: $n{=}4$.} \\
\end{tabular}
\end{table}

\paragraph{Scope: predictiveness is not unique to FARS.} A random orthonormal $10$-d projection recovers cross-family $\rho{=}-0.714$, compared with FARS's $-0.750$; the shuffled-$Y$ null is near zero (Table~\ref{tab:cross_model_controls}). These results support shared geometric structure, but do not isolate concept-specific computation. Historical intervention results (App.~\ref{app:fars_ablation},~\ref{app:probe_ablation},~\ref{app:gsm8k}) use different endpoints and controls; they should not be read as proving unique causal necessity. Repeated form pairs across cells also limit an independent-sample interpretation of pooled significance.

\begin{table}[H]
\centering
\caption{Historical cross-family predictiveness controls (within-cell z-pooled, $n{=}204$). Random projections retain much of the correlation. Reported $p$-values are historical; pooled entries share models and form pairs.}\label{tab:cross_model_controls}
\footnotesize
\setlength{\tabcolsep}{4pt}
\begin{tabular}{@{}lcc@{}}
\toprule
\textbf{Projection} & \textbf{Cross-family $\rho$} & \textbf{$p$} \\
\midrule
FARS centroid (10-dim, supervised)            & $-0.750$ & $3.8{\times}10^{-38}$ \\
Random orthonormal (10-dim, no labels)         & $-0.714$ & $3.9{\times}10^{-33}$ \\
\midrule
Shuffled-$Y$ (1000 permutations)               & $-0.001$ \,(mean) & 95\% CI $[-.13, +.14]$ \\
\bottomrule
\end{tabular}
\end{table}

\end{document}